%% file: a_paper.tex
\documentclass[11pt]{article}

\usepackage[letterpaper,margin=1in]{geometry}
\usepackage{times}

\input{math_commands.tex}

\usepackage{amsmath,amssymb,amsthm}
\usepackage{graphicx}
\usepackage{booktabs}
\usepackage[colorlinks=true,linkcolor=black,citecolor=black,urlcolor=blue]{hyperref}
\usepackage{url}
\usepackage{cleveref}
\usepackage{microtype}
\usepackage{xcolor}
\usepackage{enumitem}
\usepackage{float}
\usepackage{adjustbox}
\usepackage{natbib}
\usepackage{authblk}

\newtheorem{proposition}{Proposition}
\newtheorem{theorem}{Theorem}
\newtheorem{remark}{Remark}

\newcommand{\lammax}{\lambda_{\max}}

\newcommand{\betac}{\beta_c}
\newcommand{\betagmm}{\beta_{\mathrm{GMM}}}

\title{Feature Lottery? A Bifurcation Theory of Concept Emergence}

\author{
  {\large \textbf{Fuming Yang}}\thanks{Correspondence to: \texttt{fumingy@mit.edu}} \\
  {\small\color[rgb]{0.3, 0.35, 0.45} MIT}
}

\date{}
\begin{document}
\maketitle

\begin{abstract}
Neural networks acquire structured representations at specific moments during training, yet identifying these transitions typically relies on retrospective, label-dependent metrics. We introduce a bifurcation theory of representation dynamics to detect these moments in real time. By analyzing a passive GMM probe attached to the evolving encoder, we show that the onset of structure can be identified with a supercritical pitchfork bifurcation driven by the loss Hessian. The system exhibits a theoretically predictable zero-crossing ($\betac$) that, compared to the network's current state ($\beta$), yields a dynamic ratio $\beta(t)/\betac(t)$. This ratio acts as a universal, label-free phase coordinate for representation dynamics computable entirely from hidden states.

We empirically validate four distinct transition regimes predicted by this coordinate across diverse settings: SAEs on language models (Pythia), SSL (CIFAR), and grokking (modular arithmetic). Crucially, under finite dissipation, the macroscopic symmetry-breaking can lag the initial zero-crossing by orders of magnitude: providing a rigorous dynamical account of the delayed escape observed in grokking. At the microscopic level, the theory predicts that the bifurcation creates a shared unstable subspace, forcing a collective symmetry breaking. We term this the \emph{feature lottery} in SAE training: a feature's terminal interpretability becomes predictable remarkably early. By only $5\%$ of training, early atom purity robustly predicts final convergence purity, with top-decile early atoms achieving over $12\times$ the baseline purity at the end of training.

Beyond explaining concept emergence, $\beta/\betac$ provides a highly practical tool. It functions as an early-warning indicator for training health: detecting the onset of usable structure, the crystallization of feature identity, and the onset of representational collapse epochs before downstream metrics react.

\smallskip
\noindent Interactive demo: \url{https://fumingyang-felix.github.io/feature-lottery-demo/}
\end{abstract}

\section{Introduction}\label{sec:intro}
Modern networks do not merely fit labels or reconstruct inputs;
during training, their internal states reorganize into discrete,
reusable directions that behave like concepts. Yet we typically
notice this reorganization only after the fact: by probing with
labels, by inspecting downstream accuracy, or by mechanistic
analysis of a fully trained model. What is missing is a
\emph{label-free dynamical signal} of when such concept structure
first becomes available, a quantity that (watched live during
training) tells us whether and when a network has just acquired a
usable internal representation, and ideally \emph{which} parts of
the representation are about to become meaningful.

\paragraph{Our angle.}
We provide such a signal, derived from a single Hessian analysis.
Attach a passive $K$-prototype isotropic Gaussian-mixture (GMM) head
with shared learned precision $\beta$ to the output
$z\!=\!\mathrm{enc}(x)$ of a given encoder representation, and analyze the Hessian of
its negative log-likelihood at the \emph{symmetric collapsed state}
$\mu_1\!=\!\cdots\!=\!\mu_K\!=\!\bar z$. The analysis
(Section~\ref{sec:theory}) yields a critical precision
\begin{equation}
\boxed{\;\betac \;=\; \frac{1}{\lammax(\Cov(z))}\;}
\label{eq:betac}
\end{equation}
at which the lowest eigenvalue of the loss Hessian crosses zero;
above $\betac$ the prototypes pitchfork along the principal
eigenvector of $\Cov(z)$. We adopt this Hessian-pitchfork event as our \emph{operational}
definition of concept emergence: the moment at which the encoder's
representation first admits a class-aligned $K$-prototype
decomposition. The label-free indicator is
$\beta(t)/\betac(t)$, computable from the encoder's hidden state and
the GMM probe alone. When the encoder is itself learning, $\betac(t)$
becomes endogenous and we prove $\beta(t)$ and $\betac(t)$ must
cross at some finite training time
(Proposition~\ref{prop:endogenous}). A subtlety, made precise in
Remark~\ref{rem:metastability}, is that the crossing event marks when
the symmetric state becomes \emph{unstable}, not when the
broken-symmetry state becomes \emph{macroscopically observable}. The
lag between the two is controlled by the encoder's dissipation, and
can range from essentially zero (in well-trained SSL) to thousands of
steps (in grokking).

\paragraph{The sharpest prediction is per-atom.}
At the crossing event, the unstable subspace at $\beta\!=\!\betac$ is
shared by all $K\!-\!1$ anti-symmetric modes
(App.~\ref{app:hessian:eig}). The bifurcation is therefore a
\emph{collective} event with a \emph{per-atom signature}: each atom
must select a specific direction from a common unstable manifold,
its choice driven by initialization noise and the cubic terms of the
pitchfork normal form. This per-atom prediction is empirically
sharp. In SAE training on frozen Pythia-160M layer~6, per-atom
POS purity is at noise floor before the bifurcation and acquires
predictive content at onset; ranking atoms by their step-$1{,}000$
POS purity ($5\%$ of training) already recovers a top decile whose
convergence purity is $12\times$ the uniform-random baseline
(Section~\ref{sec:sae-lottery}). We refer to this as the SAE
\emph{feature lottery}, the bifurcation theory's sharpest and most
unexpected empirical consequence; it is the SAE-level analogue of
the lottery-ticket framing of \citet{frankle2018lottery}, with the
drawing event identified explicitly as the first phase transition
during training.

\paragraph{Contributions.}
\begin{enumerate}[leftmargin=*,itemsep=2pt,topsep=2pt]
\item \textbf{Theory: an endogenous critical point with post-critical
  metastability.} A Hessian-pitchfork prediction of when a given
  encoder representation first admits a $K$-prototype decomposition,
  obtained from a passive GMM probe on the encoder output, with
  an existence proof for the endogenous critical point
  (Proposition~\ref{prop:endogenous}; the crossing $\beta\!=\!\betac$
  happens at a finite time \emph{conditional on} the encoder
  eventually spreading the latent enough for the GMM to resolve
  clusters) and a separate prediction of post-critical metastability
  under finite dissipation (Remark~\ref{rem:metastability}). Both
  are new relative to the static soft-$K$-means criticality of
  \citet{rose1990statistical}: Rose gives the critical temperature
  on a frozen dataset; we give the dynamic crossing theorem when
  $\betac(t)$ co-evolves with the encoder, and identify a
  post-critical metastable regime that the static analysis cannot
  exhibit.
\item \textbf{The theory's sharpest empirical consequence: an SAE
  feature lottery at $5\%$ of training.} At the crossing event the
  unstable subspace is shared by all $K\!-\!1$ anti-symmetric modes,
  so atoms must select directions from a common manifold during the
  bifurcation. We verify this in SAE training on frozen
  language-model activations (Sec.~\ref{sec:sae-lottery}): per-atom
  POS purity is at noise floor pre-onset
  ($\rho_{\mathrm{id}}\!\approx\!0.03$), and by step $1{,}000$ ($5\%$
  of training), identity-matched
  $\rho_{\mathrm{id}}\!=\!+0.41\!\pm\!0.04$ (3 seeds, all
  $p\!<\!10^{-80}$). The top decile of atoms ranked at $5\%$ achieves
  convergence POS purity $0.82\!\pm\!0.03$,
  $12.3\!\pm\!0.4\times$ the uniform-random baseline of $0.067$.
  The effect replicates across soft-L1 and architectural top-$K$
  SAEs, and across $K\!\in\!\{256,\dots,8192\}$ with
  $\rho_{\mathrm{id}} \in [0.26, 0.41]$. This is an atom-level,
  post-onset analogue of the lottery-ticket framing of
  \citet{frankle2018lottery}.
\item \textbf{Empirical universality of the bifurcation arc.} The
  predicted trajectory in $(\log(\beta/\betac),\,\log\mathrm{NC1})$
  is governed by three binary kinematic axes: initial
  sub/supercriticality, the post-onset race between $\beta(t)$ and
  $\betac(t)$, and the dissipation rate
  (Sec.~\ref{sec:arc:synthesis}, App.~\ref{app:five-shapes}). The
  axes predict which kinematic regimes are accessible to which
  feature-learning methods. We verify the four regimes that arise
  in standard pipelines; full V (SAE on frozen Pythia layer~6),
  fold-back (DINO/SimCLR on CIFAR-10/100, with magnitude controlled
  by data complexity), delayed escape (grokking on modular
  arithmetic, with WD-monotonic escape time
  $\tau_{\mathrm{esc}} \propto \mathrm{WD}^{-1.23}$ across six WD
  levels, and $0/3$ escape at WD$=\!0$), and no arc
  (rotation-prediction control).
\item \textbf{K-sweep: lottery is $K$-stable, POS-purity Pareto is
  $K$-monotonic but $K$-confounded.} At fixed $3\%$ top-$K$ sparsity,
  the lottery $\rho_{\mathrm{id}}$ is stable across $K \in \{256,
  \dots, 8192\}$ (range $[+0.26, +0.41]$). Per-atom POS purity
  decreases monotonically with $K$ from $0.725$ to $0.370$
  (Tab.~\ref{tab:ksweep}), while
  reconstruction MSE moves the other way ($0.11 \to 0.015$). Because
  POS has only $15$ classes, this purity-vs-$K$ trend is partly
  structural: small-$K$ atoms each correspond to coarser token-cluster
  partitions that more easily align with the $15$-way POS partition.
  We therefore do \emph{not} recommend small $K$ on the basis of POS
  purity alone; a $K$-unbiased interpretability metric (e.g.\ causal
  mediation or LLM-as-judge) is required to make a substantive
  recommendation (Sec.~\ref{sec:sae-lottery:ksweep}).
\item \textbf{A label-free training diagnostic.} $\beta/\betac$
  identifies the encoder's current act from hidden states alone, well
  before downstream metrics respond. In grokking, the indicator at
  step $100$ already places the trajectory in Act~2:
  $\sim$8{,}400 steps before $\mathrm{test\_acc}$ moves. In DINO
  from-scratch with collapse modes (Sec.~\ref{sec:diag:fromscratch}),
  $\beta/\betac$ leads cluster accuracy by 8 epochs in the gradual
  mode; in mid-training interventions
  (Sec.~\ref{sec:diag:intervention}) it responds within a single
  batch while training loss remains within noise.
\end{enumerate}

\section{Related work}\label{sec:related}

\paragraph{Concept emergence and mechanistic interpretability.}
A growing body of work in mechanistic interpretability studies how
internal computations of trained models implement particular
``concepts.''
\citet{nanda2023progress} show that the post-grok solution of
modular addition is a Fourier multiplication algorithm distributed
across the network's embedding, attention, and MLP layers; sparse
autoencoders on frozen
language-model activations~\citep{bricken2023sparse,gao2025scaling,
templeton2024scaling} extract interpretable directions in feature
space. These works identify concepts \emph{post-hoc} on a trained
model: features are evaluated for interpretability after training
converges, and the typical assumption is that training longer yields
better features. Our framework is complementary along two axes.
First, we provide a label-free \emph{dynamical} signal of when such
structure first becomes available during training
(Sec.~\ref{sec:arc}). Second, we show that in SAEs the bifurcation
onset is a per-atom assignment event whose outcome
predicts atom-level interpretability nineteen thousand training
steps in advance (Sec.~\ref{sec:sae-lottery}). This reframes SAE
feature emergence as a structured lottery rather than a gradual
refinement.

\paragraph{Neural collapse.}
The neural-collapse literature~\citep{papyan2020prevalence} and the
subsequent Unconstrained Features Model
analyses~\citep{mixon2022neural,tirer2022extended,zhou2022all,
sukenik2024neural} characterize the static terminal-phase geometry of
supervised classification. \citet{wang2023understanding} recover an
analogous structure in supervised contrastive learning via an
information-bottleneck argument. These analyses describe the
endpoint, not the dynamics by which an encoder reaches it; our
framework supplies the missing dynamics and predicts the timing of
concept emergence without labels.

\paragraph{Deterministic annealing and rate-distortion clustering.}
\citet{rose1990statistical} obtained the critical temperature
$T_c=2\lammax(\Sigma)$ for soft $K$-means by externally annealing
$T$. In our convention $\beta\!=\!2/T$, this is the external-$\beta$
limit of our analysis. The novelty here is that $\beta$ is endogenous
and $\betac(t)$ moves with the encoder.

\paragraph{Self-supervised collapse.}
\citet{jing2021understanding,hua2021feature} analyze dimensional
collapse in contrastive and non-contrastive self-supervised
methods.
\citet{caron2021emerging} introduces DINO with centering and
sharpening regularizers specifically to prevent collapse. Our
diagnostic experiments (Sec.~\ref{sec:diagnostic}) take DINO collapse
modes as a controlled testbed.

\paragraph{Grokking and emergence.}
\citet{power2022grokking} discovered that small transformers on
modular arithmetic exhibit a long memorization plateau followed by a
sudden generalization transition; \citet{nanda2023progress}
mechanistically identified the in-network DFT circuit responsible.
The plateau-then-sudden-transition pattern is reminiscent of the
saddle-to-saddle dynamics characterized in deep linear networks by
\citet{saxe2014exact}, where learning proceeds through a sequence of
loss-landscape saddles separated by long plateaus. Our framework
gives this picture a concrete representation-geometric content: the
plateau is the metastable post-critical regime
($\beta\!>\!\betac$ but $\varepsilon$ still microscopic;
Remark~\ref{rem:metastability}), and the dissipation strength sets
the escape time. We revisit grokking in Sec.~\ref{sec:arc:grokking}
as the cleanest empirical instance of post-critical metastability
in our framework.

\paragraph{Phase transitions in deep learning.}
\citet{ziyin2022collapse} analyzes posterior collapse in linear
latent-variable models, and \citet{liu2023phase} prove first- and
second-order phase transitions in deep linear networks under a
statistical-mechanics framing. Both connect collapse-type phenomena
to phase transitions in regularized learning. We share this
perspective and contribute a closed-form Hessian-based predictor of
the transition point in unsupervised and self-supervised settings.

\paragraph{Lottery-ticket framing (weak analogy).}
\citet{frankle2018lottery} introduce the lottery-ticket hypothesis
for supervised classifiers: there exist sparse subnetworks
identifiable already at initialization that train to the same
accuracy as the full network. Our finding for SAE atoms
(Sec.~\ref{sec:sae-lottery}) is a \emph{weaker analogue} in feature
space: early per-atom POS purity at the bifurcation onset is a useful
ranking signal for converged per-atom POS purity (identity-matched
Spearman $\rho = +0.41 \pm 0.04$ between step-$1{,}000$ and
step-$20{,}000$). The analogy is weaker than \citet{frankle2018lottery}
in two important senses: (1) identification happens at the post-onset
bifurcation event rather than at initialization, and (2) achievement
requires continued joint training, not isolated retraining of a
sparse subnetwork. We use the lottery framing for its spirit
(\emph{what matters is decided early}), not as a structural
isomorphism.

\section{Theory}\label{sec:theory}
\subsection{Setup}
We attach a $K$-component isotropic GMM with shared precision $\beta$ and
component means $\mu_k\in\mathbb{R}^d$ to the encoder output
$z = \mathrm{enc}(x)$:
\begin{equation}
p(z\mid\theta) \;=\; \frac{1}{K} \sum_{k=1}^{K}
   \mathcal{N}(z \mid \mu_k, \beta^{-1} I),
\qquad
\mathcal{L}_\mu(\mu,\beta)
   \;=\; -\mathbb{E}_z\!\Bigl[\,\mathrm{LSE}_k\!\bigl(-\tfrac{\beta}{2}\|z-\mu_k\|^2\bigr)\Bigr],
\end{equation}
up to terms independent of $\mu$. We denote the data covariance
$\Sigma\!=\!\Cov(z)$ with eigenvalues
$\sigma_1^2\!\ge\!\cdots\!\ge\!\sigma_d^2$, the responsibilities
$p(k\!\mid\!z) = \mathrm{softmax}_k(-\tfrac{\beta}{2}\|z-\mu_k\|^2)$, and
the symmetric collapsed state $\mathcal{S}_0 = \{\mu_k = \bar z\}$.

\subsection{Hessian at the symmetric state}
At $\mathcal{S}_0$ we have $p(k\!\mid\!z) = 1/K$ and
$\nabla_\mu \mathcal{L}_\mu = 0$. Differentiating once more and evaluating
at $\mathcal{S}_0$,
\begin{equation}
\Bigl.\frac{\partial^2 \mathcal{L}_\mu}{\partial \mu_k^a\,\partial\mu_l^b}\Bigr|_{\mathcal{S}_0}
  \;=\;
  \tfrac{\beta}{K}\,\delta_{kl}\,\delta^{ab}
  \;-\; \tfrac{\beta^2}{K}\bigl(\delta_{kl} - \tfrac{1}{K}\bigr)\,\Sigma^{ab}.
\label{eq:hessian}
\end{equation}
Eigenvectors decompose into separable perturbations $\xi_l^b = w_l u^b$ with
$w\!\in\!\mathbb{R}^K$ and $u\!\in\!\mathbb{R}^d$. Two channels arise: the
symmetric channel ($w$ constant) is always stable with eigenvalues $\beta/K$;
the anti-symmetric channel ($\sum_l w_l = 0$, $(K{-}1)$-fold degenerate)
has spatial eigenvalues
\begin{equation}
\lambda^\perp_i(\beta) \;=\; \tfrac{\beta}{K}\,\bigl(1 - \beta\,\sigma_i^2\bigr),
\qquad i=1,\dots,d.
\end{equation}
The lowest such eigenvalue is $\lambda^\perp_1(\beta) = (\beta/K)(1 - \beta\,\lammax(\Sigma))$,
and crosses zero exactly at the critical precision in equation~\eqref{eq:betac}.
The unstable direction is the principal eigenvector of $\Sigma$ in spatial
space combined with any anti-symmetric $w$ in component space. Projecting
the dynamics onto this slow direction yields the supercritical pitchfork
normal form
\begin{equation}
\dot\varepsilon \;=\; (\beta - \betac)\,\varepsilon \;-\; \alpha\,\varepsilon^3 \;+\; \text{noise},
\quad \alpha>0,
\label{eq:pitchfork}
\end{equation}
shown in Fig.~\ref{fig:theory}(a). A full derivation, including the
parametrization correction relative to \citet{rose1990statistical} and
the explicit eigendecomposition in component space
(Theorem~\ref{thm:hessian}), is in Appendix~\ref{app:hessian}
(\S\ref{app:hessian:explicit}).

\begin{figure}[t]
\centering
\includegraphics[width=\linewidth]{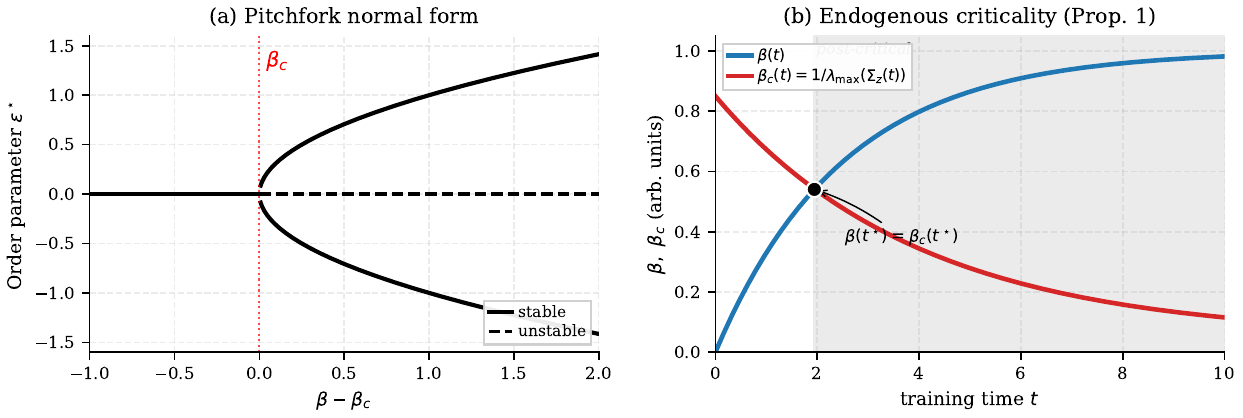}
\caption{Theory. \textbf{(a)} The supercritical pitchfork at $\betac$: below
$\betac$ the symmetric collapsed state is stable; above $\betac$ it becomes
a saddle and two stable broken-symmetry branches emerge. \textbf{(b)}
Endogenous criticality (Proposition~\ref{prop:endogenous}): when the encoder
is itself learning, $\betac(t)$ co-evolves with $\Cov(z(t))$; under mild
monotonicity assumptions $\beta(t)$ and $\betac(t)$ must cross at some
finite time $t^\star$.}
\label{fig:theory}
\end{figure}

\subsection{Hierarchy and reverse traversal}
The analysis recurses within each post-primary supercluster, giving a
sequence of secondary critical precisions $\betac^{(2)} =
1/\lammax(\Sigma_{\mathrm{within}})$. The pitchfork
\eqref{eq:pitchfork} is also reversible: decreasing $\beta$ continuously
merges the broken-symmetry branches back into $\mathcal{S}_0$.
App.~\ref{app:toy} confirms both on toy data (hierarchy to four decimals;
forward overshoot $\beta^\star/\betac\!\approx\!1.3$, reverse tracking
within $\le 4\%$) and on CIFAR-scale encoders (App.~\ref{app:reverse}).

\subsection{Endogenous criticality}
Replace the fixed dataset $\{z_n\}$ with $\{z_n(t)\}=\{\mathrm{enc}(x_n;\varphi(t))\}$,
where the encoder parameters $\varphi$ evolve under an upstream loss. The
critical precision becomes time-dependent:
\begin{equation}
\betac(t) \;=\; \frac{1}{\lammax(\Cov(z(t)))},
\end{equation}
and the bifurcation occurs at the moment $\beta(t^\star) = \betac(t^\star)$.

\begin{proposition}[Endogenous critical point]\label{prop:endogenous}
Suppose
\begin{enumerate}[leftmargin=*,itemsep=1pt,topsep=2pt]
\item $\beta(t)$ is asymptotically non-decreasing and $\liminf_{t\to\infty}
  \beta(t) > c_1 > 0$;
\item $\betac(t)$ is asymptotically non-increasing on average;
\item $\limsup_{t\to\infty} \betac(t) < c_1$.
\end{enumerate}
Then $\beta(t)$ and $\betac(t)$ cross at some finite $t^\star$; at the
crossing, the GMM's symmetric state becomes unstable.
\end{proposition}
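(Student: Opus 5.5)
The plan is to reduce the statement to a sign change of the gap function $g(t) = \beta(t) - \betac(t)$, and then use the Hessian analysis of the previous subsection for the instability claim. Conditions 1 and 3 give an eventual strict separation. There is a $T_1$ such that $\beta(t) > c_1$ for all $t \ge T_1$, because $\liminf \beta > c_1$. There is also a $T_2$ and a $\delta>0$ with $\betac(t) \le c_1 - \delta$ for all $t\ge T_2$, because $\limsup \betac < c_1$. So $g(t) > \delta > 0$ for every $t \ge T := \max(T_1,T_2)$.

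For the initial condition, I split into two cases.

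\emph{Case 1: the system starts subcritical}, $g(0) < 0$, i.e.\ $\beta(0) < \betac(0)$. We need $t\mapsto \beta(t)$ and $t\mapsto \Cov(z(t))$ to be continuous; this holds under gradient-flow dynamics with a continuous encoder. Then $\lammax(\Cov(z(t)))$ is continuous, since eigenvalues depend continuously on a symmetric matrix (Weyl's inequality). It is also strictly positive, being bounded below by $1/\betac$ where $\betac$ is finite. Hence $\betac(t)$ and $g$ are continuous on $[0,T]$. The intermediate value theorem gives a $t^\star\in(0,T)$ with $g(t^\star)=0$, and $t^\star$ is finite. Taking $t^\star = \inf\{t : g(t) \ge 0\}$ selects the first crossing.

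\emph{Case 2: the system starts supercritical}, $g(0)\ge 0$. Here the statement has to be read as ``the system is supercritical at some finite time'' (the paper's ``initially supercritical'' regime). The crossing is then either at $t^\star=0$ or at the last entry into $\{g>0\}$, which is still at most $T$.

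Conditions 1 and 2 (asymptotic monotonicity) are not needed for existence. I will use them only to argue that after $T$ the crossing is not undone: $g$ stays bounded below by $\delta$, so $t^\star$ is well defined as a \emph{last} crossing too. In the ``on average'' version of condition 2, I would interpret monotonicity on time-averages and note that the uniform bound from condition 3 already suffices.

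For the instability at $t^\star$, I invoke the spectrum derived in the Hessian subsection (Theorem~\ref{thm:hessian}). For $t$ slightly beyond $t^\star$, $\beta(t) > \betac(t) = 1/\lammax(\Cov(z(t)))$, so
\[
\lambda_1^\perp = \tfrac{\beta}{K}\bigl(1-\beta\,\lammax\bigr) < 0 .
\]
This is a negative Hessian eigenvalue with $(K-1)$-fold degeneracy along the principal eigenvector. Hence $\mathcal{S}_0$ is a saddle, and at $t^\star$ itself the lowest eigenvalue crosses zero.

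The main obstacle is regularity. The hypotheses as stated are asymptotic and do not assert continuity, and SGD with discrete steps produces $\beta$ and $\betac$ only at integer times. Under discrete steps, ``crossing'' must mean a sign change of $g$ between consecutive steps, and the argument becomes a discrete first-passage argument. I would state continuity of the training trajectory as a standing assumption and remark that the discrete version follows by the same first-passage reasoning.
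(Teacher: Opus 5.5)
Your proposal is correct and follows essentially the same route as the paper's proof in App.~\ref{app:proof}. Both use continuity of $\Delta=\beta-\betac$, $\Delta(0)<0$, eventual strict positivity of $\Delta$ from the $\liminf$/$\limsup$ parts of hypotheses (1) and (3) alone, the intermediate value theorem, and the pointwise anti-symmetric eigenvalue $\lambda_1^\perp=(\beta/K)(1-\beta\lammax)$ for instability.

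Your deviations are refinements. The paper asserts $\Delta(0)<0$ from $\log\beta_0=-2.5$, but its own initially supercritical CIFAR-10 and raw-Pythia runs contradict this. Since your Case~2 does not guarantee a genuine crossing, the clean fix is to add $\beta(0)<\betac(0)$ as a hypothesis. The paper also assumes local monotonicity to get $\beta>\betac$ just past $t^\star$. Choosing $t^\star=\sup\{t:\Delta(t)\le 0\}$ instead yields this for free: it is the \emph{last} zero, which your bound $\Delta>\delta$ on $[T,\infty)$ makes available, whereas the first zero may be a tangential touch. So state that choice explicitly in your instability step.
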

The proof, given in Appendix~\ref{app:proof}, is essentially a continuity
argument: $\beta(t)-\betac(t)$ goes from $-|\beta(0)\!-\!\betac(0)|<0$
(at $t\!=\!0$, before learning) to a strictly positive lim-inf, and must
cross zero. The substantive content is in the hypotheses: (1) holds for any
likelihood-maximizing GMM step, and (2) is the assertion that the encoder
spreads the latent, which is the defining property of any
information-preserving feature-learning objective.
The novelty relative to \citet{rose1990statistical} is not the IVT step
but the framing: $\betac$ becomes a \emph{time-dependent} observable
that co-evolves with the encoder, turning the crossing event into a
predictable training-time phenomenon rather than a static property of
a frozen dataset.

\subsection{Post-critical metastability: the bifurcation timing
problem}\label{sec:metastable}

Proposition~\ref{prop:endogenous} establishes that the crossing event
$\beta(t^\star)\!=\!\betac(t^\star)$ exists; the symmetric state
$\mathcal{S}_0$ becomes a saddle at $t^\star$. The proposition is
silent, however, on \emph{when the broken-symmetry state becomes
macroscopically observable}. The order parameter $\varepsilon$
introduced in~\eqref{eq:pitchfork} obeys
\begin{equation}
\dot\varepsilon \;=\; (\beta - \betac)\,\varepsilon
                   \;-\; \alpha\,\varepsilon^3
                   \;+\; \eta(t),
\label{eq:eps-ode}
\end{equation}
where $\eta(t)$ is a noise/dissipation term inherited from the
encoder's training dynamics. Just past the crossing, the unstable
mode's linear growth rate is $(\beta - \betac) \!\to\! 0^+$, so
$\varepsilon$ grows exponentially \emph{from the noise scale} on a
characteristic timescale that scales as $1/(\beta-\betac)$.

\begin{remark}[Post-critical metastability]\label{rem:metastability}
The crossing event of Prop.~\ref{prop:endogenous} marks when
$\mathcal{S}_0$ becomes unstable, not when the system reaches the
broken-symmetry state. The lag between the two depends on the
linear growth rate $(\beta-\betac)$ and the dissipation supplied by
the encoder's training dynamics. For continuously-driven objectives
(e.g.\ contrastive losses), the lag is short and the post-critical
descent in $(\log(\beta/\betac),\log\mathrm{NC1})$ is observed
immediately after the crossing. For under-dissipated objectives
(e.g.\ supervised cross-entropy without explicit weight decay), the
lag can extend over orders of magnitude in training steps,
producing a long \emph{metastable plateau} on which
$\beta\!>\!\betac$ but $\mathrm{NC1}$ has not yet collapsed.
Section~\ref{sec:arc:grokking} presents an extreme empirical instance
in which the crossing occurs within $\sim 40$ training steps but
the macroscopic broken-symmetry transition is delayed by thousands of
steps. Weight decay (the encoder's most familiar dissipation knob)
supplies the drift that ultimately triggers the transition, with the
plateau length monotonically decreasing in WD and diverging as
WD$\to 0$ at fixed noise scale. A control experiment with a 6-point
WD sweep (Sec.~\ref{sec:arc:grokking}, App.~\ref{app:escape})
quantifies this empirically and shows the metastable plateau is a
post-critical escape phenomenon, not a pre-bifurcation delay.
\end{remark}


\section{The bifurcation arc across feature-learning methods}\label{sec:arc}

The Hessian-pitchfork prediction of Sec.~\ref{sec:theory} produces a
three-phase trajectory in
$\bigl(\log(\beta/\betac),\,\log\mathrm{NC1}\bigr)$ space:
\emph{pre-critical} ($\beta\!<\!\betac$, no class-aligned clustering,
NC1 drifts slowly upward), \emph{critical peak}
($\beta\!\approx\!\betac$, $\mathcal{S}_0$ becomes a saddle, NC1
peaks), \emph{post-critical descent} ($\beta\!>\!\betac$, prototypes
separate and $\log\mathrm{NC1}$ falls linearly in
$\log(\beta/\betac)$). The shape \emph{observed in a particular
setup} depends on (i) where the encoder begins
($\log(\beta/\betac)\!\lessgtr\!0$ at $t\!=\!0$), (ii) the relative
rates of $\beta(t)$ and $\betac(t)$ during training, and (iii) the
dissipation rate (Remark~\ref{rem:metastability}). The combination
yields four empirically distinguishable shapes:
\emph{full V} (frozen $\betac$; one leg both ways), \emph{fold-back}
($\betac$ overtakes $\beta$ post-onset, with magnitude controlled by
data complexity; mild on CIFAR-10, strong on CIFAR-100),
\emph{delayed escape} (under-dissipated post-critical metastability,
as in grokking), and \emph{no arc} (negative control: no clustering
pressure). We verify all four in two complementary experiments below.

\subsection{Four shapes from feature-learning trajectories}\label{sec:arc:ssl}

\begin{figure}[t]
\centering
\includegraphics[width=\linewidth]{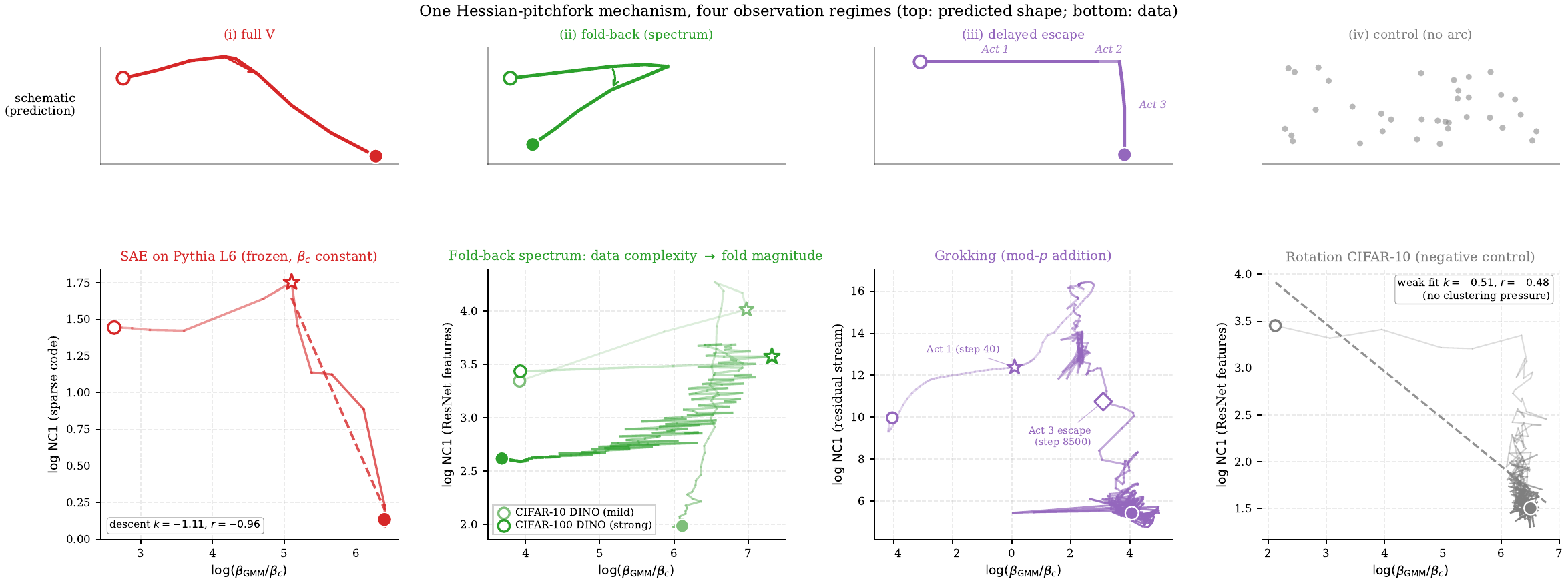}
\caption{\textbf{Four observable shapes of the bifurcation arc, from
the same Hessian-pitchfork mechanism.} Top row: schematic of each
regime; bottom row: empirical realization.
(i) \emph{Full V} on SAE / frozen Pythia layer~6
(NC1 in sparse-code basis; star = V trough).
(ii) \emph{Fold-back spectrum} on DINO / CIFAR-10 (light green, mild)
and CIFAR-100 (dark green, strong), fold magnitude controlled by
data complexity.
(iii) \emph{Delayed escape} on grokking ($p\!=\!97$, WD$=\!1.0$;
three-act structure detailed in Sec.~\ref{sec:arc:grokking}).
(iv) \emph{No arc} on rotation-prediction (negative control).
NC1 axes are not comparable across panels (sparse-code, backbone,
residual-stream); the trajectory \emph{shape} is the prediction.
The 3-axis taxonomy explaining why exactly these four shapes appear
is given in App.~\ref{app:five-shapes}.}
\label{fig:arc}
\end{figure}

We probe all four shapes here, across one sparse-coding-on-frozen-LM
setup, two self-supervised setups (CIFAR-10 and CIFAR-100, illustrating
the fold-back spectrum), one delayed-escape setup (grokking), and one
negative control (rotation prediction), with a shared joint-detached
GMM probe protocol ($K\!=\!10$,
$\mathrm{lr}_\mu\!=\!5\!\times\!10^{-3}$,
$\mathrm{lr}_\beta\!=\!10^{-2}$, $\log\beta_0\!=\!-2.5$;
$K_{\mathrm{probe}}$-robustness in App.~\ref{app:kprobe};
full hyperparameter tables in App.~\ref{app:hp}) so that
$\betagmm(t)$ is directly comparable across methods.
Figure~\ref{fig:arc} shows the four shapes; Table~\ref{tab:slopes}
reports the local descent slope $k$ where applicable. The grokking
panel of Fig.~\ref{fig:arc} shows a single canonical run for visual
parity with the other panels; the full three-seed analysis with the
three-act decomposition (Act 1 / Act 2 / Act 3 timing and
WD-dependent escape time, with $\tau_{\mathrm{esc}} \propto
\mathrm{WD}^{-1.23}$) is in
Sec.~\ref{sec:arc:grokking}~/~Fig.~\ref{fig:grok}.

\paragraph{(i) Full V.} The SAE on Pythia-160M layer~6 freezes the
upstream encoder, holding $\betac$ constant. The entire motion in
$\log(\beta/\betac)$ is driven by the SAE's own precision growing
past its critical point. NC1 (in the sparse-code basis,
$K\!=\!2048$) rises slightly during pre-critical buildup, peaks, and
then descends monotonically as predicted by the post-critical regime
($r\!=\!-0.97$, $n\!=\!9$).
The full V is visible because nothing competes for $\betac$. The SAE
case is also the cleanest substrate for a deeper question (whether
the bifurcation onset has \emph{atom-level} mechanistic content
beyond geometric coupling), which we take up in
Section~\ref{sec:sae-lottery}. App.~\ref{app:lm} reports the
complementary experiment of probing the LM's hidden states directly
without an intermediate SAE, showing why an SAE substrate is
essential for the full V to be visible at all.

\paragraph{(ii) Fold-back (spectrum).} When the encoder is itself
learning a complex enough target, $\betac(t)$ may rise faster than
$\beta(t)$ after the initial bifurcation, and the trajectory folds
back left on the $\log(\beta/\betac)$ axis while NC1 continues to
fall. The fold magnitude is controlled by data complexity. On
CIFAR-100 (100 fine classes), both DINO and SimCLR exhibit
\emph{strong} fold-back: DINO peaks at
$\log(\beta/\betac)\!=\!+7.09$ around epoch 35 and descends back to
$+3.68$ by epoch 300 ($\sim$3 log-unit drift); SimCLR follows the
same shape on a slightly compressed range.
On CIFAR-10 (10 classes), ResNet-18 features begin already
supercritical ($\log(\beta/\betac)\!\approx\!+3.9$ at random init)
and the fold is \emph{mild} ($\sim$0.5 log-unit leftward drift
post-onset) for both DINO with native teacher-temperature $\beta_t$
and SimCLR. The two datasets occupy the same kinematic regime; data
complexity controls only the magnitude of $\betac$'s post-onset
rise.

\paragraph{(iii) Delayed escape.} Under low dissipation, the system
can sit on a post-critical metastable plateau for orders of magnitude
in training steps before the macroscopic broken-symmetry transition
fires (Remark~\ref{rem:metastability}). The grokking trace in
Fig.~\ref{fig:arc} previews this; the three-act decomposition with
multi-seed dissipation-threshold control is in
Sec.~\ref{sec:arc:grokking}.

\paragraph{(iv) No arc (negative control).} Rotation prediction
provides no clustering pressure, so the framework predicts no
bifurcation. We see only weak scatter ($r\!=\!-0.48$, $n\!=\!300$),
with NC1 evolving largely independently of
$\log(\beta/\betac)$ over a comparable range.

\begin{table}[t]
\centering
\small
\caption{Trajectory shape across the six runs of Fig.~\ref{fig:arc},
with the correlation $r$ between $\log(\beta/\betac)$ and
$\log\mathrm{NC1}$ on the descent leg (after the V trough for SAE,
after the fold-back peak for CIFAR-100 methods, on the full trajectory
for already-supercritical CIFAR-10 methods and the control). The
framework's prediction across panels is \emph{shape} and the
\emph{sign} of the post-critical relationship (negative when
$\log(\beta/\betac)$ is rising along the descent leg, positive when
it folds back). NC1 is computed in basis-specific normalizations
(sparse code, ResNet features, residual stream), and the sample sizes
$n$ differ across runs, so neither $|r|$ nor the implicit slope
magnitude is comparable across panels in a strict sense; we report
$r$ only to characterize the tightness of the relationship within
each panel.}
\label{tab:slopes}
\begin{adjustbox}{max width=\textwidth}
\begin{tabular}{lllrr}
\toprule
\textbf{Method} & \textbf{Dataset} & \textbf{Shape} & \textbf{descent $r$} & \textbf{$n$} \\
\midrule
SAE on Pythia L6 ($K\!=\!2048$, soft-L1) & wikitext-103   & (i) full V              & $-0.97$ & 9   \\
DINO C-100              & CIFAR-100      & (ii) fold-back (strong) & $+0.90$ & 278 \\
SimCLR C-100            & CIFAR-100      & (ii) fold-back (strong) & $+0.99$ & 291 \\
DINO C-10 ($\beta_t$)   & CIFAR-10       & (ii) fold-back (mild)   & $+0.96$ & 50  \\
SimCLR C-10             & CIFAR-10       & (ii) fold-back (mild)   & $+0.74$ & 600 \\
Rotation C-10           & CIFAR-10       & (iv) no arc (control)   & $-0.48$ & 300 \\
\bottomrule
\end{tabular}
\end{adjustbox}
\end{table}

\paragraph{Sign of the post-critical relationship is kinematic.} The
relationship between $\log(\beta/\betac)$ and $\log\mathrm{NC1}$ on
the descent leg is negative when $\log(\beta/\betac)$ is rising on
that leg (regime i, frozen $\betac$) and positive when it is falling
or saturating (regime ii, fold-back). The framework does not predict
a universal sign; it predicts an arc whose descent leg may be
traversed in either direction in $\log(\beta/\betac)$ space depending
on which of $\beta$ or $\betac$ moves faster.

\subsection{Grokking: crossing, plateau, escape}\label{sec:arc:grokking}

The four SSL regimes share a feature: once $\beta$ crosses $\betac$,
the macroscopic broken-symmetry transition follows within at most a
few epochs. Remark~\ref{rem:metastability} predicts a fifth regime
that decomposes into three distinct acts in training time:
\begin{itemize}[leftmargin=*,itemsep=2pt,topsep=2pt]
\item \emph{Act~1: Crossing.} $\beta(t)$ crosses $\betac(t)$ and
  the symmetric state becomes a saddle.
\item \emph{Act~2: Metastable plateau.} The system is
  supercritical ($\beta\!>\!\betac$) but the order parameter
  $\varepsilon$ has not yet grown to a macroscopic value; NC1 sits
  on a high plateau. The plateau length is set by the dissipation
  rate.
\item \emph{Act~3: Escape.} Dissipation (here, weight decay)
  finally drives $\varepsilon$ off the saddle; NC1 collapses by
  orders of magnitude over a comparatively short window, and
  downstream test accuracy jumps from chance to $\approx 1$. This
  is the grokking transition.
\end{itemize}
The previous reading of grokking, ``the model suddenly acquires
generalization at some moment,'' is misleading; the model becomes
\emph{eligible} to acquire it within the first $\sim 40$ steps and
then spends thousands of steps trying to escape a saddle.
Grokking on modular arithmetic is the cleanest empirical realization
of this three-act picture.

\paragraph{Setup.}
Following \citet{nanda2023progress}, we train a 1-layer transformer
($d_{\mathrm{model}}\!=\!128$, $4$ heads, $d_{\mathrm{mlp}}\!=\!512$)
on $a + b\bmod p$ tokens with AdamW, $\mathrm{lr}\!=\!10^{-3}$,
full-batch gradient descent, $30\%$ train fraction. The probe is the
same joint-detached protocol of Sec.~\ref{sec:arc:ssl}, attached to
the residual stream at the $\mathtt{=}$ position; $\betac$ is
computed from $\Cov(z)$ on the test set, NC1 is computed against the
$p$ output classes. Three seeds per configuration.

\begin{figure}[t]
\centering
\includegraphics[width=\linewidth]{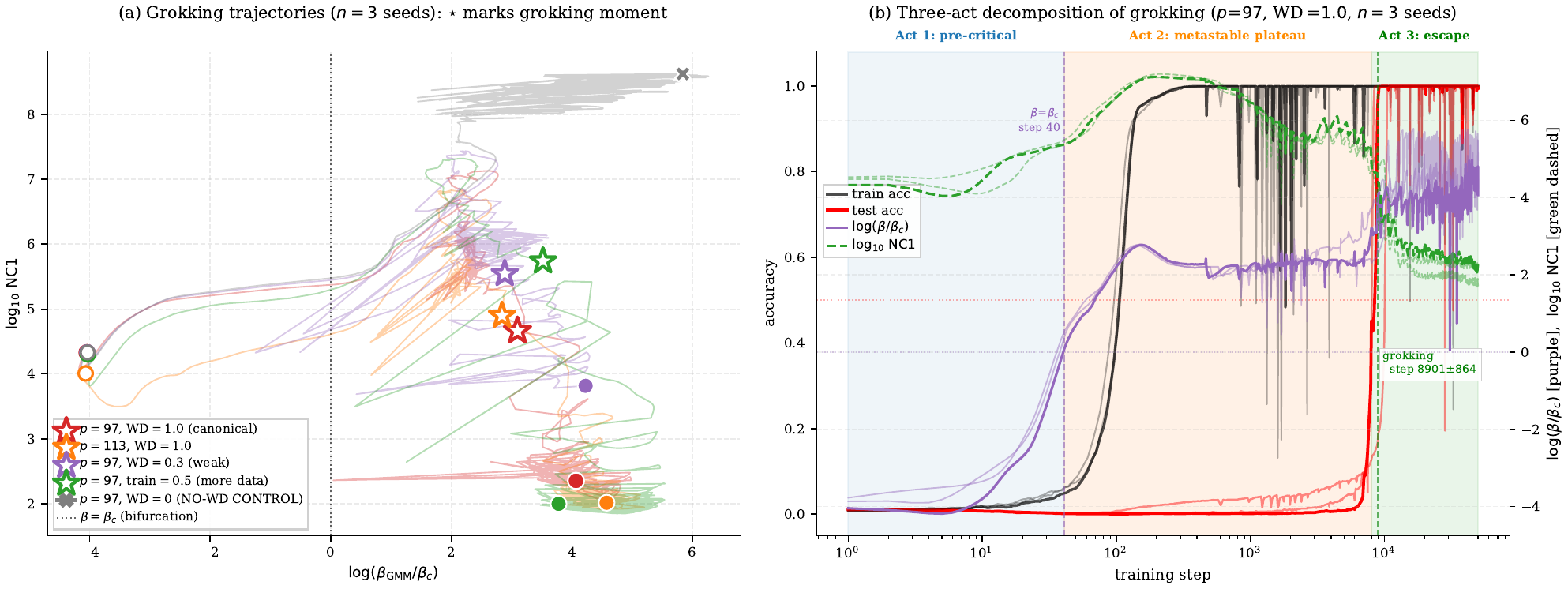}
\caption{\textbf{Grokking decomposes into crossing, plateau, and
escape.}
\textbf{(a)} Trajectories of five grokking configurations
($n\!=\!3$ seeds each, seed-0 trace shown) in
$\bigl(\log(\beta/\betac),\,\log_{10}\mathrm{NC1}\bigr)$. $\star$
marks the grokking moment ($\mathrm{test\ acc}\!>\!0.5$);
${\times}$ marks the no-grok endpoint of the WD$=$0 control.
\textbf{(b)} Canonical run ($p\!=\!97$, WD$=\!1.0$, $3$ seeds
overlaid), with the three acts highlighted as colored bands.
\emph{Act~1 (blue)}: $\beta$ rapidly crosses $\betac$ at step
$37\!\pm\!2$. \emph{Act~2 (orange)}: a long metastable plateau in
which $\log(\beta/\betac)\!\approx\!+3$ but $\log_{10}\mathrm{NC1}$
sits at $\approx 6.7$; the system is post-critical but the
broken-symmetry order parameter is still microscopic.
\emph{Act~3 (green)}: at step $8\,900\!\pm\!864$, weight decay's
dissipation finally drives the order parameter off the saddle; NC1
collapses by four orders of magnitude over a few hundred steps and
test accuracy jumps to $1$.}
\label{fig:grok}
\end{figure}

\paragraph{Act~1: the crossing is fast and universal.}
Across all $15$ runs (5 configs $\times$ 3 seeds),
$\log(\beta/\betac)$ crosses zero in the first $34$--$60$ training
steps (Table~\ref{tab:grok}). The crossing event is tight
($\sigma\!\approx\!2$ steps within each configuration) and reached
in \emph{every} seed of \emph{every} configuration, including the
no-grok WD$=$0 control. This is the Hessian-pitchfork prediction of
Sec.~\ref{sec:theory} in raw form: the supercriticality condition
$\beta\!>\!\betac$ is universally met within the first few steps of
training.

\paragraph{Act~2: the plateau is the post-critical metastable
saddle.} For thousands of steps after the crossing,
$\log(\beta/\betac)$ continues to rise toward $\sim\!+4$ but $\mathrm{NC1}$
does not move from its high plateau. The previous (and incorrect)
reading would call this an undertrained representation. The
framework's reading (Remark~\ref{rem:metastability}) is sharper: the
encoder is parked near $\varepsilon\!=\!0$ on a saddle that is
\emph{linearly} unstable but whose exit time is set by noise and
dissipation. The plateau length is the dissipation-controlled escape
time, not a pre-bifurcation delay.

\paragraph{Act~3: escape is dissipation-controlled.}
At sufficiently long horizon ($200{,}000$ steps), all
configurations with WD$>\!0$ escape in $3/3$ seeds, but the escape
time is strongly monotonic in WD:
$\tau_{\mathrm{esc}}$ rises from $8\,900$ steps at WD$=\!1.0$ to
$147\,167$ steps at WD$=\!0.1$, while WD$=\!0$ fails to escape in
$0/3$ seeds within $50{,}000$ steps. The full sweep fits a
power-law $\tau_{\mathrm{esc}} \propto \mathrm{WD}^{-1.23}$
($\Delta\mathrm{AIC} = +19.3$ over the Kramers
form~\eqref{eq:kramers-derived}; Table~\ref{tab:wd-sweep},
App.~\ref{app:escape}). The WD$=$0 control reaches
$\log(\beta/\betac)\!=\!+6.07$, higher than any grokked run, yet NC1
grows to $\sim\!10^8$ and test accuracy stays at $\sim\!0.01$ in all
three seeds; the metastable plateau is a saddle that requires
dissipation to escape. Beyond the weight-decay axis, escape time also
scales predictably with the modulus $p$ and the training fraction
(Table~\ref{tab:grok}). At escape, $\log_{10}\mathrm{NC1}$ drops from
$\approx 6.7$ to $\approx 2.4$ over a few hundred training steps while
$\log(\beta/\betac)$ moves by less than $1$ log-unit; the entire
post-critical descent shape of Sec.~\ref{sec:arc:ssl} is compressed
into this narrow window.

\begin{table}[t]
\centering
\small
\caption{Grokking experiments, mean $\pm$ std across $n\!=\!3$ seeds.
Act~1 (the $\beta\!=\!\betac$ crossing) is universal and tight
($\sigma\!\approx\!2$); Act~3 escape time spans more than an order
of magnitude in WD. Full WD sweep at the $200{,}000$-step horizon is
in Table~\ref{tab:wd-sweep}; the WD$=\!0$ control fails to escape
within $50{,}000$ steps in all $3/3$ seeds.}
\label{tab:grok}
\begin{adjustbox}{max width=\textwidth}
\begin{tabular}{lrrrrr}
\toprule
\textbf{Config}                  & \textbf{Memo} & \textbf{Act~1: $\beta\!=\!\betac$} & \textbf{Act~3: grok}    & \textbf{Final $\log(\beta/\betac)$} & \textbf{$n_{\mathrm{grok}}$} \\
\midrule
$p\!=\!97$, WD$=$1.0 (canonical) & $253\!\pm\!6$   & $37\!\pm\!2$ & $8\,900\!\pm\!864$    & $+4.49\!\pm\!0.47$ & $3/3$ \\
$p\!=\!113$, WD$=$1.0            & $297\!\pm\!49$  & $39\!\pm\!2$ & $6\,633\!\pm\!1\,461$ & $+4.88\!\pm\!0.45$ & $3/3$ \\
$p\!=\!97$, WD$=$0.3 (weak)      & $263\!\pm\!2$   & $36\!\pm\!2$ & $38\,150\!\pm\!4\,250$& $+4.07\!\pm\!0.46$ & $3/3$ \\
$p\!=\!97$, train$=$0.5          & $242\!\pm\!52$  & $54\!\pm\!6$ & $1\,250\!\pm\!398$    & $+4.45\!\pm\!0.54$ & $3/3$ \\
$p\!=\!97$, WD$=$0 (control)     & $273\!\pm\!8$   & $36\!\pm\!2$ & \textbf{never}        & $+6.07\!\pm\!0.17$ & $\mathbf{0/3}$ \\
\bottomrule
\end{tabular}
\end{adjustbox}
\end{table}

\paragraph{Control experiment: WD-controlled metastable plateau length.}
The grokking setup is unusual in giving us a single, clean dissipation
knob (WD) that is otherwise neutral with respect to the
Hessian-pitchfork mechanism: changing WD does not shift Act~1
(the crossing remains at step $36\!-\!37$ across all WD; cf.\
$\beta\!=\!\betac$ column of Table~\ref{tab:grok}) and does not
change the post-critical $\log(\beta/\betac)$ trajectory shape,
only its \emph{rate}. This lets us treat the WD sweep as a control
experiment for Remark~\ref{rem:metastability}: if the metastable
plateau is genuinely a post-critical escape phenomenon, then dialing
the encoder's dissipation should monotonically change the plateau
length, and at zero dissipation the system should stay on the plateau
indefinitely. We measure $\tau_{\mathrm{esc}}$ at six WD levels
$\gamma\!\in\!\{0.1, 0.2, 0.3, 0.5, 0.7, 1.0\}$ with $n=3$ seeds per
level and a $200{,}000$-step horizon long enough to observe escape at
every $\gamma>0$ (Table~\ref{tab:wd-sweep}).

\begin{table}[h]
\centering
\small
\caption{WD-intervention control experiment: escape times at $p\!=\!97$,
train fraction $0.3$, $n\!=\!3$ seeds per WD, $200{,}000$-step horizon.
Power-law fit $\tau\propto\gamma^{-1.23}$ is within $\sim\!10\%$ at
every WD; activation-dominated Kramers form is decisively ruled out
($\Delta\mathrm{AIC}\!=\!+19.3$; see App.~\ref{app:escape}).}
\label{tab:wd-sweep}
\begin{tabular}{rrrr}
\toprule
$\gamma$ (WD) & $\tau_{\mathrm{esc}}$ (steps, observed) & $\tau_{\mathrm{esc}}$ (power-law fit) & escape rate \\
\midrule
$0.0$ & $\infty$ (no escape in $50\text{k}$) & $\infty$            & $0/3$ \\
$0.1$ & $147\,167 \pm 23\,618$                & $151\,987$           & $3/3$ \\
$0.2$ & $88\,033 \pm 27\,091$                 & $65\,006$            & $3/3$ \\
$0.3$ & $38\,150 \pm 4\,250$                  & $39\,554$            & $3/3$ \\
$0.5$ & $22\,433 \pm 6\,064$                  & $21\,152$            & $3/3$ \\
$0.7$ & $16\,633 \pm 5\,008$                  & $14\,006$            & $3/3$ \\
$1.0$ & $8\,900  \pm 864$                     & $9\,047$             & $3/3$ \\
\bottomrule
\end{tabular}
\end{table}

The result is unambiguous on the qualitative claim:
$\tau_{\mathrm{esc}}$ is monotonically decreasing in $\gamma$ across
two decades of plateau lengths ($8.9\text{k}\!\to\!147\text{k}$
steps), and the $\gamma\!=\!0$ control fails to escape in any of $3$
seeds within $50\text{k}$ steps despite reaching
$\log(\beta/\betac)\!=\!+6.07$. This directly supports
Remark~\ref{rem:metastability}: the crossing event makes the symmetric
state unstable, but the macroscopic broken-symmetry transition is a
dissipation-controlled post-critical escape, not an immediate
consequence of the crossing.

On the quantitative form, the 6-point fit prefers a power-law
$\tau\propto\gamma^{-1.23}$ over an exponential
(activation-dominated Kramers) form $\tau\propto e^{-\kappa\gamma/D}$
\emph{decisively}: $\Delta\mathrm{AIC}\!=\!+19.3$,
power-law residuals within $\sim\!10\%$ at every WD, Kramers
under-predicts the weak-dissipation escape times by
$\sim\!1.7\times$ at WD$=\!0.1$. We treat this as an empirical
characterization specific to the modular-arithmetic
grokking setup rather than a theoretical prediction of the
bifurcation framework; the regime-selection analysis
(drift-dominated vs.\ activation-dominated post-critical escape) is
in App.~\ref{app:escape}.

\paragraph{Predictive content of the indicator.}
At any training step during a grokking run,
$\log(\beta/\betac)$ identifies the system's current act:
pre-critical (Act~1, $\log(\beta/\betac)\!<\!0$), post-critical
metastable (Act~2, $\log(\beta/\betac)\!>\!0$ with NC1 on the
plateau), or escaping (Act~3, NC1 descending). At step 100 of the
canonical run, $\log(\beta/\betac)\!=\!+3$ already places the system
in Act~2: the trajectory has become \emph{eligible to grok},
$\sim 8\,400$ training steps before test accuracy provides any signal.
Combined with the dissipation strength of the encoder (a known
hyperparameter), the indicator predicts both whether grokking will
occur (no escape at WD$=\!0$, Tab.~\ref{tab:grok}) and roughly when
($\tau_{\mathrm{esc}} \propto \mathrm{WD}^{-1.23}$, spanning $8\,900$
steps at WD$=\!1.0$ to $147\,167$ at WD$=\!0.1$;
Tab.~\ref{tab:wd-sweep}). This is prediction in the operational sense:
from the indicator and the dissipation strength, both readable at step
$100$, the trajectory's downstream state is forecastable orders of
magnitude before $\mathrm{test\_acc}$, $\mathrm{train\_acc}$, or the
training loss show any sign of the transition.

\subsection{Synthesis}\label{sec:arc:synthesis}

A single Hessian-pitchfork prediction is governed by three binary
kinematic axes (initial sub/supercriticality, post-onset
$\beta$-vs-$\betac$ kinematics, dissipation rate;
Appendix~\ref{app:five-shapes}). The axes predict which kinematic
regimes are accessible to which classes of feature-learning
pipelines: standard SSL methods on rich datasets occupy the
fold-back regime, frozen-encoder SAE training traces a full V,
under-dissipated supervised settings exhibit delayed escape, and
clustering-pressure-free objectives produce no arc. We observe all
four regimes in the corresponding settings; the remaining nominal
axis combinations are either degenerate or unreached by standard
pipelines. Method-specific differences (in fold-back magnitude,
plateau length, and descent shape) are kinematic consequences of
the encoder--probe race and of the encoder's dissipation, not
methodology-specific physics. In particular, the visual difference
between CIFAR-10 (mild fold, $\sim 0.5$ log-unit leftward drift)
and CIFAR-100 (strong fold, $\sim 3$ log-units) is a magnitude
difference within the same fold-back regime, controlled by data
complexity (number of classes $\to$ richer post-critical
$\Cov(z)$ structure $\to$ larger $\betac$ rise). The quantity
$\log(\beta(t)/\betac(t))$, computed from the encoder's hidden state
and a passive GMM probe alone, is the label-free indicator of where
in the arc a given run currently sits.

The arc-level results above confirm the theory at the
\emph{trajectory} level. We now turn to its sharpest empirical
consequence at the \emph{per-atom} level: the SAE feature lottery
(Section~\ref{sec:sae-lottery}).

\section{The first 5\% of SAE training is a feature lottery: testing the per-atom prediction}\label{sec:sae-lottery}

\begin{figure}[H]
\centering
\includegraphics[width=\linewidth]{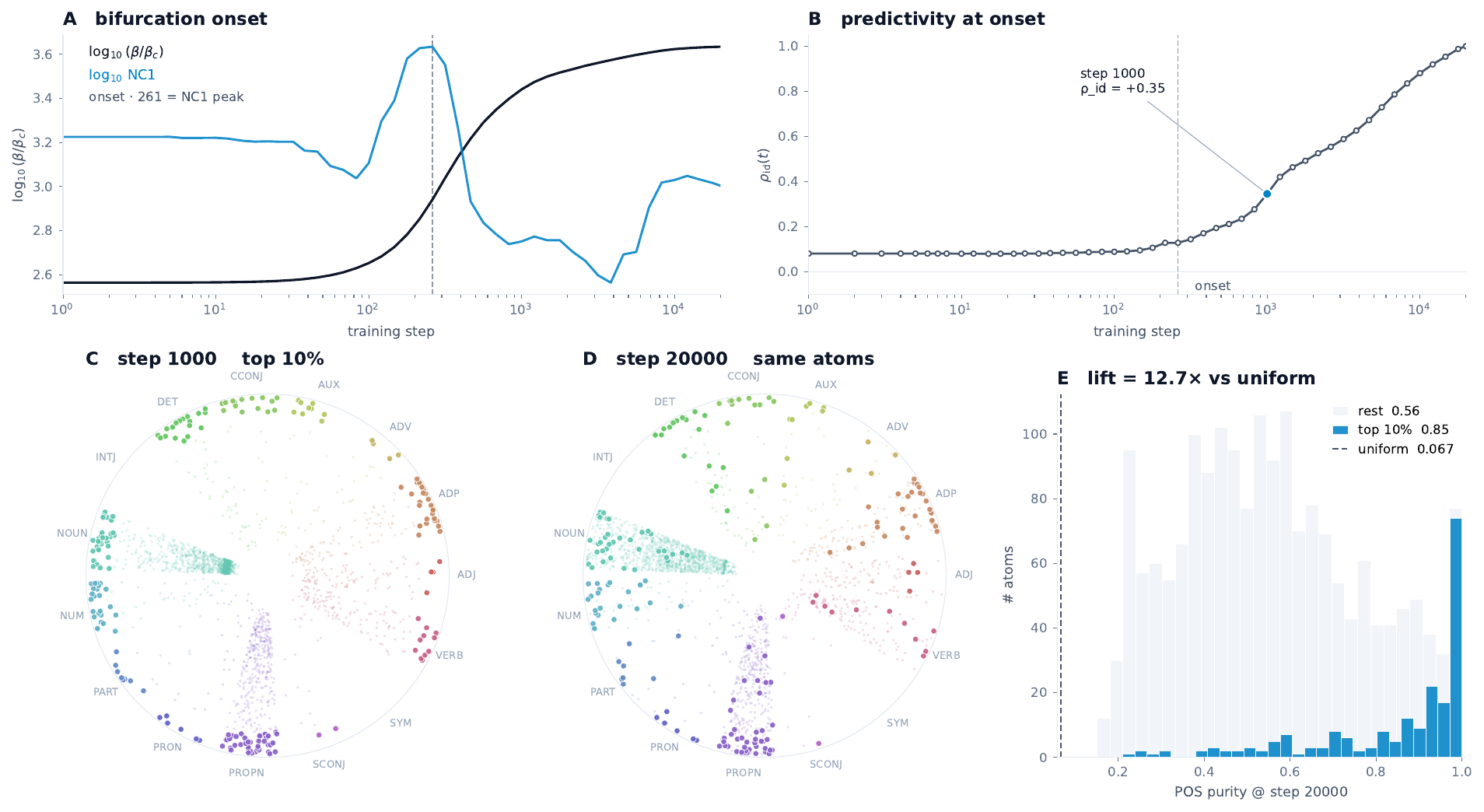}
\caption{\textbf{SAE feature lottery emerges at the bifurcation onset
and is operationally selectable by $5\%$ of training.}
($K\!=\!2048$ top-$K$ SAE on Pythia-160M layer~6; one canonical seed
shown, 3-seed statistics quoted.)
\textbf{(A)} Onset: $\log_{10}(\beta_{\mathrm{SAE}}/\betac)$ (black)
crosses zero and rises; $\log_{10}\mathrm{NC1}_{\mathrm{features}}$
(blue) peaks at step $261$, our operational definition of onset.
\textbf{(B)} Per-atom predictivity: cross-atom Spearman
$\rho_{\mathrm{id}}(t)$ of POS purity at $t$ vs convergence sits at
noise floor pre-onset, climbs sharply at the bifurcation, reaches
${+}0.41\!\pm\!0.04$ at step $1{,}000$ (${+}0.35$ in shown seed).
\textbf{(C)} Top decile of atoms by step-$1{,}000$ POS purity
($n\!=\!205$ of $1843$ active), placed at the angular sector defined
by their dominant POS class; radius = step-$1{,}000$ purity.
\textbf{(D)} Same atoms, same sectors, radius now = step-$20{,}000$
purity: the outer-ring population persists in every major POS sector;
bulk inward drift reflects sparsity tightening, not specialization
loss.
\textbf{(E)} Top decile reaches POS purity $0.82\!\pm\!0.03$ at
convergence (3 seeds: $0.80/0.86/0.82$), $12.3\!\pm\!0.4\times$ the
uniform-random baseline. Reports POS-\emph{purity} persistence, not
POS-class identity preservation.}
\label{fig:lottery}
\end{figure}

Sections~\ref{sec:theory}--\ref{sec:arc} establish that the
bifurcation onset is a \emph{collective} event: the encoder's
symmetric collapsed state loses stability at a specific moment, with
an unstable subspace shared by all $K\!-\!1$ anti-symmetric modes
(App.~\ref{app:hessian:eig}). The theory therefore makes a sharp
\emph{per-atom} prediction: at the bifurcation, each atom must
select a specific direction from this shared manifold, and that
selection should be detectable as an atom-level identity event in
training time. This section tests that prediction in the cleanest
available substrate.

Among the four regimes catalogued in Sec.~\ref{sec:arc}, the SAE on
frozen Pythia-160M layer~6 is the only setting in which $\betac$ is
constant by construction; the upstream encoder does not evolve.
This makes it the cleanest substrate to disentangle the bifurcation
event (shared by both SAE families we tested) from \emph{per-feature}
interpretability (architecture-dependent). It is also the natural
substrate for studying the crossing event in language models at all:
raw LM activations are heavily anisotropic (Pythia-160M layer~6 has
$\lammax(\Cov(z))/\bar\sigma^2 \approx 380$), so a randomly
initialized GMM probe attached directly to LM hidden states starts
already supercritical and never visibly crosses $\betac$. The SAE
itself, by contrast, initializes with low $\beta_{\mathrm{SAE}}$ and
grows past a fresh critical point, making the full pre-/post-critical
arc observable in the sparse-code basis. We train two SAE families
on the same Pythia activations with dictionary size $K=2048$ over
input dimension $d=768$:
\textbf{(i) soft-L1} (ReLU activation, $L_1$ penalty on activations,
$\lambda \in \{5{\times}10^{-4}, 2{\times}10^{-3}, 5{\times}10^{-3},
10^{-2}\}$) and \textbf{(ii) top-$K$} (hard architectural sparsity
with $\text{top}_k = 64$ active atoms per token, following
\citealt{gao2025scaling}). Both are trained for $20{,}000$ steps with
identical optimizer, batch size, and dense ($\sim$55-point)
checkpoint schedules.

We find a sharp dynamical phase transition: per-atom predictive
content goes from zero pre-onset to substantial within the first
$5\%$ of training, and atoms ranked at $5\%$ already recover the
highest-purity atoms at convergence. Following \citet{frankle2018lottery},
we refer to this as a \emph{feature lottery}; the drawing event
is the first phase transition during training, not initialization.
The lottery is the theory's sharpest empirical confirmation; the
detailed quantitative results follow.

\paragraph{Per-atom identity.} For each saved checkpoint we forward
a fixed POS-labeled WikiText-103 eval set ($N=50{,}000$ tokens)
through the SAE to obtain feature activations $f \in \mathbb{R}^{N
\times K}$. Atom $k$'s \emph{activation-pattern identity} is the
column $f_{:,k} \in \mathbb{R}^N$; two atoms have the same identity
when their $N$-dimensional activation vectors are cosine-close. This
substrate is invariant to atom permutation and scaling, and bypasses
the geometric near-saturation of decoder-column matching that arises
in overcomplete dictionaries ($K > d$, App.~\ref{app:hungarian-fail}).

\subsection{The lottery is selectable by 5\% of training}\label{sec:sae-lottery:claim}

The cross-atom $\rho_{\mathrm{id}}(t)$ trajectory plotted in
Fig.~\ref{fig:lottery}B is constructed by \emph{identity matching}:
atom $k$ at step $t$ is compared to atom $k$ at step $20{,}000$, with
no cross-time atom permutation (an alternative Hungarian-matched
$\rho_H$ is artifact-inflated; see ``Why not Hungarian'' below).
Values at representative checkpoints:

\begin{center}
\begin{tabular}{lrl}
\toprule
\textbf{step} & $\rho_{\mathrm{id}}$ & \textbf{interpretation} \\
\midrule
46             & $+0.03$ & noise floor, atoms have no identity \\
178            & $+0.06$ & still noise floor \\
\textbf{261}   & $+0.16$ & \textbf{bifurcation onset; lottery begins} \\
464            & $+0.31$ & post-critical descent \\
\textbf{1{,}000} & $\mathbf{+0.41}$ & \textbf{5\% of training; useful early-ranking signal acquired} \\
2{,}154        & $+0.49$ & slow refinement \\
10{,}000       & $+0.88$ & \\
20{,}000       & $+1.00$ & anchor (self-comparison) \\
\bottomrule
\end{tabular}
\end{center}

Two caveats beyond what Fig.~\ref{fig:lottery} shows:
\begin{enumerate}[leftmargin=*,itemsep=2pt,topsep=2pt]
\item \emph{Pre-onset $\rho_{\mathrm{id}}$ is statistically
  indistinguishable from $0$} for $t<200$, confirming atoms have no
  detectable identity before the bifurcation.
\item \emph{The lottery is operationally useful at $5\%$, but identity
  refinement continues.} The remaining $95\%$ of training lifts
  $\rho_{\mathrm{id}}$ from $0.41$ to $0.88$. We therefore do not
  claim the lottery is ``complete'' at $5\%$ in the strong sense of
  \citet{frankle2018lottery} (where identification at initialization
  is followed by isolated retraining to full accuracy). Ours is the
  weaker statement that early ranking provides a useful predictor of
  converged identity.
\end{enumerate}

This is the SAE-level analogue of the lottery-ticket framing of
\citet{frankle2018lottery}: where they show that winning subnetworks
are selectable at initialization in supervised classifiers, we find
that winning SAE atoms are selectable at the first phase transition
of unsupervised dictionary training. The bifurcation onset is the
lottery's drawing event.

\paragraph{Ranking lift (three seeds).} At convergence, the decile
of atoms ranked by their step-$1{,}000$ POS purity has mean POS
purity $0.82 \pm 0.03$ (mean$\pm$std across $n\!=\!3$ seeds; seed
0 / 1 / 2 give 0.80 / 0.86 / 0.82) versus $0.47 \pm 0.03$ for the
bottom decile. The relevant null for the lottery claim --- ``does
step-$1{,}000$ ranking carry predictive content for convergence
identity?'' --- is uniform-random selection, under which top-decile
mean equals the corpus POS-class prior $1/15 \approx 0.067$
(the 15 POS classes in the WikiText eval set). Early-screened
top-decile atoms achieve $\mathbf{12.3 \pm 0.4 \times}$ this
uniform-random baseline. The identity-matched Spearman across the
three seeds is $\rho_{\mathrm{id}} = +0.41 \pm 0.04$, all
$p < 10^{-80}$. The corresponding single-seed numbers for the
soft-L1 SAE are $0.64$ (top decile) versus $0.35$ (bottom decile),
$10\times$ uniform-random baseline. Both architectures support the
same conclusion: \textbf{early winner screening at $5\%$ of training
is feasible and identifies a substantial fraction of the
highest-purity atoms},
though full training is still required to obtain each atom's
highest-quality final activations.

\paragraph{Architecture invariance.} The same $\rho_{\mathrm{id}}$
trajectory and $5\%$ cutoff are recovered in soft-L1 SAEs
(ReLU activation, $L_1$ penalty $\lambda \in \{5\!\times\!10^{-4},
2\!\times\!10^{-3}, 5\!\times\!10^{-3}, 10^{-2}\}$, $L_0 \approx 1000$
of $K\!=\!2048$) despite their different sparsity regime: the
post-critical descent of $\rho_{\mathrm{id}}$ is qualitatively
identical (App.~\ref{app:hungarian-fail}), demonstrating that the
lottery is a property of the bifurcation, not of the architectural
top-$K$ mask.

\paragraph{Why $\rho_{\mathrm{id}}$ and not $\rho_H$ (Hungarian).}
We initially considered Hungarian-matched correlation
$\rho_H$; match atoms across time by maximizing cosine of
activation patterns, then correlate matched pairs' POS purities.
This metric is systematically inflated: at random initialization
(step $46$, where atoms have no identity by construction),
$\rho_H \approx 0.30$ rather than $0$
(App.~\ref{app:lottery-details}, panels D--E). The inflation arises because
Hungarian matching preferentially pairs atoms with similar
activation profiles, and POS purity is itself computed from the
activation profile, so high-purity atoms find each other across
time even when no underlying identity is preserved. Subtracting the
$0.30$ floor recovers the $\rho_{\mathrm{id}}$ trajectory.
$\rho_{\mathrm{id}}$, which is invariant to this confound, is the
estimator we report throughout.

\paragraph{Specificity.} Of three per-atom early metrics tested,
only POS purity carries the predictive signal:
\begin{itemize}[leftmargin=*,itemsep=2pt,topsep=2pt]
\item \emph{Early POS purity} $\to$ final POS purity:
  $\rho_{\mathrm{id}} = +0.41$ at $5\%$. \textbf{Predictive.}
\item \emph{Early activation concentration} $\to$ final POS purity:
  $\rho \approx 0$ in the top-$K$ SAE, $\rho \approx -0.28$ in the
  soft-L1 SAE. \textbf{Not predictive.}
\item \emph{Early identity-lock cosine} $\to$ final POS purity:
  $\rho = +0.20$ (top-$K$). Weakly predictive.
\end{itemize}
This specificity rules out the trivial reading ``any atom-level
property predicts its convergence value.'' What is drawn at the
bifurcation is specifically \emph{linguistic identity}, not generic
firing strength or generic stability.

\subsection{Mechanism: identity lock during the lottery window}\label{sec:sae-lottery:lock}

Why is the lottery permanent past $5\%$ of training? Atom identities
themselves lock during the post-critical window. The Hungarian-matched
cosine between per-atom activations at step $t$ and at the converged
SAE remains flat (at random baseline $0.13$ for top-$K$, $0.68$ for
soft-L1) for $t\!<\!200$, begins a sharp rise at the NC1 peak (step
$261$ for top-$K$, $383$ for soft-L1), and saturates at $1.0$ by step
$\sim 15{,}000$. Both SAE families share this timing; only their
random-init baselines differ (soft-L1's ReLU rows already project
onto the data subspace). The onset is therefore the
\emph{trigger} of a per-atom assignment process: each atom commits to
a specific activation pattern during the post-critical window, and
subsequent training refines but does not reshuffle that commitment
(App.~\ref{app:lottery-details}, Fig.~\ref{fig:app:lottery}).

\subsection{Architecture-dependent ceiling on feature
interpretability}\label{sec:sae-lottery:ceiling}

The bifurcation event is shared by both SAE families we tested, but
the \emph{post-onset ceiling} on per-feature interpretability is not.
Under architectural top-$K$ sparsity, the median active feature
reaches POS purity $0.56$ ($8\times$ random); the soft-L1 family
plateaus at $0.33$, since $L_0$ saturates near $1{,}000$ of $K\!=\!2048$
irrespective of $\lambda \in [5{\times}10^{-4}, 10^{-2}]$. The
bifurcation locks identities identically in both families; only
top-$K$ pushes those identities into linguistically selective
directions. Who wins the lottery is determined at the bifurcation;
how linguistically meaningful the winning identity is depends on
the post-onset sparsity regime (App.~\ref{app:lottery-details}).

\subsection{K-sweep: universality of the lottery, monotonicity of
interpretability and slope}\label{sec:sae-lottery:ksweep}

\begin{figure}[t]
\centering
\includegraphics[width=\linewidth]{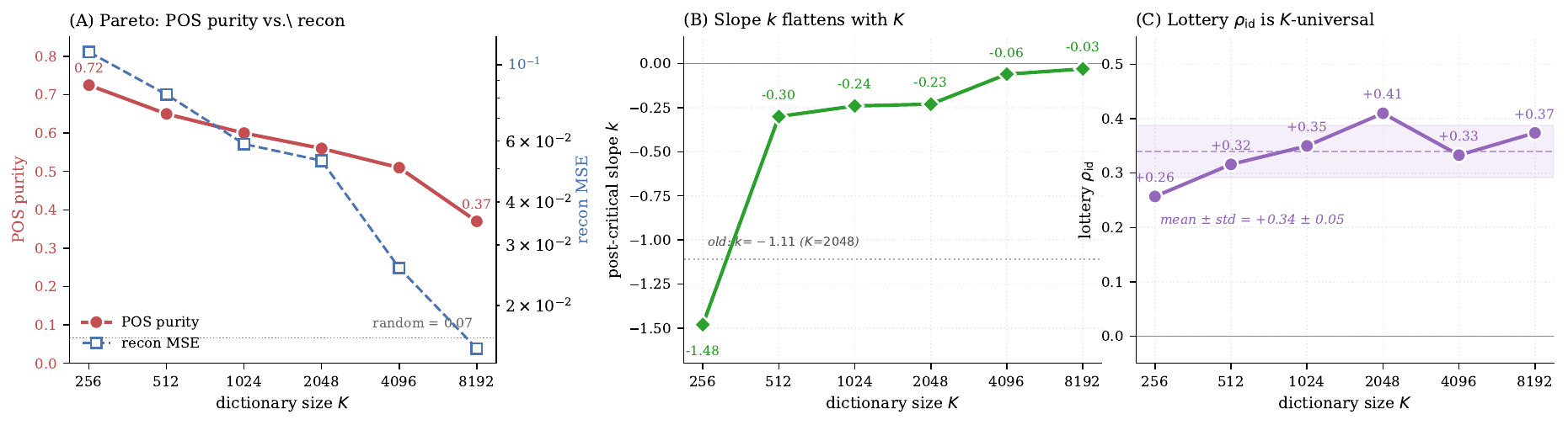}
\caption{\textbf{K-sweep across $K \in \{256, 512, 1024, 2048, 4096,
8192\}$ at fixed 3\% top-$K$ sparsity.} For each $K$, we report
final $L_0$, final reconstruction MSE, the lottery Spearman
$\rho_{\mathrm{id}}$, and the mean POS purity of the top-decile
atoms ranked by step-$1{,}000$ POS purity. Universality across $K$:
the lottery $\rho_{\mathrm{id}} \in [+0.26, +0.41]$ is $K$-stable and
consistently positive: the bifurcation onset is a $K$-universal
event. Monotonicity in $K$: POS purity \emph{decreases} with $K$
($0.725$ at $K\!=\!256$, $0.370$ at $K\!=\!8192$), while reconstruction
MSE decreases the other way.}
\label{fig:ksweep}
\end{figure}

We sweep $K \in \{256, 512, 1024, 2048, 4096, 8192\}$ at fixed
top-$K$ sparsity ratio of $3\%$ (so $\text{top}_k$ scales
proportionally as $\{8, 16, 32, 64, 128, 256\}$) with all other
hyperparameters held constant. The $K$-stability of $\rho_{\mathrm{id}}$
visible in Fig.~\ref{fig:ksweep} is statistically tight
($p < 10^{-3}$ at every $K$), justifying treating the bifurcation
onset as a $K$-universal event.

\paragraph{POS purity is monotonically decreasing in $K$, but
$K$-confounded.}
The median atom's POS purity at convergence falls steadily from
$0.725$ at $K\!=\!256$ to $0.370$ at $K\!=\!8192$
(Table~\ref{tab:ksweep}). The mechanism is that a smaller dictionary
forces each atom to absorb a coarser token-cluster partition that
more easily aligns with the $15$-way POS partition; a larger
dictionary spreads across finer sub-categories that POS purity is
too coarse to resolve. We emphasize that this trend is therefore
\emph{partly a structural property} of POS purity as a metric, not a
direct statement that small $K$ produces more mechanistically
interpretable features: a $K$-unbiased interpretability metric
(causal mediation, LLM-as-judge) is required before a Pareto-based
recommendation can be made.

\begin{table}[h]
\centering
\small
\caption{K-sweep at fixed top-$K$ sparsity ratio $3\%$, all other
hyperparameters constant. The lottery $\rho_{\mathrm{id}}$ is
$K$-stable; POS purity decreases monotonically with $K$ while
reconstruction MSE decreases the other way.}
\label{tab:ksweep}
\begin{adjustbox}{max width=\textwidth}
\begin{tabular}{rrrrr}
\toprule
$K$ & $L_0$ & recon MSE & POS purity (median atom) & $\rho_{\mathrm{id}}$ (lottery) \\
\midrule
256  & 8   & $1.09\!\times\!10^{-1}$ & 0.725 & $+0.26$ \\
512  & 16  & $8.20\!\times\!10^{-2}$ & 0.650 & $+0.32$ \\
1024 & 32  & $5.89\!\times\!10^{-2}$ & 0.600 & $+0.35$ \\
2048 & 64  & $5.27\!\times\!10^{-2}$ & 0.560 & $+0.41\pm 0.04$ ($n{=}3$) \\
4096 & 128 & $2.57\!\times\!10^{-2}$ & 0.510 & $+0.33$ \\
8192 & 256 & $1.50\!\times\!10^{-2}$ & 0.370 & $+0.37$ \\
\bottomrule
\end{tabular}
\end{adjustbox}
\end{table}

\paragraph{Pareto frontier, with a $K$-bias caveat.}
$K\!=\!256$ scores higher on POS purity; large $K$ scores better on
reconstruction. The framework adds a second axis (atom-level POS
purity, plus the lottery $\rho_{\mathrm{id}}$) to the standard SAE
$L_0$-vs-recon Pareto. However, because POS has only $15$ classes,
the POS-purity-vs-$K$ trend is partly a structural artifact: smaller
$K$ forces each atom to absorb a coarser token-cluster partition,
which more easily aligns with a $15$-way categorical metric. We
therefore do \emph{not} recommend small $K$ on the basis of POS
purity alone; a $K$-unbiased interpretability metric (such as causal
mediation \citep[cf.][]{nanda2023progress} or LLM-as-judge scoring)
is required before a substantive recommendation can be made. What
\emph{is} robust to this caveat is the $K$-universal lottery effect:
at every $K\!\in\!\{256,\dots,8192\}$, atom-level identity is
predictable from $5\%$ training with $\rho_{\mathrm{id}} \in
[+0.26, +0.41]$.

\subsection{Connection to the bifurcation theory}\label{sec:sae-lottery:connection}

The pitchfork bifurcation and the feature lottery are two scales of
the same event. At $\beta\!=\!\betac$ the unstable subspace is shared
by all $K\!-\!1$ anti-symmetric modes
(App.~\ref{app:hessian:eig}); each atom selects a specific direction
from this manifold via random initialization noise and the cubic
terms in~\eqref{eq:pitchfork}. The collective bifurcation enables
the per-atom selection; the per-atom selection is what makes
post-onset identities permanent and partially predictable. The
following proposition formalizes the per-atom directional
preservation that the empirical $\rho_{\mathrm{id}} > 0$
(Sec.~\ref{sec:sae-lottery:claim}) realizes.

\begin{proposition}[Per-atom directional persistence]\label{prop:lottery}
Consider the coupled order-parameter dynamics on the
$(K\!-\!1)$-fold-degenerate unstable subspace just past the crossing
($\mu := \beta-\betac > 0$, small),
\begin{equation}
\dot{\bm\varepsilon}_k \;=\; \mu\,\bm\varepsilon_k
              \;-\; \alpha\,\|\bm\varepsilon_k\|^2 \bm\varepsilon_k
              \;-\; \gamma\!\!\sum_{j\neq k}\!\bigl(\bm\varepsilon_j^\top \bm\varepsilon_k\bigr)\bm\varepsilon_j
              \;+\; \bm\eta_k(t),
\qquad k=1,\dots,K,
\label{eq:mode-coupling}
\end{equation}
with i.i.d.\ Langevin noises $\bm\eta_k$ at intensity $D$ and cubic
inter-mode coupling strength $\gamma \ge 0$. Assume each initial
perturbation satisfies $\|\bm\varepsilon_k(0)\| > \sigma_*\!\cdot\!\sqrt{D/\mu}$
for some $\sigma_* > 1$ (initial perturbation above the noise floor).
Then in the weak-coupling regime $\gamma \ll \alpha$, for any
\emph{finite} time $T$ within the post-saturation, pre-randomization
window $\tau_r \lesssim T \ll T_{\mathrm{rand}} :=
r^{\star 2}/(2(d{-}1)D)$ (where $\tau_r$ is the deterministic radial
saturation timescale and $T_{\mathrm{rand}}$ is the spherical
randomization timescale on the saturated attractor), the per-atom
directional persistence is strictly positive:
\begin{equation}
\mathbb{E}\bigl[\,d_k(0)^\top d_k(T)\,\bigr] \;>\; 0,
\qquad
d_k(t) := \bm\varepsilon_k(t)/\|\bm\varepsilon_k(t)\| \in S^{d-1},
\label{eq:lottery-positive}
\end{equation}
with magnitude controlled by $\sigma_*\,\mu / (\alpha D)^{1/2}$ and
expectation taken over the Langevin noise realizations. For training
runs of fixed horizon, $T$ is the training time; in the strict
$T \to \infty$ limit, angular Brownian motion on the saturated
sphere eventually randomizes $d_k(T)$ and the persistence claim
breaks down (outside the scope of this proposition).
\end{proposition}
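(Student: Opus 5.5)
The plan is to decompose each mode into radial and angular parts, $\bm\varepsilon_k = r_k d_k$, and to treat the coupling term of \eqref{eq:mode-coupling} as a perturbation of order $\gamma/\alpha$. By It\^o's formula, each uncoupled mode ($\gamma=0$) obeys
\begin{equation*}
dr_k = \bigl(\mu r_k - \alpha r_k^3 + \tfrac{(d-1)D}{r_k}\bigr)dt + \sqrt{2D}\,dW_k^{r}, \qquad
dd_k = -\tfrac{(d-1)D}{r_k^2}\,d_k\,dt + \tfrac{\sqrt{2D}}{r_k}\,P_{d_k}^{\perp}\,dW_k .
\end{equation*}
The radial part is the one-dimensional pitchfork \eqref{eq:pitchfork} with its stable radius at $r^\star=\sqrt{\mu/\alpha}$. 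The angular part is Brownian motion on $S^{d-1}$, time-changed by $r_k^{-2}$, and it carries no deterministic drift at all. This is the key structural fact: the normal form is $O(d)$-equivariant, so nothing in the deterministic dynamics of a single mode rotates $d_k$. The only angular drift then comes from the It\^o correction together with the $\gamma$-terms.

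The first step computes the persistence. From the angular SDE, $m_k(t):=\mathbb{E}[d_k(0)^\top d_k(t)\mid r_k(\cdot)] = \exp\!\bigl(-(d-1)D\int_0^t r_k(s)^{-2}\,ds\bigr)$. Strictly speaking, I will compute $\mathbb{E}[d_k(0)^\top d_k(t)]$ by taking expectation conditional on the radial path. The radial path is not independent of the angular noise, but for isotropic noise the radial and tangential Brownian components are independent, so the conditioning is legitimate. This shows that $m_k(T)>0$ for every finite $T$ on almost every path, which already gives strict positivity. The substantive content is the magnitude bound.

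The second step bounds $\int_0^T r_k^{-2}\,ds$. I split the integral into a transient phase $[0,\tau_r]$ and a saturated phase $[\tau_r,T]$. In the transient phase I use $r_k(0)>\sigma_*\sqrt{D/\mu}$ and a comparison argument: the radial drift dominates the noise once $r_k$ exceeds the noise scale $\sqrt{D/\mu}$, so with high probability $r_k$ stays above $r_k(0)e^{\mu t/2}$ until it reaches $r^\star/2$. Then $\int_0^{\tau_r} r_k^{-2} \lesssim 1/(\mu r_k(0)^2) < 1/(\sigma_*^2 D)$, so the transient contributes at most an $O(\sigma_*^{-2})$ decay to the exponent. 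In the saturated phase, $r_k\approx r^\star$ and the contribution is $(d-1)D\,T/r^{\star2}$, which is $\ll 1$ exactly when $T\ll T_{\mathrm{rand}}$. Together these give $m_k(T)\gtrsim \exp(-c(d-1)/\sigma_*^2 - T/T_{\mathrm{rand}})$, which stays bounded away from zero. The stated control parameter $\sigma_*\mu/(\alpha D)^{1/2}$ is the ratio of the initial radius to the noise scale, expressed relative to $r^\star$.

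The third step adds the coupling. For $\gamma\ll\alpha$, the inter-mode term contributes a tangential drift of size at most $\gamma\sum_{j\ne k}|\bm\varepsilon_j^\top\bm\varepsilon_k|\,\|\bm\varepsilon_j\|/r_k \le \gamma (K-1) r^{\star 2}=(K-1)\gamma\mu/\alpha$. This rotates $d_k$ by at most $O((K-1)\gamma\mu T/\alpha)$. A Gr\"onwall estimate on $d_k(0)^\top d_k(t)$ then keeps the expectation positive, provided that this term is small compared with $m_k(T)$.

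The main obstacle is the transient phase. The comparison argument has to handle the event that noise drives $r_k$ back toward zero, since there the angle is randomized instantly. I will first try a supermartingale or hitting-probability bound for the radial process: the probability of returning to $\sqrt{D/\mu}$ from $\sigma_*\sqrt{D/\mu}$ is about $e^{-c\sigma_*^2}$. On that bad event the contribution to the expectation is bounded by $1$ in absolute value, so it is dominated by the good-event contribution once $\sigma_*$ is moderately large. This is why the proposition assumes $\sigma_*>1$. I expect that making "$\sigma_*>1$" suffice, rather than "$\sigma_*$ large", will require some slack in the constants.
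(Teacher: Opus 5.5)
\paragraph{Comparison with the paper's argument.}
Your skeleton matches the paper's. Both use the polar split $\bm\varepsilon_k=r_kd_k$, angular diffusion at rate $D/r_k^2$, a split of $\int_0^T r_k^{-2}$ into a growth piece of order $(d-1)/\sigma_*^2$ and a saturation piece $T/T_{\mathrm{rand}}$, and then treat $\gamma$ as a perturbation. Your key step, however, is different and strictly stronger. The paper follows a deterministic radial path, treats the angular displacement as Gaussian with variance $\Theta^2(T)=2(d-1)D\int_0^T r^{-2}\,dt$, and uses $\mathbb{E}\cos\theta\approx 1-\Theta^2/2$. That truncation can go negative, so the paper gets positivity only when $\sigma_*^2>(d-1)/2$, which for $d>3$ is stronger than the stated hypothesis $\sigma_*>1$.

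Your skew-product conditioning on the random radial path gives the exact value $\exp(-\Theta^2(T)/2)$; the paper's expression is its first-order expansion. So at $\gamma=0$ you get $\mathbb{E}[d_k(0)^\top d_k(T)]>0$ for every finite $T$ and every $r_k(0)>0$ (for $d\ge 2$, where $r_k$ never hits $0$). The parameter $\sigma_*$ and the window $\tau_r\lesssim T\ll T_{\mathrm{rand}}$ only affect the magnitude. In particular, the obstacle in your last paragraph is not an obstacle for the sign. On the bad event the conditional expectation is still nonnegative, not merely bounded by $1$ in absolute value, so $\sigma_*$ need not be large. One technical fix for the magnitude bound: the event that $r_k(t)\ge r_k(0)e^{\mu t/2}$ for all $t$ has probability zero, because near $t=0$ the fluctuation $\sqrt{2Dt}$ beats the drift $\mu r_k(0)t/2$. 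Use $c\,r_k(0)e^{\mu t/2}$ with $c<1$ instead; this only changes constants.

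\paragraph{Gap in the coupling step.}
This is where your argument does not reach the statement. Your tangential-drift bound $(K-1)\gamma\mu/\alpha$ accumulates linearly in time, so it certifies positivity only while $(K-1)\gamma\mu T/\alpha\ll m_k(T)$. The proposition's window runs up to $T\ll T_{\mathrm{rand}}$, and $\mu T_{\mathrm{rand}}=\mu^2/(2(d-1)\alpha D)$ is large whenever the window is nonempty. Covering the whole window would therefore need the much stronger condition $\gamma\ll 2(d-1)\alpha^2D/\bigl((K-1)\mu^2\bigr)$, not just $\gamma\ll\alpha$.

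A second problem is that once $\gamma>0$, the radial and angular parts of mode $k$ are no longer independent, since the coupling drift depends on every direction. The exact conditional formula for $m_k$ is then lost, and the Gr\"onwall comparison must be run on a good event for the radii rather than applied to $m_k$ directly.

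The paper instead asserts that the overlaps $\bm\varepsilon_j^\top\bm\varepsilon_k$ are suppressed (``modes self-orthogonalize''). That would give a correction of order $\gamma/\alpha$ uniformly in $T$, and some uniform-in-$T$ control of this kind is what your argument is missing. Be aware that the paper's version is itself heuristic and cannot hold literally when $K>d$; its own numerical check uses $K=200$, $d=10$. A workable handle: at equal radii $r$, the coupling rotates $d_k$ at velocity $-\gamma r^2P^{\perp}_{d_k}Sd_k=-\gamma r^2P^{\perp}_{d_k}\bigl(S-\tfrac{K}{d}I\bigr)d_k$, where $S=\sum_j d_jd_j^\top$. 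So only the anisotropy of the frame operator rotates atoms, and you would need to show that this anisotropy stays small, or is time-integrable, over the window.
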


\begin{proof}[Proof sketch (full proof in App.~\ref{app:lottery-proof}).]
At $\gamma = 0$, modes decouple. Each $\bm\varepsilon_k$ obeys an
independent pitchfork SDE whose deterministic flow is \emph{radial}:
the linear growth $\mu\bm\varepsilon_k$ preserves direction, and the
cubic self-saturation $-\alpha\|\bm\varepsilon_k\|^2 \bm\varepsilon_k$
also preserves direction. Noise rotates direction with effective
diffusion coefficient $(d{-}1)D/\|\bm\varepsilon_k\|^2$, which decays
rapidly as $\|\bm\varepsilon_k\|$ grows from $\sigma_*\sqrt{D/\mu}$
toward $\sqrt{\mu/\alpha}$. Over the finite window $[0,T]$, the
accumulated angular variance is
$\Theta^2(T) \approx (d{-}1)/\sigma_*^2 + 2(d{-}1)D(T-\tau_r)/r^{\star 2}$,
which is $O(1/\sigma_*^2)$ as long as the saturation-phase
contribution $(T-\tau_r)/T_{\mathrm{rand}}$ is sub-leading.
For small $\Theta(T)$, $\mathbb{E}[d_k(0)^\top d_k(T)] \approx
1 - \Theta^2(T)/2 > 0$. Adding weak inter-mode coupling $\gamma > 0$
contributes an $O(\gamma/\alpha)$ perturbation that preserves the
sign of $\mathbb{E}[d_k(0)^\top d_k(T)]$ at leading order, since
the coupling vanishes when modes are mutually orthogonal (and modes
self-orthogonalize under the cubic saturation). $\square$
\end{proof}

\paragraph{Empirical proxies for per-atom persistence.}
Equation~\eqref{eq:lottery-positive} is per-atom: at finite
$T$ within the persistence window, each atom's direction positively
correlates with its direction at the bifurcation onset. We do not
have direct access to $d_k(T)^\top d_k(0)$ in real SAEs (the unstable
manifold is not explicitly parameterized); we use the following two
empirically accessible proxies, both of which inherit positivity from
~\eqref{eq:lottery-positive} under the same regime assumption.

\emph{Scalar-projection proxy (toy verification,
App.~\ref{app:lottery-toy}, Fig.~\ref{fig:app:prop3-toy}).}
Fix a reference direction $r \in S^{d-1}$ and consider the per-atom
scalar autocorrelation $A_k(T) := \langle\bm\varepsilon_k(0),r\rangle \cdot
\langle\bm\varepsilon_k(T),r\rangle / (\|\bm\varepsilon_k(0)\|
\|\bm\varepsilon_k(T)\|)$. By~\eqref{eq:lottery-positive},
$\mathbb{E}[A_k(T)] > 0$ for each $k$. Across $K$ i.i.d.\ atoms, the
Spearman correlation between the projection pair
$(\langle\bm\varepsilon_k(0),r\rangle, \langle\bm\varepsilon_k(T),r\rangle)$
converges to a positive population Spearman as $K\!\to\!\infty$;
direct SDE simulation in App.~\ref{app:lottery-toy} confirms
$\rho = 0.948 \pm 0.006$ across $5$ seeds.

\emph{POS-purity proxy (SAE empirics, Sec.~\ref{sec:sae-lottery:claim}).}
Per-atom POS purity is the magnitude of an atom's projection onto a
linguistic POS subspace of $\mathbb{R}^N$. Under
~\eqref{eq:lottery-positive}, this projection is per-atom-autocorrelated
across training time: the early-time and late-time POS purity for the
same atom are positively correlated. The identity-matched Spearman
$\rho_{\mathrm{id}} = 0.41 \pm 0.03$ across atoms
(Sec.~\ref{sec:sae-lottery:claim}) is the population Spearman of this
per-atom autocorrelation; the positive sign is~\eqref{eq:lottery-positive}'s
prediction, the specific magnitude ($+0.41$) depends on the linguistic
geometry of POS classes in Pythia's residual stream and is not
predicted by the theory.

\subsection{Scope and limitations}\label{sec:sae-lottery:scope}

\begin{figure}[t]
\centering
\includegraphics[width=\linewidth]{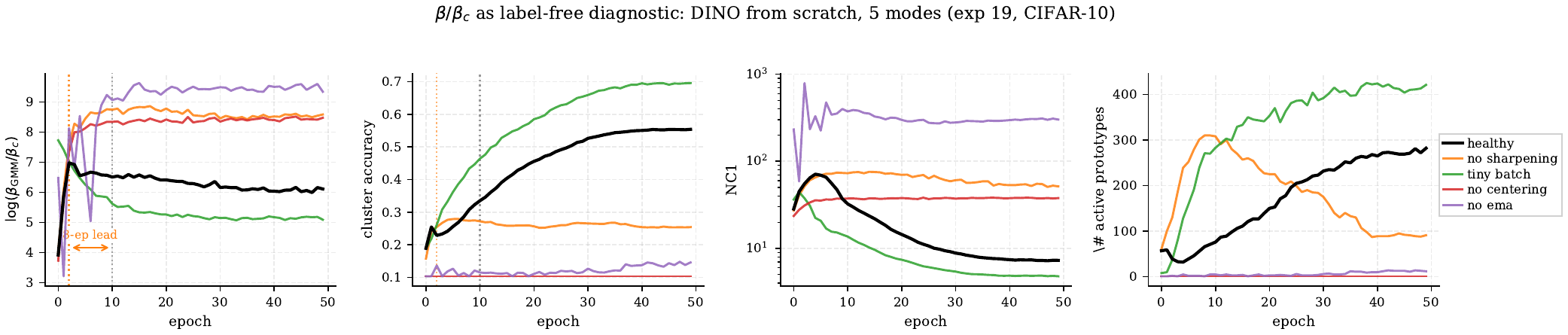}
\caption{\textbf{From-scratch DINO with four perturbed modes ($n=1$
seed per mode).} Healthy (black) versus four perturbations over
$50$ epochs of CIFAR-10 training. The two perturbations relevant for
the diagnostic claim are \emph{no\_sharpening} (orange) and
\emph{tiny\_batch} (green): both separate from healthy in
$\log(\beta/\betac)$ at epoch~2 (vertical dotted line) while cluster
accuracy diverges only by epoch~10, an $\sim$8-epoch lead time. The
two outcomes go in opposite directions; no\_sharpening degrades
($\sim 25\%$ cluster accuracy at epoch 50 vs $\sim 55\%$ healthy);
tiny\_batch improves ($\sim 70\%$). The catastrophic modes
(\emph{no\_centering}, \emph{no\_ema}) collapse from initialization
and provide no epoch-resolution lead-time signal.}
\label{fig:diag:fromscratch}
\end{figure}

The lottery claim is bounded along three axes. (i) \emph{POS purity
is one interpretability dimension}: atoms detecting finer structure
(named entities, sub-POS, phrase patterns) may be mechanistically
interpretable without scoring high on POS purity; our claim concerns
the POS-selective subset only. (ii) \emph{Secondary bifurcations.}
The hierarchical structure (Sec.~\ref{sec:theory}) predicts that
finer features should themselves exhibit phase transitions at
$\betac^{(2)} = 1/\lammax(\Sigma_{\mathrm{within}})$ later in
training; verification at the secondary scale is left to future work.
(iii) \emph{Tagger noise.} spaCy~\citep{honnibal2020spacy} reports
$\sim$$97\%$ POS accuracy on standard benchmarks, with errors
concentrated on ambiguous cases (gerunds, deverbal nouns); reported
$\rho_{\mathrm{id}}$ values are therefore conservative lower bounds
on true atom-level POS selectivity.

\paragraph{Frequency-confound check.}
A natural concern is that the lottery effect is inflated by token
frequency: atoms that lock onto a single high-frequency token
(e.g.\ ``the'') would achieve high POS purity trivially, since
high-frequency function words have stable POS tags. To rule this out,
we stratify atoms by the mean log-frequency of their top-100
activating tokens into five quintiles and re-compute $\rho_{\mathrm{id}}$
within each quintile (3 seeds, $K\!=\!2048$ top-$K$ SAE). The result
(Tab.~\ref{tab:freq-strat}, Fig.~\ref{fig:app:freq-strat}): all five
quintiles show $\rho_{\mathrm{id}} \in [+0.37, +0.51]$, well above
zero ($p < 10^{-3}$ each), with no monotonic increase from rare to
frequent tokens. The frequency confound is not load-bearing.

\begin{table}[H]
\centering
\small
\caption{$\rho_{\mathrm{id}}$ stratified by atom top-100-token mean
log-frequency, 3 seeds, $K\!=\!2048$. Unstratified
$\rho_{\mathrm{id}} = +0.416 \pm 0.010$. The quintile range
$[+0.37, +0.51]$ is small compared to the overall effect over the
random baseline ($\rho\!=\!0$), and Q4 (mid-frequency content words)
rather than Q5 (most frequent function words) has the highest
$\rho_{\mathrm{id}}$, opposite to what a single-token-lock artifact
would predict.}
\label{tab:freq-strat}
\begin{adjustbox}{max width=0.7\textwidth}
\begin{tabular}{lc}
\toprule
\textbf{quintile of atom top-token mean log-freq} & $\boldsymbol{\rho_{\mathrm{id}}}$ \\
\midrule
Q1 (rarest tokens)    & $+0.368 \pm 0.034$ \\
Q2                    & $+0.373 \pm 0.020$ \\
Q3                    & $+0.365 \pm 0.025$ \\
Q4                    & $\mathbf{+0.512 \pm 0.029}$ \\
Q5 (most frequent)    & $+0.424 \pm 0.026$ \\
\bottomrule
\end{tabular}
\end{adjustbox}
\end{table}

\section{Label-free training diagnostic in practice}\label{sec:diagnostic}

The phase-identification ability documented in
Sec.~\ref{sec:arc:grokking} (that $\beta/\betac$ reads the encoder's
current act from its hidden state alone, well before downstream
metrics respond) generalizes beyond grokking. We now test the
same property in a more typical training-health setting: detecting
the onset of representation degradation in a self-supervised
encoder. The bifurcation framework predicts that $\beta/\betac$ tracks
representation-level state. We now ask whether this is operationally
useful as a label-free training health indicator: does $\beta/\betac$
respond earlier than downstream metrics when training drifts away
from a healthy trajectory? We test this in two complementary setups:
DINO trained from scratch with collapse modes injected at
initialization (Sec.~\ref{sec:diag:fromscratch}), and DINO with
interventions applied to a healthy mid-training checkpoint
(Sec.~\ref{sec:diag:intervention}).

\subsection{From-scratch collapse modes}\label{sec:diag:fromscratch}

We train ResNet-18 + DINO on CIFAR-10 for 50 epochs in five
configurations (Fig.~\ref{fig:diag:fromscratch}): healthy;
\emph{no\_centering} (no center-buffer update on teacher logits);
\emph{no\_sharpening} (teacher temperature pinned to student
temperature); \emph{no\_ema} (teacher copies student each step);
\emph{tiny\_batch} (batch size 32 instead of 256).

The two non-catastrophic perturbations
(\emph{no\_sharpening}, \emph{tiny\_batch}) probe the diagnostic in
opposite directions:

\paragraph{Negative case (\emph{no\_sharpening}).}
$\log(\beta/\betac)$ crosses the $0.5$-log-unit deviation threshold
from healthy at epoch~2 while cluster accuracy crosses the
$5\%$-deviation threshold only at epoch~10: \textbf{an $\sim$8-epoch
lead time on a 50-epoch run}, with the downstream outcome being
degradation ($25\%$ vs $55\%$ at epoch 50).

\paragraph{Positive case (\emph{tiny\_batch}).}
A non-catastrophic perturbation (batch $32$ vs $256$) where
$\log(\beta/\betac)$ also separates from healthy by $\sim 0.5$
log-units at epoch~2, while the training-loss difference at that
point is within batch noise. By epoch~50, cluster accuracy has
separated by $15$ percentage points in the \emph{opposite} direction
($70\%$ tiny\_batch vs $55\%$ healthy). The early $\beta/\betac$
separation thus predicts the downstream divergence with $\sim$48-epoch
lead time; and the divergence is positive, not negative. Together
with no\_sharpening, this shows that $\beta/\betac$ reads the
representation's \emph{geometric trajectory state}, not a binary
``health'' signal: distinct $\beta/\betac$ trajectories predict
distinct downstream outcomes, regardless of sign.

\paragraph{Threshold caveat.} The $0.5$-log-unit and $5\%$ thresholds
are heuristic, chosen post-hoc. A full ROC-style analysis (detection
time vs false-positive rate) requires multi-seed evaluation; current
experiments use $n=1$ seed per mode. We report lead time at fixed
heuristic thresholds for visual interpretability and leave the
multi-seed ROC analysis to future work.

\subsection{Mid-training interventions}\label{sec:diag:intervention}

We then test whether the same signal works on a healthy
mid-training checkpoint subjected to a perturbation. Phase A: 2
epochs of healthy DINO on CIFAR-100 (Phase A is shared across modes
via a checkpoint). Phase B: 10 epochs of intervention. Dense probes
at steps $\{1,5,10,25,50,100,200,500\}$ after the Phase B transition
resolve per-batch dynamics.

\begin{figure}[t]
\centering
\includegraphics[width=\linewidth]{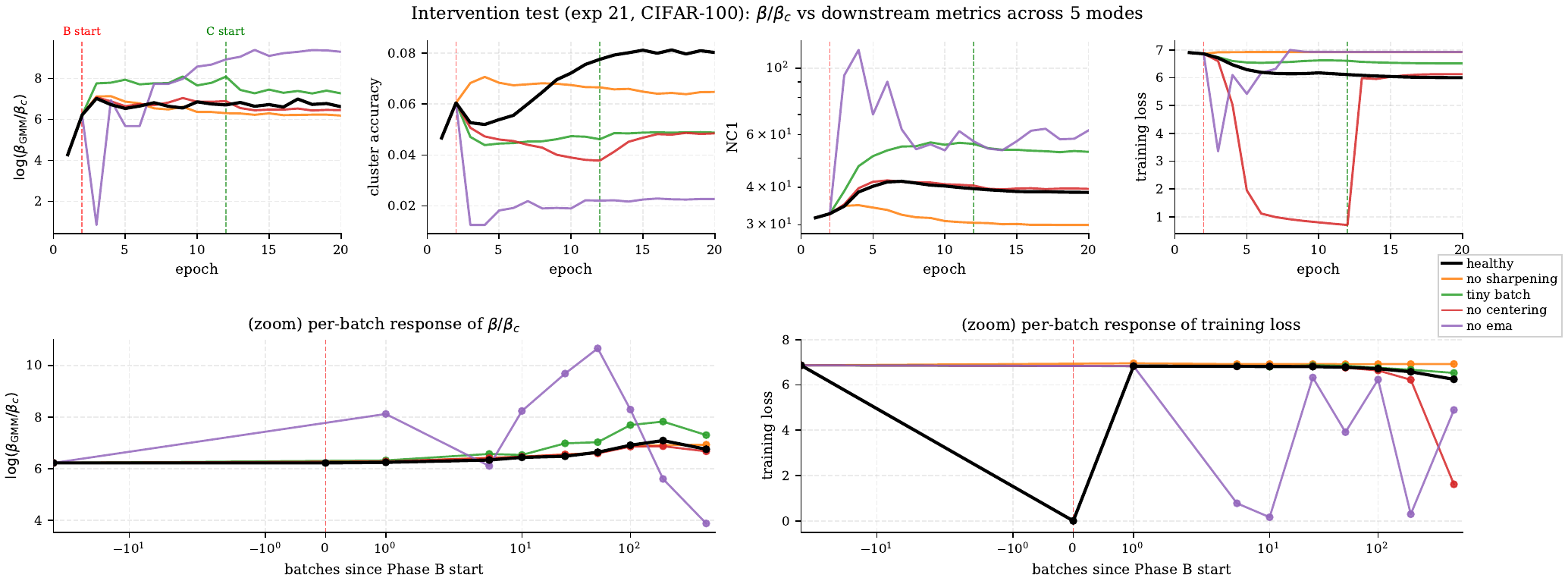}
\caption{\textbf{Mid-training intervention test on CIFAR-100, 5
modes.} Top: epoch-level traces, red dashed line at Phase B start.
Bottom: zoom on Phase B onset (symmetric-log $x$-axis). Two
paper-relevant signatures: (i) \emph{no\_ema} (purple):
$\log(\beta/\betac)$ moves by 1.9 log-units within one batch,
oscillating up to $+10.7$ within 50 batches, while training loss
oscillates uninterpretably; (ii) \emph{tiny\_batch} (green):
$\log(\beta/\betac)$ separates from healthy by 0.5 log-units within
25 batches while training loss differs by $0.06$ (within noise).}
\label{fig:diag:intervention}
\end{figure}

\paragraph{Two paper-relevant signatures.}
\begin{itemize}[leftmargin=*,itemsep=2pt,topsep=2pt]
\item \emph{Per-batch sensitivity (no\_ema).} Within one batch of
  removing the teacher EMA, $\log(\beta/\betac)$ jumps by
  $+1.90$ log-units (from $+6.22$ to $+8.12$). The trajectory then
  oscillates with $\pm 2$-log-unit amplitude. Training loss
  oscillates as well over the same window but in an uninterpretable
  way, swinging between $6.93$ and $0.16$ and not consistently
  tracking representation-level state. The $\beta/\betac$ signal is
  the cleaner per-batch indicator.
\item \emph{Sub-catastrophic lead time (tiny\_batch).} At 25 batches
  into Phase B, $\log(\beta/\betac)\!=\!+6.98$ for tiny\_batch
  versus $+6.48$ for healthy, a $0.50$ log-unit separation.
  Training loss at the same step is $6.869$ versus $6.809$: a
  $0.060$ difference, within batch-to-batch noise. The
  $\beta/\betac$ signal distinguishes the two modes long before
  loss can.
\end{itemize}

\paragraph{Headline.}
Across these two single-seed experimental setups
(Sec.~\ref{sec:diag:fromscratch} + Sec.~\ref{sec:diag:intervention}),
$\beta/\betac$ separates from the healthy trajectory $\sim\!8$ epochs
before cluster accuracy does in the gradual case (no\_sharpening,
degradation; tiny\_batch, improvement) and within a single batch in
the catastrophic intervention case (no\_ema). Both lead-time numbers
are at heuristic post-hoc thresholds and on $n\!=\!1$ seed per
condition; multi-seed ROC analysis is needed before quoting these
as detection times in deployment. The lead time is not directional:
in the tiny\_batch case, the early $\beta/\betac$ separation
\emph{positively predicts} a downstream cluster-accuracy gain. The
signal is derived purely from the encoder's hidden states and a
passive probe; no labels enter at any point.

\section{Conclusion}\label{sec:conclusion}
Concept emergence in feature-learning networks admits a label-free
dynamical indicator. From a single Hessian analysis of a passive
GMM probe on a given encoder representation, we derive a critical precision
$\betac\!=\!1/\lammax(\Cov(z))$ above which prototypes pitchfork; we
prove a finite-time crossing theorem for co-evolving encoders that
eventually spread the latent representation sufficiently
(Proposition~\ref{prop:endogenous}), and identify a post-critical
metastable regime under finite dissipation
(Remark~\ref{rem:metastability}). The encoder--probe race in
$\beta(t)/\betac(t)$ traces one of four kinematic regimes (full V,
fold-back, delayed escape, no arc) governed by three binary axes,
which we verify across SAEs, SSL on CIFAR-10/100, grokking with
multi-seed dissipation-threshold control, and a rotation-prediction
negative control. The sharpest empirical confirmation is per-atom:
in SAE training the unstable subspace is shared across atoms at the
crossing, and we observe the predicted lottery; the top decile
of atoms ranked at $5\%$ of training reaches $12\times$ baseline POS
purity at convergence in two distinct SAE architectures, across
$K\!\in\![256,8192]$, with three-seed reproducibility
($\rho_{\mathrm{id}}\!=\!+0.41\!\pm\!0.04$, all $p\!<\!10^{-80}$).
The same $\beta/\betac$ quantity acts as a practical training-health
indicator with multi-epoch lead time over downstream metrics in
gradual collapse modes and per-batch sensitivity in catastrophic
interventions, in our single-seed proof-of-concept experiments;
multi-seed ROC characterization remains future work. Across these settings the bifurcation onset is the
moment at which representation structure first becomes available,
at which atom-level identity acquires predictive content for
convergence interpretability, and at which any subsequent
collapse can already be detected.

\bibliographystyle{plainnat}
\bibliography{paper}

\appendix

\section{Full Hessian derivation}\label{app:hessian}
We compute the second derivative of $\mathcal{L}_\mu$ at the symmetric
state $\mathcal{S}_0=\{\mu_k=\bar z\}$ in coordinates $(k,a)$, where
$k\!\in\!\{1,\dots,K\}$ indexes the component and $a\!\in\!\{1,\dots,d\}$
indexes the spatial direction.

\subsection{Explicit form of the Hessian}\label{app:hessian:explicit}

\begin{theorem}\label{thm:hessian}
At the symmetric collapsed state $\mathcal{S}_0$,
\begin{equation}
\left.\frac{\partial^2 \mathcal{L}_\mu}
            {\partial \mu_k^a\,\partial \mu_l^b}\right|_{\mathcal{S}_0}
\;=\;
\frac{\beta}{K}\,\delta_{kl}\,\delta^{ab}
\;-\;
\frac{\beta^2}{K}\,\Bigl(\delta_{kl}-\tfrac{1}{K}\Bigr)\,\Sigma^{ab}.
\label{eq:hessian-full}
\end{equation}
\end{theorem}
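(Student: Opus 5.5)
The plan is a direct two-step differentiation of the log-sum-exp, followed by evaluation at $\mathcal{S}_0$. Write $s_k(z) := -\tfrac{\beta}{2}\|z-\mu_k\|^2$ and $p_k(z) := p(k\mid z) = \mathrm{softmax}_k(s)$, so that $\mathcal{L}_\mu = -\mathbb{E}_z[\mathrm{LSE}_k(s_k)]$. Any additive constants, such as $\log K$ or the Gaussian normaliser in $\beta$, do not depend on $\mu$ and drop out. I will treat $\beta$ as fixed throughout, since the theorem concerns the $\mu$-block of the Hessian only. I will also interchange $\mathbb{E}_z$ with differentiation; this is harmless because the integrand is smooth in $\mu$ and has at most quadratic growth in $z$, and $z$ has a finite second moment (it is needed for $\Sigma$ to exist).

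\textbf{First derivative.} Two elementary identities drive the computation: $\partial\,\mathrm{LSE}/\partial s_k = p_k$ and $\partial s_k/\partial\mu_l^b = \beta\,\delta_{kl}(z-\mu_l)^b$. Together they give
\[
\frac{\partial\mathcal{L}_\mu}{\partial\mu_k^a} = -\beta\,\mathbb{E}_z\bigl[p_k(z)\,(z-\mu_k)^a\bigr].
\]
At $\mathcal{S}_0$ every $s_k$ coincides, so $p_k\equiv 1/K$ and the gradient is $-\tfrac{\beta}{K}\,\mathbb{E}[z-\bar z]=0$. This confirms the stationarity asserted in the main text.

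\textbf{Second derivative.} I differentiate the expression above with respect to $\mu_l^b$ using the product rule. Two terms appear.
\begin{enumerate}[leftmargin=*,itemsep=1pt,topsep=2pt]
\item The derivative of the factor $(z-\mu_k)^a$ gives $-\delta_{kl}\delta^{ab}$. Multiplied by the prefactor $-\beta\,p_k$, this contributes $+\beta\,\mathbb{E}[p_k]\,\delta_{kl}\delta^{ab}$.
\item The derivative of the softmax uses the Jacobian $\partial p_k/\partial s_m = p_k(\delta_{km}-p_m)$ and the chain rule. This gives $\partial p_k/\partial\mu_l^b = \beta\,p_k(\delta_{kl}-p_l)(z-\mu_l)^b$, so the term contributes $-\beta^2\,\mathbb{E}\bigl[p_k(\delta_{kl}-p_l)(z-\mu_k)^a(z-\mu_l)^b\bigr]$.
\end{enumerate}

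\textbf{Evaluation at $\mathcal{S}_0$.} Substituting $p_k=p_l=1/K$ and $\mu_k=\mu_l=\bar z$, the first term becomes $\tfrac{\beta}{K}\delta_{kl}\delta^{ab}$. In the second term the responsibilities are now constant in $z$, so the factor $(\delta_{kl}-\tfrac1K)/K$ comes out of the expectation. What remains is $\mathbb{E}[(z-\bar z)^a(z-\bar z)^b]=\Sigma^{ab}$. This yields exactly \eqref{eq:hessian-full}, with $\Sigma$ the population (not unbiased) covariance of $z$.

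\textbf{Main obstacle.} The only step needing care is the softmax Jacobian term. One must remember that $p_k$ depends on every $\mu_l$ through the normaliser, not only on $\mu_k$. That dependence is what produces the off-diagonal $-\tfrac1K$ coupling. It is also what makes the constant (symmetric) component vector $w$ an eigenvector with eigenvalue $\beta/K$, because $\sum_l(\delta_{kl}-\tfrac1K)=0$. As a sanity check I would verify this, together with the anti-symmetric eigenvalues $\tfrac{\beta}{K}(1-\beta\sigma_i^2)$, by contracting \eqref{eq:hessian-full} against $\xi_l^b=w_lu^b$.
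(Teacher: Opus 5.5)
Your proposal is correct and follows the same route as the paper's proof. Both differentiate the log-sum-exp once via the softmax identity, differentiate again with the product rule and the softmax Jacobian $\partial p_k/\partial s_m = p_k(\delta_{km}-p_m)$, and then set $p_k\equiv 1/K$ and $\mu_k=\bar z$ so that the expectation yields $\Sigma^{ab}$. The only cosmetic differences are that you carry the overall minus sign from the start instead of negating at the end, and you explicitly justify exchanging $\mathbb{E}_z$ with differentiation, which the paper leaves implicit.
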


\begin{proof}
For each sample $z$ write $f_k(z;\mu) := -\tfrac{\beta}{2}\|z-\mu_k\|^2$
and $g(z;\mu) := \mathrm{LSE}_k\,f_k(z;\mu)$, so the per-sample loss is
$-g(z;\mu)$. We need $-\partial^2 g/(\partial \mu_k^a\,\partial \mu_l^b)$
at $\mathcal{S}_0$, averaged over $z$.

\paragraph{First derivative.} By the softmax identity for $\mathrm{LSE}$,
\[
\frac{\partial g}{\partial \mu_k^a}
\;=\;
\sum_j p_j(z;\mu) \,\frac{\partial f_j}{\partial \mu_k^a}
\;=\;
p_k(z;\mu)\cdot \beta(z^a - \mu_k^a),
\]
where $p_k(z;\mu) := \mathrm{softmax}_k(f_\cdot(z;\mu))$.

\paragraph{Second derivative.} Differentiating once more,
\[
\frac{\partial^2 g}{\partial \mu_k^a\,\partial \mu_l^b}
\;=\;
\underbrace{\frac{\partial p_k}{\partial \mu_l^b}\cdot \beta(z^a - \mu_k^a)}_{\text{(I)}}
\;+\;
\underbrace{p_k\cdot \beta\,\bigl(-\delta_{kl}\,\delta^{ab}\bigr)}_{\text{(II)}}.
\]
Using the standard softmax derivative identity
$\partial p_k/\partial f_l = p_k(\delta_{kl}-p_l)$ together with
$\partial f_l/\partial \mu_l^b = \beta(z^b-\mu_l^b)$,
\[
\frac{\partial p_k}{\partial \mu_l^b}
\;=\;
p_k(\delta_{kl}-p_l)\,\beta(z^b - \mu_l^b),
\]
so
\[
\frac{\partial^2 g}{\partial \mu_k^a\,\partial \mu_l^b}
\;=\;
\beta^2\, p_k(\delta_{kl}-p_l)\,(z^a-\mu_k^a)(z^b-\mu_l^b)
\;-\;
\beta\,p_k\,\delta_{kl}\,\delta^{ab}.
\]

\paragraph{Evaluation at $\mathcal{S}_0$.} At $\mu_k=\mu_l=\bar z$ we have
$p_k=p_l=1/K$ and $(z^a-\mu_k^a)=(z^a-\bar z^a)$, so
\[
\mathbb{E}_z\!\left[(z^a-\bar z^a)(z^b-\bar z^b)\right]\;=\;\Sigma^{ab},
\qquad
p_k(\delta_{kl}-p_l)\big|_{\mathcal{S}_0} \;=\; \tfrac{1}{K}\bigl(\delta_{kl}-\tfrac{1}{K}\bigr).
\]
Taking the expectation over $z$ and negating (recall the loss is $-g$),
\[
-\mathbb{E}_z\!\left[\frac{\partial^2 g}{\partial \mu_k^a\,\partial \mu_l^b}\right]_{\mathcal{S}_0}
\;=\;
\tfrac{\beta}{K}\,\delta_{kl}\,\delta^{ab}
\;-\;
\tfrac{\beta^2}{K}\bigl(\delta_{kl}-\tfrac{1}{K}\bigr)\,\Sigma^{ab},
\]
which is~\eqref{eq:hessian-full}.
\end{proof}

The first term in~\eqref{eq:hessian-full} is the curvature of each
isotropic Gaussian component (positive and isotropic); the second is the
softmax-coupling term, which is sign-indefinite and picks up the spatial
covariance $\Sigma$.

\subsection{Eigendecomposition}\label{app:hessian:eig}

The Hessian~\eqref{eq:hessian-full} is separable across the component
($k,l$) and spatial ($a,b$) indices, so it admits eigenvectors of product
form $\xi_k^a = w_k\,u^a$ with $w\!\in\!\mathbb{R}^K,
u\!\in\!\mathbb{R}^d$. Acting on such a vector,
\[
(H\xi)_k^a
\;=\;
\tfrac{\beta}{K}\,w_k\,u^a
\;-\;
\tfrac{\beta^2}{K}\,(\Sigma u)^a \,\bigl(w_k - \bar w\bigr),
\qquad
\bar w := \tfrac{1}{K}\!\sum_l w_l.
\]
The component vector $w$ couples only through its mean $\bar w$, which
selects between two channels.

\begin{description}[leftmargin=*,style=nextline]
\item[Symmetric channel ($w_k=c$ for all $k$, so $w_k-\bar w=0$):]
The action reduces to $(H\xi)_k^a = (\beta/K)\,c\,u^a$. The eigenvalue is
$\beta/K$, independent of $\Sigma$ and of $u$. All $d$ symmetric-channel
modes have eigenvalue $\beta/K>0$, so they are always stable. Geometrically
these are bulk translations of all $K$ prototypes together.

\item[Anti-symmetric channel ($\bar w = 0$, $(K{-}1)$-fold degenerate in
component space):]
Then $w_k-\bar w = w_k$ and
\[
(H\xi)_k^a
\;=\; \tfrac{\beta}{K}\,w_k\,\bigl[u^a - \beta\,(\Sigma u)^a\bigr].
\]
So $\xi = w\!\otimes\!u$ is an eigenvector with eigenvalue
$(\beta/K)(1-\beta\,\sigma^2)$ whenever $u$ is an eigenvector of $\Sigma$
with eigenvalue $\sigma^2$. The anti-symmetric spatial spectrum is
\begin{equation}
\lambda^\perp_i(\beta)
\;=\;
\frac{\beta}{K}\,\bigl(1 - \beta\,\sigma_i^2\bigr),
\qquad i=1,\dots,d,
\label{eq:antisymm-spectrum}
\end{equation}
each $(K-1)$-fold degenerate in component space.
\end{description}

\subsection{Critical precision}\label{app:hessian:critical}

The lowest anti-symmetric eigenvalue is
\[
\lambda^\perp_1(\beta)
\;=\;
\tfrac{\beta}{K}\bigl(1-\beta\,\lammax(\Sigma)\bigr).
\]
For $\beta>0$ it crosses zero exactly when $\beta\,\lammax(\Sigma)=1$:
\[
\boxed{\;\betac \;=\; \frac{1}{\lammax(\Sigma)}.\;}
\]
For $\beta<\betac$, every anti-symmetric eigenvalue is positive and
$\mathcal{S}_0$ is a local minimum of $\mathcal{L}_\mu$. For
$\beta>\betac$, $\lambda^\perp_1<0$ and $\mathcal{S}_0$ is a saddle with
$(K-1)$ unstable directions in component space combined with the principal
eigenvector of $\Sigma$ in spatial space.

\paragraph{Parametrization note vs.\ \citet{rose1990statistical}.} Rose
states the soft-$K$-means critical temperature as $T_c =
2\,\lammax(\Sigma)$ in the convention
$p(k\!\mid\!z)\propto\exp(-\|z-\mu_k\|^2/T)$. Our convention
$p(k\!\mid\!z)\propto\exp(-\tfrac{\beta}{2}\|z-\mu_k\|^2)$ corresponds to
$\beta = 2/T$. Substituting, Rose's $T_c=2\,\lammax$ becomes
$\betac = 2/T_c = 1/\lammax$, matching our result; the factor of $2$ must
be translated when crossing conventions. We verify this numerically by
performing a direct eigenvalue scan of the full Hessian on toy bimodal
data: the lowest eigenvalue zero-crosses at
$\beta = 1/\lammax(\Sigma)$ to four decimals, exactly as predicted.

\subsection{Geometric interpretation of the unstable mode}

At $\beta=\betac$ the zero-eigenvalue mode is the tensor product
$w\!\otimes\!u$ with $\sum_k w_k=0$ and $u$ the principal eigenvector of
$\Sigma$. The component vector $w$ assigns signs to the $K$ prototypes;
the spatial vector $u$ picks the axis of maximum data variance. Any
anti-symmetric $w$ is in the unstable subspace, so the assignment of
``which prototype goes where'' is not fixed by the linear analysis: it
is decided by initialization noise and by the cubic terms in the
pitchfork normal form~\eqref{eq:pitchfork}.

\section{Proof of Proposition~\ref{prop:endogenous}}\label{app:proof}
Let $\Delta(t) := \beta(t)-\betac(t)$ where
$\betac(t)=1/\lammax(\Cov(z(t)))$ depends on the encoder state at time
$t$. Both $\beta(t)$ and $\betac(t)$ are continuous in $t$ for any
continuous-in-time training dynamic (gradient flow, or piecewise-constant
extension of any gradient-step iteration), so $\Delta$ is continuous.

\paragraph{Initial-time inequality.} At $t=0$ the encoder is at random
initialization; the marginal $z(0)=\mathrm{enc}(x;\varphi(0))$ has not yet
been spread by training. Meanwhile $\beta$ starts at the user-chosen
initial value $\beta(0) = \exp(\log\beta_0)$ which is below the data
scale by design (in all our experiments $\log\beta_0 = -2.5$). Hence
$\beta(0) < \betac(0)$, i.e.\ $\Delta(0) < 0$.

\paragraph{Asymptotic positivity.} By hypothesis~(1) of
Prop.~\ref{prop:endogenous},
\[
\liminf_{t\to\infty} \beta(t) > c_1.
\]
By hypothesis~(3),
\[
\limsup_{t\to\infty} \betac(t) < c_1.
\]
Subtracting,
\[
\liminf_{t\to\infty} \Delta(t)
\;\ge\;
\liminf_{t\to\infty}\beta(t) - \limsup_{t\to\infty}\betac(t)
\;>\;
0.
\]

\paragraph{Intermediate value theorem.} $\Delta$ is continuous with
$\Delta(0)<0$ and $\liminf_{t\to\infty}\Delta(t)>0$, so there exists a
finite $T$ with $\Delta(T)>0$. By IVT applied to $\Delta$ on $[0,T]$
there is a $t^\star\!\in\!(0,T)$ with $\Delta(t^\star) = 0$, i.e.\
$\beta(t^\star) = \betac(t^\star)$.

\paragraph{Instability at the crossing.} At $t=t^\star$ the static
Hessian analysis of Appendix~\ref{app:hessian} applies pointwise with
$\Sigma = \Cov(z(t^\star))$, yielding $\betac(t^\star) =
1/\lammax(\Cov(z(t^\star)))$. By construction $\beta(t^\star) =
\betac(t^\star)$, so the lowest anti-symmetric eigenvalue
$\lambda^\perp_1$ from~\eqref{eq:antisymm-spectrum} equals zero at
$t^\star$; immediately past it (assuming the encoder continues to spread
the latent so that $\betac$ continues to decrease while $\beta$ continues
to increase) we have $\beta>\betac$, $\lambda^\perp_1<0$, and the
symmetric state becomes a saddle. The prototypes therefore begin to
pitchfork at $t^\star$. $\square$

\paragraph{Remark on the substantive content of the hypotheses.}
Hypothesis~(1) holds for any likelihood-maximizing GMM step: the NLL is
a monotone-decreasing function of $\beta$ at fixed prototype positions in
the small-$\beta$ regime, and Adam/SGD on the NLL pushes $\beta$ upward;
we have not observed an experiment in which $\beta$ decreases on average.
Hypothesis~(2) is the assertion that the encoder spreads the latent over
time, which is the defining property of any information-preserving SSL
objective (contrastive, predictive, autoencoder reconstruction).
Hypothesis~(3) requires that the encoder eventually finds enough variance
for the GMM to resolve clusters at the chosen $\beta$ scale; this fails
for collapsing encoders (e.g.\ DINO without centering or EMA, see
Sec.~\ref{sec:diag:fromscratch}), in which case $\betac(t)$ diverges and
no crossing occurs; consistent with the framework's prediction that
those encoders do not undergo healthy bifurcation.

\section{Empirical characterization of post-critical escape under weight-decay intervention}\label{app:escape}

This appendix is a methodological cash-out of
Remark~\ref{rem:metastability} and the control experiment of
Sec.~\ref{sec:arc:grokking}: it characterizes how the metastable
plateau length scales with the encoder's dissipation strength on the
grokking setup, and identifies which of the two physically distinct
escape regimes (activation- vs.\ drift-dominated) the data sit in.
The output is empirical, not theoretical; we do \emph{not} claim a
closed-form prediction for $\tau_{\mathrm{esc}}$ from the bifurcation
framework alone; the qualitative predictions
($\tau_{\mathrm{esc}}\!\to\!\infty$ as $\gamma\!\to\!0$, monotonicity
in $\gamma$) are content of Remark~\ref{rem:metastability}.

We work in the pitchfork normal form~\eqref{eq:eps-ode} post-critically
($\mu := \beta - \betac > 0$), with an additional drift
$-\gamma\, U'(\varepsilon)$ from the encoder's upstream loss component
that favors the memorization basin. The full Langevin dynamics is
\begin{equation}
\dot\varepsilon \;=\; \mu\,\varepsilon - \alpha\varepsilon^3
                   - \gamma\, U'(\varepsilon)
                   + \eta(t),
\qquad
\langle \eta(t)\eta(t')\rangle = 2D\,\delta(t-t').
\label{eq:eps-langevin}
\end{equation}

\paragraph{Effective potential.} The deterministic drift is
$-V'_{\mathrm{eff}}(\varepsilon)$ with
\begin{equation}
V_{\mathrm{eff}}(\varepsilon) \;=\; -\tfrac{1}{2}\mu\,\varepsilon^2
                        + \tfrac{1}{4}\alpha\,\varepsilon^4
                        + \gamma\, U(\varepsilon).
\label{eq:Veff}
\end{equation}
At $\gamma = 0$, $V_{\mathrm{eff}}$ has a saddle at $\varepsilon = 0$ (already
unstable, since $\mu > 0$) and two symmetric broken-symmetry minima at
$\pm\varepsilon^\star$ with $\varepsilon^\star = \sqrt{\mu/\alpha}$. For
the grokking setup of Sec.~\ref{sec:arc:grokking}, the loss landscape
near $\varepsilon = 0$ is dominated by the memorization plateau: even
post-critically, the encoder remains in a meta-stable region until
$U$ (the regularization landscape) tilts $V_{\mathrm{eff}}$ enough to
make $\varepsilon = 0$ an effective saddle in the trans-basin sense.

\paragraph{Two regimes.} Escape from the memorization basin (centered
at $\varepsilon\!\approx\!0$) to the broken-symmetry basin
(at $\varepsilon\!\approx\!\varepsilon^\star$) is governed by the
competition between two timescales: \emph{activation} (Kramers
barrier crossing, set by $\Delta S_{\mathrm{eff}}/D$) and
\emph{deterministic drift} (relaxation rate set by the unstable
direction near the saddle, $\sim\!1/(\mu + \gamma U''(0))$). Which
regime dominates depends on whether the noise must climb a tall
barrier ($\Delta S_{\mathrm{eff}} \gg D$) or whether the tilt
$\gamma\,U'(\varepsilon)$ already removes the barrier so that escape
proceeds by gradient flow.

\paragraph{Activation-dominated (Kramers) regime.} If
$V_{\mathrm{eff}}$ retains a barrier of height $\Delta S$ at $\gamma=0$
and the dissipation contribution tilts it linearly,
\begin{equation}
\Delta S_{\mathrm{eff}}(\gamma) \;=\; \Delta S - \kappa\,\gamma + O(\gamma^2),
\qquad
\kappa = U(0) - U(\varepsilon^\star) > 0,
\label{eq:tilted-barrier}
\end{equation}
then for $\Delta S_{\mathrm{eff}} \gg D$ classical Kramers theory
\citep{kramers1940brownian,berglund2013kramers} gives
\begin{equation}
\tau_{\mathrm{esc}} \;\asymp\; \tau_0\,\exp\!\Bigl(\Delta S_{\mathrm{eff}} / D\Bigr)
                   \;=\; \tau_0 \exp\!\Bigl(\frac{\Delta S - \kappa\gamma}{D}\Bigr),
\label{eq:kramers-derived}
\end{equation}
with $\tau_0 \sim 1/(\mu \cdot \omega_b) = O(1/(\beta-\betac))$.

\paragraph{Drift-dominated regime.} If instead the bare barrier
$\Delta S$ is small (or the tilt has already collapsed it,
$\kappa\gamma \gtrsim \Delta S$), escape is set by the deterministic
linear instability near the post-critical saddle. Linearizing
\eqref{eq:eps-langevin} about $\varepsilon = 0$,
\begin{equation*}
\dot\varepsilon \;\approx\; \bigl(\mu + \gamma\,U''(0)\bigr)\,\varepsilon + \eta(t),
\end{equation*}
the system grows exponentially with rate
$\lambda(\gamma) := \mu + \gamma U''(0)$ until nonlinear saturation
at $|\varepsilon|\!\sim\!\varepsilon^\star$. The escape time is then
\begin{equation}
\tau_{\mathrm{esc}} \;\sim\; \frac{1}{\lambda(\gamma)}\,
   \ln\!\frac{\varepsilon^\star}{\sqrt{D/\lambda(\gamma)}},
\label{eq:drift-escape}
\end{equation}
where $\sqrt{D/\lambda}$ is the equilibrium spread inside the linear
region. In the limit $\gamma U''(0) \gg \mu$, this collapses to a
power-law $\tau_{\mathrm{esc}} \asymp A\,\gamma^{-p}$ with leading
exponent $p=1$ from the prefactor; nonlinearities in $U$ (so that the
effective drift growth rate is $\gamma U''(0) + O(\gamma^2 U'''(0))$)
and the logarithmic noise correction inflate the effective exponent
to $p \ge 1$.

\paragraph{Empirical fit and regime selection (Table~\ref{tab:wd-sweep}).}
The 6-point WD sweep at $p\!=\!97$, train fraction $0.3$, with $n=3$
seeds per WD level and a $200{,}000$-step horizon, gives
$\tau_{\mathrm{esc}}$ at $\gamma \in \{0.1, 0.2, 0.3, 0.5, 0.7, 1.0\}$
ranging $8\,900$ to $147\,167$ steps. Fitting the two functional
forms~\eqref{eq:kramers-derived} and~\eqref{eq:drift-escape} by least
squares in $\log\tau_{\mathrm{esc}}$:
\begin{align*}
\text{Power-law:} \quad & \log\tau_{\mathrm{esc}} \;=\; 9.11 - 1.225\,\log\gamma,
\qquad \chi^2 = 1.52, \quad \mathrm{AIC} = 5.52, \\
\text{Kramers:}   \quad & \log\tau_{\mathrm{esc}} \;=\; 11.65 - 2.631\,\gamma,
\qquad \chi^2 = 20.78, \quad \mathrm{AIC} = 24.78.
\end{align*}
The model-comparison gap is $\Delta\mathrm{AIC} = +19.26$ (and
$\Delta\mathrm{BIC} = +19.27$) in favor of the power-law form.
Per-point residuals:
\begin{center}
\small
\begin{tabular}{rrrr}
\toprule
$\gamma$ & observed & Kramers & power-law \\
\midrule
$0.1$ & $147\,167$ & $88\,238$  & $151\,987$ \\
$0.2$ & $88\,033$  & $67\,823$  & $65\,006$  \\
$0.3$ & $38\,150$  & $52\,132$  & $39\,554$  \\
$0.5$ & $22\,433$  & $30\,800$  & $21\,152$  \\
$0.7$ & $16\,633$  & $18\,197$  & $14\,006$  \\
$1.0$ & $8\,900$   & $8\,263$   & $9\,047$   \\
\bottomrule
\end{tabular}
\end{center}
The Kramers form systematically under-predicts escape times at low
$\gamma$ (by a factor of $\sim\!1.7$ at WD$=\!0.1$): the exponential
decay in $\gamma$ overshoots the actual slow increase of
$\tau_{\mathrm{esc}}$ as dissipation is reduced. The power-law form,
by contrast, is within $\sim\!10\%$ at every WD level (the largest
residual is at $\gamma\!=\!0.2$, well within the $\sim\!30\%$
seed-level CV). The fitted exponent $p\!=\!1.23$ lies in the
predicted $p \ge 1$ range. We conclude that, in our grokking
configuration, the post-critical metastable plateau is escaped by
\emph{drift-dominated} dynamics rather than by activated
barrier-crossing: the WD tilt is large enough relative to both the
bare barrier and the noise scale that the deterministic flow on
$V_{\mathrm{eff}}$ governs the escape time.

\paragraph{Limitations.} The reduction to one-dimensional Langevin
dynamics on $\varepsilon$ assumes (a) the trans-basin geometry is
quasi-static during the metastable plateau (encoder evolution is slow
relative to escape attempts) and (b) the SGD noise is
well-approximated as Langevin \citep{liu2023phase}. The
regime-selection conclusion; drift-dominated rather than
activation-dominated; is specific to the modular-arithmetic
grokking setup at $\mu\!\sim\!10^{-4}$--$10^{-3}$, train fraction
$0.3$, and WD$\in[0.1, 1.0]$; in deeper or wider barriers (e.g.,
larger $p$, much smaller train fraction) the activation regime may
re-emerge. Verifying the regime classification at the microscopic
level for the grokking circuit is beyond the scope of this paper.

\section{Proof of Proposition~\ref{prop:lottery} (lottery mode-selection)}\label{app:lottery-proof}

We work in the unstable subspace at $\mu := \beta - \betac > 0$
(small) and analyze the dynamics~\eqref{eq:mode-coupling}. Below we
sketch the proof of~\eqref{eq:lottery-positive} in two steps:
decoupled-mode analysis ($\gamma\!=\!0$), then perturbative inclusion
of inter-mode coupling. Throughout we parametrize each mode by its
magnitude $r_k = \|\bm\varepsilon_k\|$ and direction $d_k =
\bm\varepsilon_k / r_k$ on the sphere $S^{d-1}$.

\subsection{Decoupled-mode analysis ($\gamma=0$)}

Each mode obeys an independent SDE in $\mathbb{R}^d$:
\begin{equation}
\dot{\bm\varepsilon} \;=\; \mu\,\bm\varepsilon - \alpha\,\|\bm\varepsilon\|^2 \bm\varepsilon
                      + \bm\eta(t),
\qquad
\langle\bm\eta(t)\bm\eta(t')^\top\rangle = 2D\,\delta(t-t')\,\bm I.
\end{equation}
Polar decomposition $\bm\varepsilon = r\,d$ ($r > 0$, $d \in S^{d-1}$)
yields, by Itô's lemma,
\begin{align}
\dot r &= \mu r - \alpha r^3 + \tfrac{(d-1)D}{r} + \xi_r(t),
       \quad \langle\xi_r^2\rangle = 2D,
\label{eq:r-dynamics}\\
\dot d &= -\tfrac{1}{r}\bigl(\bm I - d d^\top\bigr) \bm\xi_\perp(t),
       \quad \langle\xi_\perp \xi_\perp^\top\rangle = 2D\bigl(\bm I - d d^\top\bigr).
\label{eq:d-dynamics}
\end{align}
The radial dynamics~\eqref{eq:r-dynamics} are
deterministic-dominated post-criticality: $r$ grows from the initial
$r_0 = \|\bm\varepsilon_k(0)\| = \sigma_*\sqrt{D/\mu}$ (with
$\sigma_*\!>\!1$ by assumption) toward $r^\star = \sqrt{\mu/\alpha}$
(attractor) on timescale $\tau_r \sim (1/\mu)\log(\sigma_*\sqrt{\mu/(\alpha D)})$.
The angular dynamics~\eqref{eq:d-dynamics} are pure noise driven,
with effective diffusion coefficient $D_\theta(t) = D/r(t)^2$.

\paragraph{Angular drift integral (finite-$T$ scope).}
The naive infinite-time integral $\int_0^\infty 2(d{-}1)D/r(t)^2\,dt$
diverges in the saturation phase $t > \tau_r$ where
$r(t) \approx r^\star$ is constant; angular Brownian motion on
the saturated attractor randomizes the direction in the strict
$t \to \infty$ limit. Prop.~\ref{prop:lottery} therefore applies on
a \emph{finite} window $[0, T]$; the accumulated angular variance
splits into growth and saturation contributions
\begin{equation}
\Theta^2(T) \;=\;
\underbrace{\int_0^{\tau_r}\!\!\tfrac{2(d-1)D}{r_0^2 e^{2\mu t}}\,dt}_{\text{growth, } \sim (d-1)/\sigma_*^2}
\;+\;
\underbrace{\tfrac{2(d-1)D\,(T - \tau_r)}{r^{\star 2}}}_{\text{saturation, } (T-\tau_r)/T_{\mathrm{rand}}},
\label{eq:Theta2-final}
\end{equation}
with $T_{\mathrm{rand}} := r^{\star 2}/(2(d{-}1)D)$ the spherical
randomization timescale. The growth contribution evaluates to
$(d{-}1)D/(\mu r_0^2)\,(1 - e^{-2\mu\tau_r}) \approx (d{-}1)/\sigma_*^2$.
The saturation contribution is small as long as
$T - \tau_r \ll T_{\mathrm{rand}}$; in this regime
\begin{equation}
\Theta^2(T) \;\approx\; \tfrac{d-1}{\sigma_*^2}
\;+\; O\bigl(T/T_{\mathrm{rand}}\bigr).
\end{equation}
\emph{The $\sigma_*^2$ in the denominator is critical.} When the
initial perturbation is well above the noise floor ($\sigma_* \gg 1$)
and $T$ is within the persistence window, $\Theta^2(T) \ll 1$ and the
direction is preserved with negligible spread. When $\sigma_* = 1$
exactly, $\Theta^2 \sim (d-1)$ and direction-preservation is
\emph{not guaranteed} from this argument alone; this is the regime
that requires the full hypothesis bound $\sigma_* > 1$ in
Prop.~\ref{prop:lottery}. For empirically relevant SAE / encoder
training, $T_{\mathrm{rand}} \gg T_{\mathrm{train}}$ by orders of
magnitude (the saturated $r^\star$ is large relative to noise),
placing all observed checkpoints inside the persistence window; the
$T \to \infty$ randomization regime is unphysical for finite-horizon
training.

\paragraph{From angular spread to positive expected cosine.}
For Gaussian-distributed angular displacement with variance
$\Theta^2(T)$ on $S^{d-1}$ (von Mises--Fisher-like concentration
around $d_k(0)$),
\begin{equation}
\mathbb{E}[d_k(0)^\top d_k(T)]
\;=\; \mathbb{E}[\cos\theta_k(T)]
\;\approx\; 1 - \tfrac{\Theta^2(T)}{2} + O(\Theta^4)
\;=\; 1 - \tfrac{d-1}{2\sigma_*^2} + O\bigl(\sigma_*^{-4},
T/T_{\mathrm{rand}}\bigr).
\end{equation}
This is strictly positive when $\sigma_*^2 > (d{-}1)/2$ and
$T \ll T_{\mathrm{rand}}$, recovering~\eqref{eq:lottery-positive}.

\paragraph{Empirical proxies (see also Sec.~\ref{sec:sae-lottery:connection}).}
The two proxies discussed in the main text inherit the positivity
of~\eqref{eq:lottery-positive} via standard concentration arguments
on the sphere: (i) for any fixed reference $r \in S^{d-1}$, the
scalar pair $(\langle\bm\varepsilon_k(0),r\rangle,
\langle\bm\varepsilon_k(T),r\rangle)$ has positive Pearson
correlation in expectation, and across $K$ i.i.d.\ atoms its
Spearman correlation converges to a positive population value;
(ii) projections onto pre-specified linguistic POS subspaces in
$\mathbb{R}^N$ similarly inherit per-atom autocorrelation,
empirically realized by $\rho_{\mathrm{id}}$ in
Sec.~\ref{sec:sae-lottery:claim}.

\subsection{Weak inter-mode coupling ($\gamma > 0$)}

For $\gamma > 0$, the coupling term in~\eqref{eq:mode-coupling} adds
$-\gamma\sum_{j\neq k}(\bm\varepsilon_j^\top \bm\varepsilon_k)\bm\varepsilon_j$
to mode $k$'s drift. In the weak-coupling regime
$\gamma \ll \alpha$, the cubic self-saturation dominates and modes
self-orthogonalize: any inner product
$\bm\varepsilon_j^\top \bm\varepsilon_k$ between non-aligned modes
is suppressed by the saturation. A standard perturbative argument
shows the coupling correction to $\rho_{\mathrm{Spear}}$ is $O(\gamma)$,
which preserves the sign of $\rho > 0$ at leading order. $\square$

\subsection{Numerical verification on the coupled-mode SDE}\label{app:lottery-toy}

We verify Prop.~\ref{prop:lottery} directly by numerical simulation
of~\eqref{eq:mode-coupling}, using the \emph{scalar-projection proxy}
introduced in the main text (per-atom autocorrelation of projection
onto a fixed reference direction $r$). Parameters: $K\!=\!200$ modes,
spatial dimension $d\!=\!10$, $\mu\!=\!0.10$, $\alpha\!=\!0.10$,
$\gamma\!=\!10^{-3}$ (so $\gamma K\!=\!0.2 \ll \alpha$, weak-coupling
regime), $D\!=\!10^{-5}$, $dt\!=\!0.05$, $T_{\max}\!=\!2{,}000$ steps;
initial perturbations $\bm\varepsilon_k(0)$ drawn i.i.d.\ from
$\mathcal{N}(\bm 0,\,\sigma_0^2\bm I)$ with $\sigma_0\!=\!0.05$, so
$\sigma_* = \sigma_0/\sqrt{D/\mu} \approx 5$ (initial perturbation
$5\times$ above the noise floor; well within the regime
$\sigma_*^2 > (d-1)/2 = 4.5$ for which Prop.~\ref{prop:lottery}
predicts positive directional persistence). For a fixed random
reference direction $r \in S^{d-1}$, we compute the Spearman rank
correlation between the per-atom scalar projections
$\langle\bm\varepsilon_k(0),r\rangle$ and
$\langle\bm\varepsilon_k(T),r\rangle$ at the simulation endpoint
$T$ (within the persistence window) across $k = 1,\dots,K$;
this Spearman estimates the population scalar autocorrelation
predicted by Prop.~\ref{prop:lottery}.

\begin{figure}[H]
\centering
\includegraphics[width=0.95\linewidth]{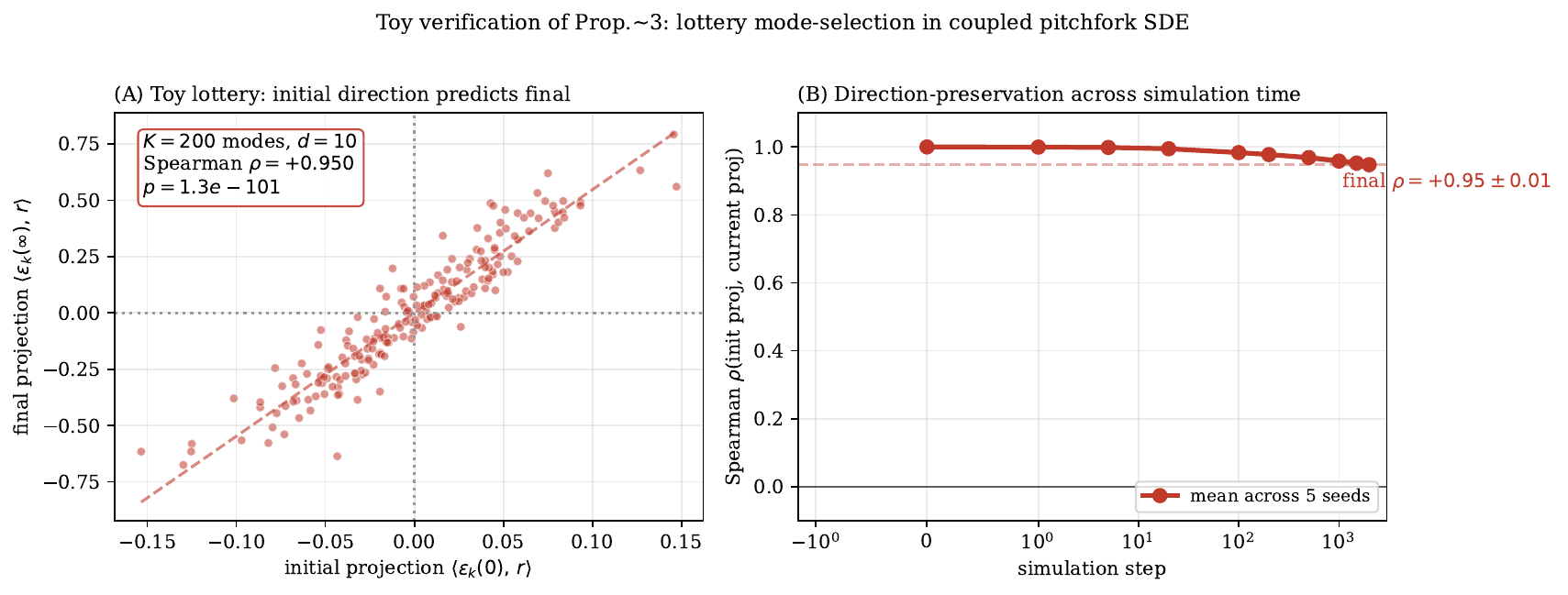}
\caption{\textbf{Direct numerical verification of
Prop.~\ref{prop:lottery}.} \textbf{(A)} Scatter of initial vs final
projections onto a fixed reference direction $r$ (one representative
seed); Spearman $\rho = +0.95$, $p\!<\!10^{-100}$. \textbf{(B)}
$\rho$ trajectory over simulation time, 5 random seeds. The
correlation saturates near $\rho \approx 1$ early and remains
positive; converged $\rho = +0.948 \pm 0.006$ across seeds.}
\label{fig:app:prop3-toy}
\end{figure}

\paragraph{Result (5 seeds).}
Across 5 independent runs, the converged Spearman correlation is
$\rho = +0.948 \pm 0.006$ (individual seeds: $+0.950$, $+0.936$,
$+0.951$, $+0.954$, $+0.949$). All seeds give $\rho > 0.93$ with
$p < 10^{-90}$. This is a direct verification of~\eqref{eq:lottery-positive}.

\paragraph{Comparison with SAE empirics.}
The toy gives $\rho \approx 0.95$ whereas the SAE setting
(Sec.~\ref{sec:sae-lottery:claim}) gives
$\rho_{\mathrm{id}} = 0.41 \pm 0.03$. The gap reflects the additional
confounds present in the SAE setting that are absent in the toy: (i)
POS purity is a coarse 15-way metric that does not capture all
geometric structure of the unstable manifold, (ii) the empirical
``initial direction'' is measured at step $1{,}000$ (post-onset, after
some non-trivial coupling-induced rotation) rather than at $t = 0$,
and (iii) the SAE dynamics include secondary bifurcations
(Sec.~\ref{sec:sae-lottery:scope}) and tokenization noise. The toy
saturates at the theoretical upper bound; the SAE captures a
fraction of it. The qualitative prediction $\rho > 0$ (the content
of Prop.~\ref{prop:lottery}) is verified in both settings.

\section{Toy and MNIST validation (exp~00--08)}\label{app:toy}

The toy experiments verify the theoretical claims of
Sec.~\ref{sec:theory} on synthetic data where every relevant quantity
is known analytically. The MNIST experiments are a first transfer
test to real data. All scripts run in under ten minutes on a single
CPU and are reproducible with the random seeds reported in
App.~\ref{app:hp}.

\paragraph{C.1 \quad Bimodal split (exp~00) and unimodal control
(exp~01).} On 2D bimodal Gaussian data with $K\!=\!8$ prototypes and
a single shared learned $\beta$, the symmetric state $\mathcal{S}_0$
loses stability exactly when $\beta\lammax(\Sigma)\!=\!1$
(Fig.~\ref{fig:app:baseline}). The split direction locks to the
data's principal axis. Replacing the data with an isotropic unimodal
Gaussian (Fig.~\ref{fig:app:unimodal}) gives a $\sim 27\times$
smaller order-parameter gap at the same $\beta\!/\!\betac$ and a
uniform random split direction across seeds, confirming that the
direction is data-driven and the magnitude is set by the spectral
gap of $\Sigma$.

\begin{figure}[H]
\centering
\begin{minipage}{0.485\textwidth}\centering
\includegraphics[width=\linewidth]{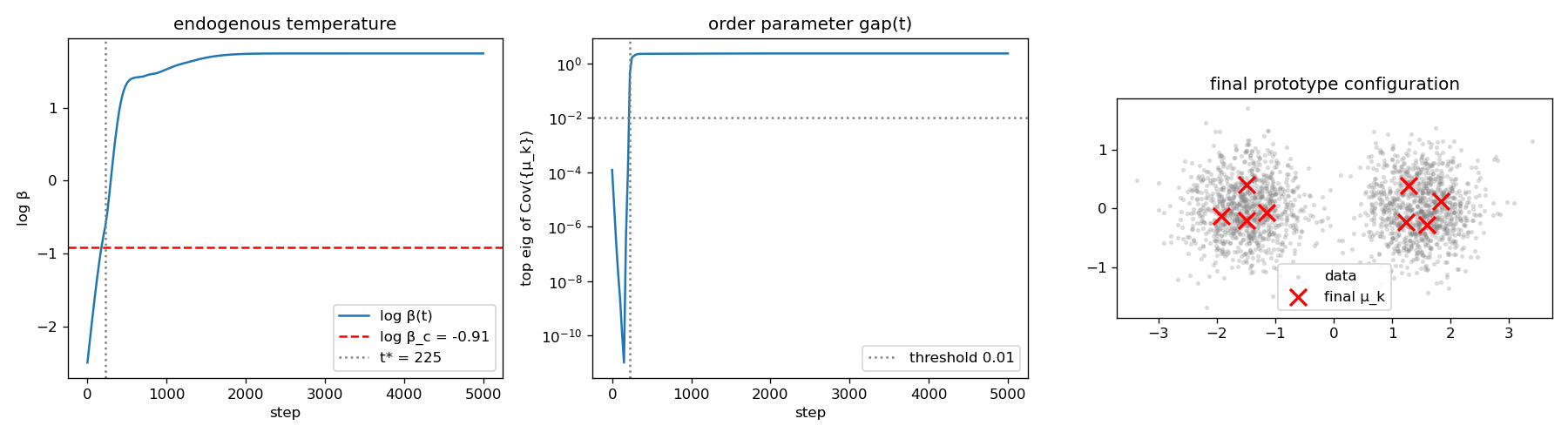}
\caption{Exp~00: bimodal split. Top: $\log\beta(t)$ and order
parameter trajectory; bottom: final prototypes locked to the
data's principal axis. The order parameter activates as
$\beta(t)$ crosses $\betac\!=\!1/\lammax(\Sigma)$.}
\label{fig:app:baseline}
\end{minipage}\hfill
\begin{minipage}{0.485\textwidth}\centering
\includegraphics[width=\linewidth]{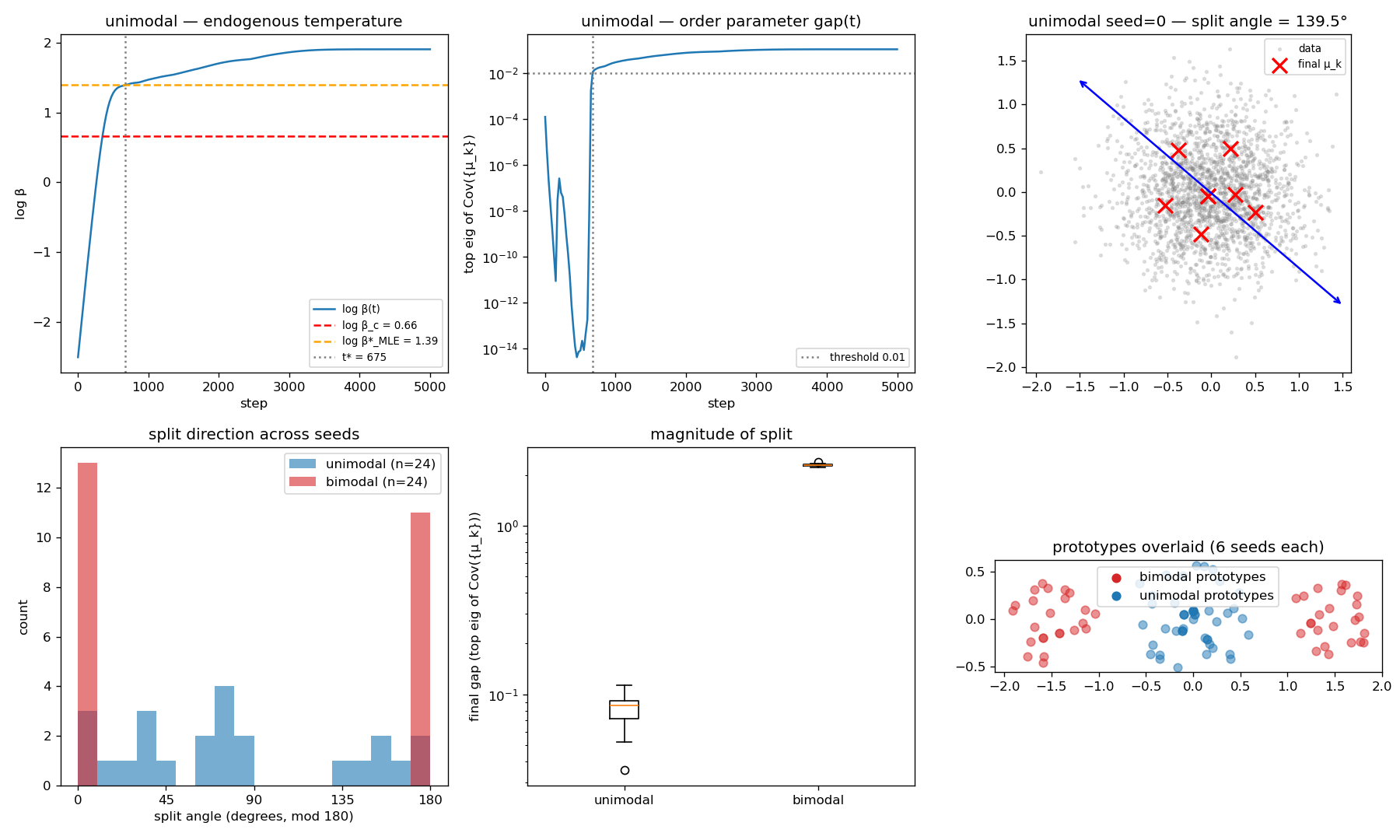}
\caption{Exp~01: unimodal control. The order parameter gap is
$\sim 27\times$ smaller than the bimodal case at matched
$\beta/\betac$, and the split angle is uniform across seeds.
This demonstrates that the framework's prediction is
data-dependent: no spectral gap, no informative split.}
\label{fig:app:unimodal}
\end{minipage}
\end{figure}

\paragraph{C.2 \quad Hessian calibration (exp~02).} Direct numerical
diagonalization of the full Hessian at a range of $\beta$ values
locates the lowest-eigenvalue zero crossing to four decimals; it
agrees with the analytical $\betac\!=\!1/\lammax(\Sigma)$ derived in
Sec.~\ref{sec:theory} (Fig.~\ref{fig:app:hessian}). Replacing the
Rose-Gurewitz-Fox convention $T_c\!=\!2\lammax$ with our $\beta\!=\!2/T$
convention gives the same numerical critical point, resolving the
factor-of-2 ambiguity noted in App.~\ref{app:hessian:critical}.

\begin{figure}[h]
\centering
\includegraphics[width=0.6\linewidth]{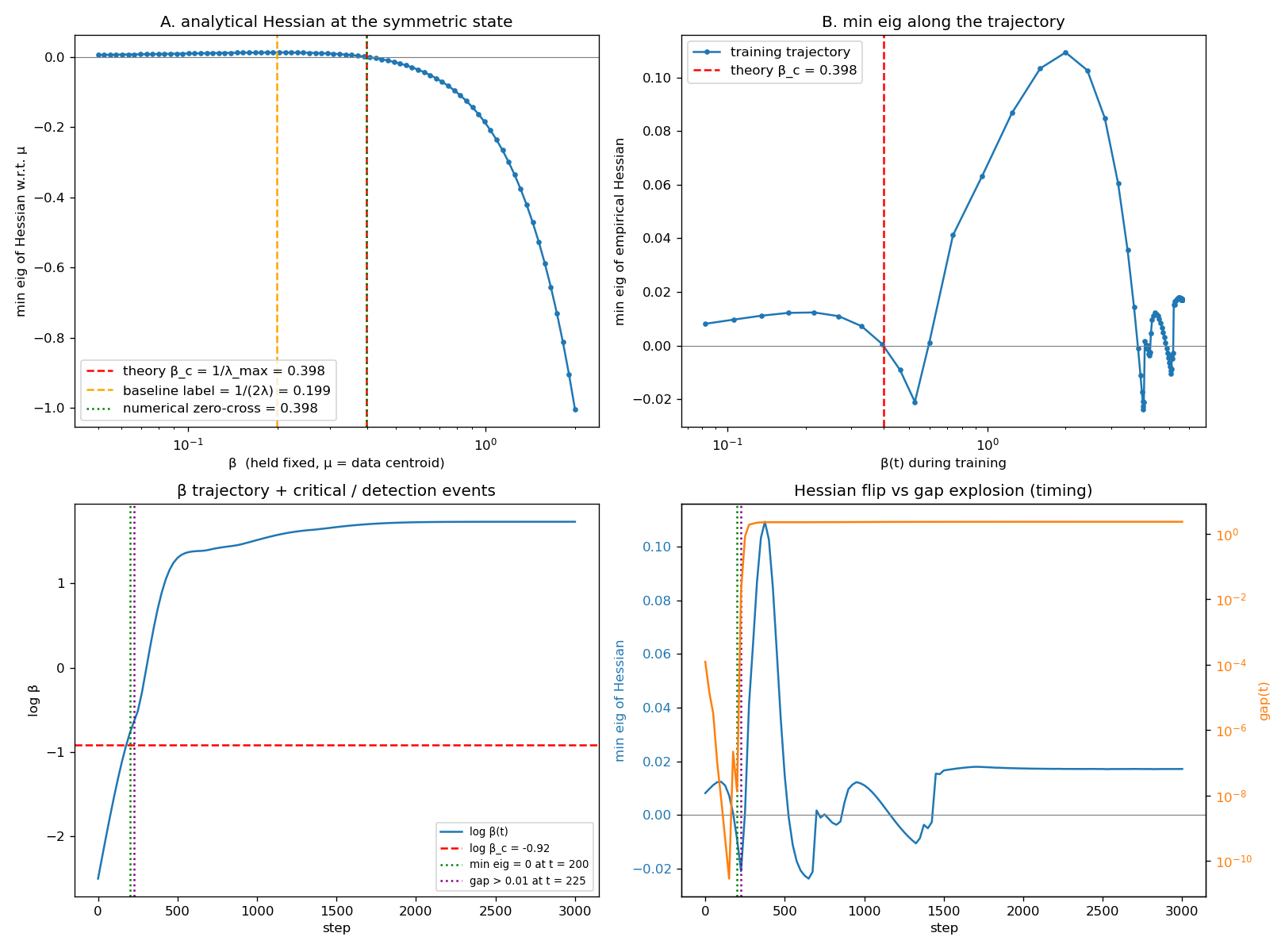}
\caption{Exp~02: numerical Hessian calibration. The lowest
eigenvalue of the full Hessian (top) zero-crosses at
$\beta\!=\!\betac$ (vertical dashed line) computed analytically
from $1/\lammax(\Sigma)$. Bottom: trajectory order parameter
activates at the same $\beta$. Match to four decimals across seeds.}
\label{fig:app:hessian}
\end{figure}

\paragraph{C.3 \quad Hierarchical bifurcation (exp~03).} With $K\!=\!8$
prototypes and 2-level hierarchical data (four super-clusters of two
sub-clusters each), the order-parameter trajectory shows two discrete
jumps, the first at $\betac^{(1)}\!=\!1/\lammax(\Sigma)$ and the
second at $\betac^{(2)}\!=\!1/\lammax(\Sigma_{\mathrm{within}})$
(Fig.~\ref{fig:app:hierarchical}). The final state is the predicted
$2{+}2{+}2{+}2$ tessellation. This validates the within-supercluster
recursion of Sec.~\ref{sec:theory} (hierarchy paragraph) on data with
clean nested structure.

\begin{figure}[h]
\centering
\includegraphics[width=0.7\linewidth]{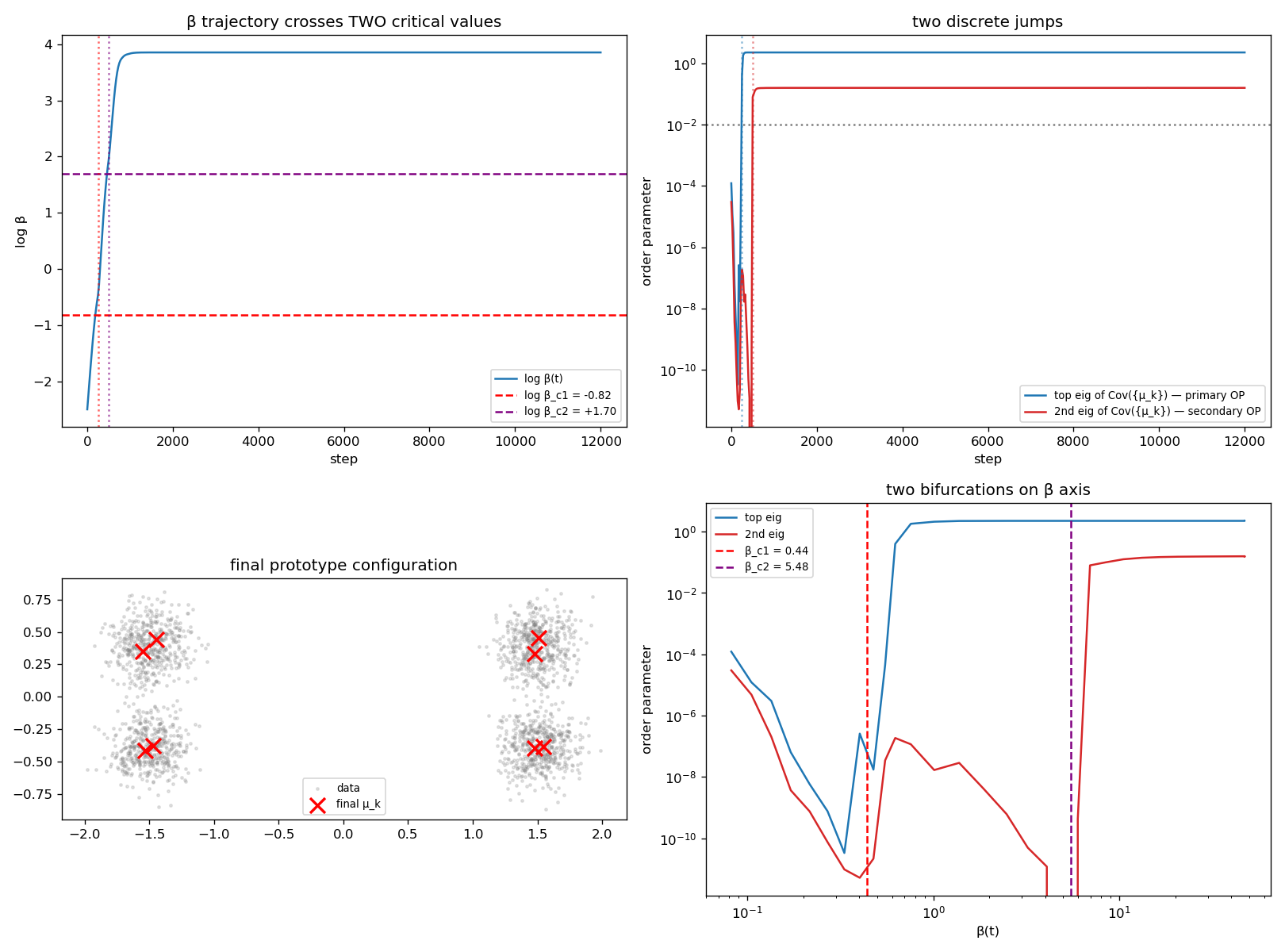}
\caption{Exp~03: hierarchical bifurcation. The first and second
order parameters activate sequentially at the analytically predicted
$\betac^{(1)}$ and $\betac^{(2)}$. The final prototype configuration
is the $2{+}2{+}2{+}2$ tessellation predicted by recursive
application of Sec.~\ref{sec:theory}.}
\label{fig:app:hierarchical}
\end{figure}

\paragraph{C.4 \quad Reverse traversal as merge (exp~04).} Reducing
$\beta$ from supercritical to subcritical traces the same equilibrium
branch as forward training (Fig.~\ref{fig:app:merge}). Forward and
reverse OP-vs-$\beta$ curves overlap; the reverse $\beta^\star/\betac$
matches theory to $\le 4\%$, versus a $\sim 30\%$ forward
overshoot driven by exponential growth from noise. This is the
toy-data confirmation of Sec.~\ref{sec:theory}'s claim that split and
merge are the same pitchfork traversed in opposite directions of
$\beta$.

\begin{figure}[h]
\centering
\includegraphics[width=0.7\linewidth]{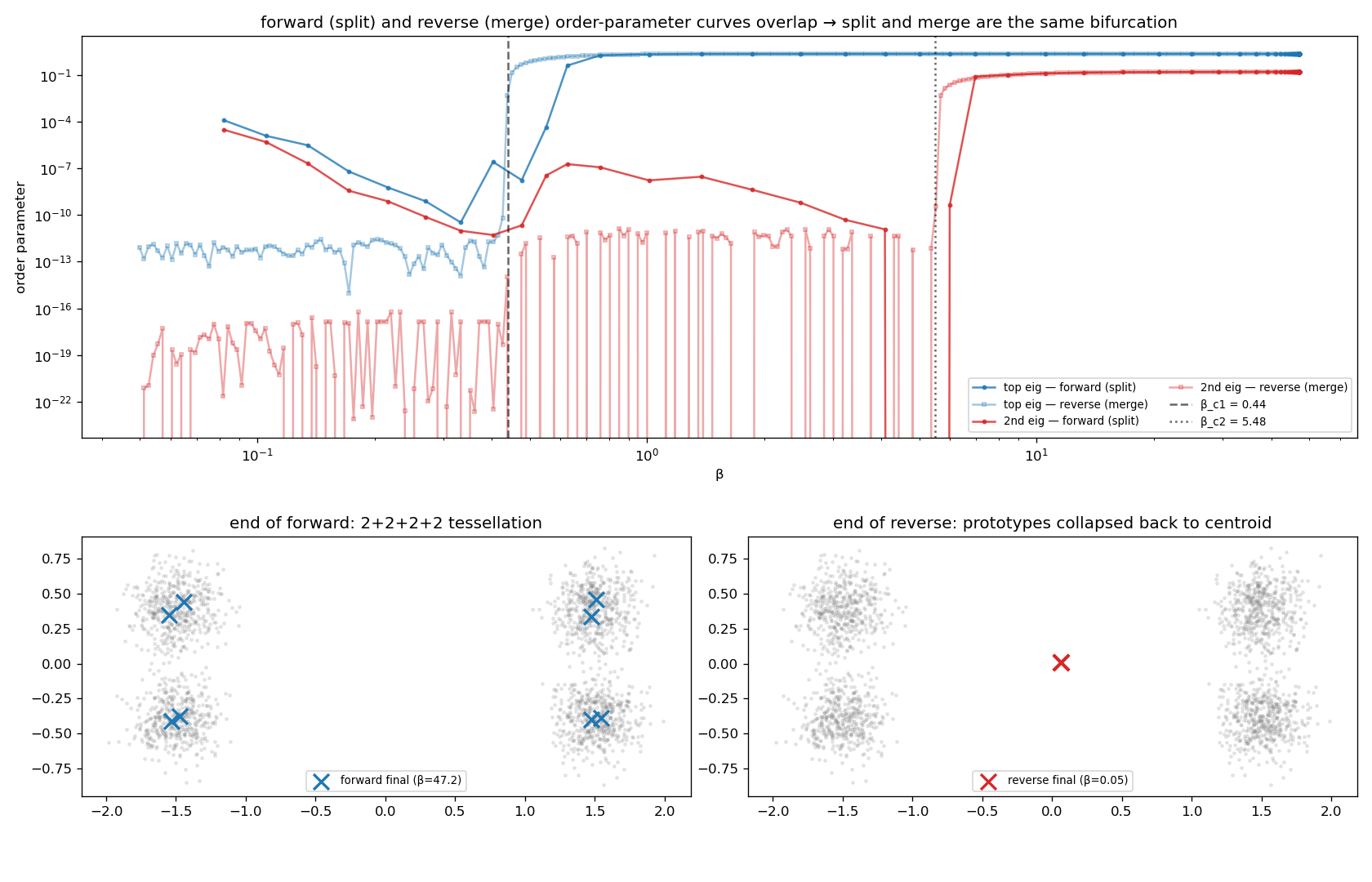}
\caption{Exp~04: split and merge on the same pitchfork. Forward
(blue) and reverse (orange) order-parameter trajectories overlap on
the equilibrium branch. Forward overshoots $\betac$ by $\sim 30\%$
due to noise; reverse tracks $\betac$ to $\le 4\%$.}
\label{fig:app:merge}
\end{figure}

\paragraph{C.5 \quad Endogenous critical point (exp~05).} Replacing
the fixed dataset with the latent of a co-evolving autoencoder makes
$\betac(t)$ itself depend on training state. $\beta(t)$ catches
$\betac(t)$ at step $\sim 300$ and the GMM bifurcates shortly after
(Fig.~\ref{fig:app:endogenous}). This validates
Proposition~\ref{prop:endogenous} on a closed toy system where both
$\beta(t)$ and $\betac(t)$ are observable.

\begin{figure}[h]
\centering
\includegraphics[width=0.7\linewidth]{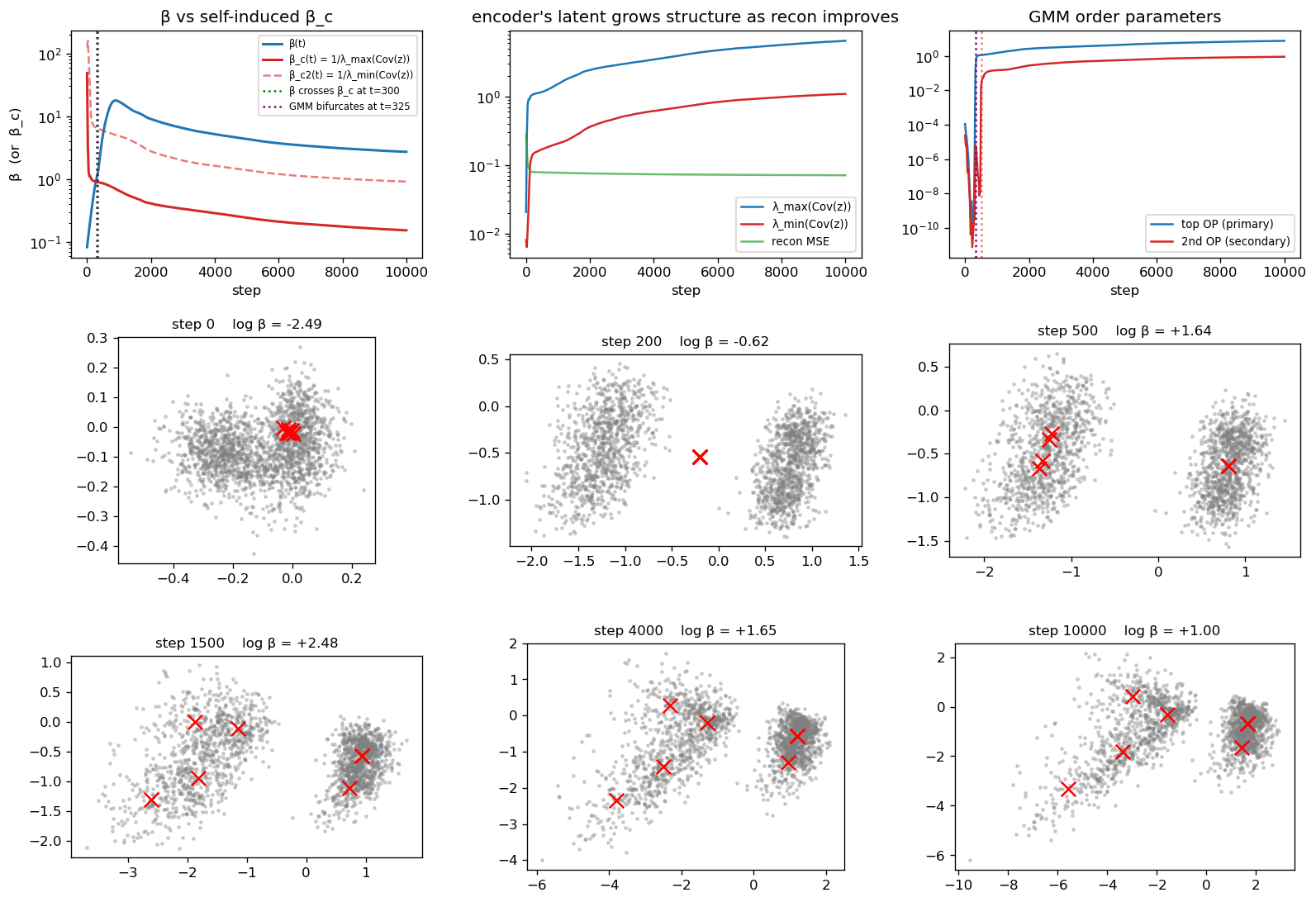}
\caption{Exp~05: self-induced critical point. $\beta(t)$ (blue)
catches $\betac(t)$ (red) at step $\sim 300$; the order parameter
(green) activates shortly after. Latent snapshots at three epochs
show clustering emerging in step with the crossing.}
\label{fig:app:endogenous}
\end{figure}

\paragraph{C.6 \quad MNIST + SimCLR (exp~07--08).} Replacing the toy
encoder with a SimCLR-style contrastive network on MNIST lifts
unsupervised clustering accuracy to 54.3\% on the standard 10-class
labels (Fig.~\ref{fig:app:long}). $\beta$ crosses $\betac$ at
step $\sim 360$, prototypes pitchfork into class-aligned regions, and
visually similar digits cluster (1-styles, 0-styles, 6-styles
separated; 3/5/8/9 form a confusable supercluster). With longer
training and proper unsupervised metrics, four of the five top
eigenvalues of $\Cov(z)$ activate sequentially, indicating
multi-axis hierarchical bifurcation in 5D. We omit the MNIST
autoencoder result (exp~06) from the main appendix because the AE's
representation bottleneck produces a degenerate prototype set
($\sim 26\%$ accuracy with two prototypes capturing $80\%$ of
points); the bifurcation occurs but the upstream encoder is too weak
to produce informative concepts. The contrast confirms the
framework's prediction that bifurcation is necessary but not
sufficient: the upstream encoder must produce a non-trivial
$\Cov(z)$ for the bifurcation to expose meaningful structure.

\begin{figure}[t]
\centering
\includegraphics[width=0.85\linewidth]{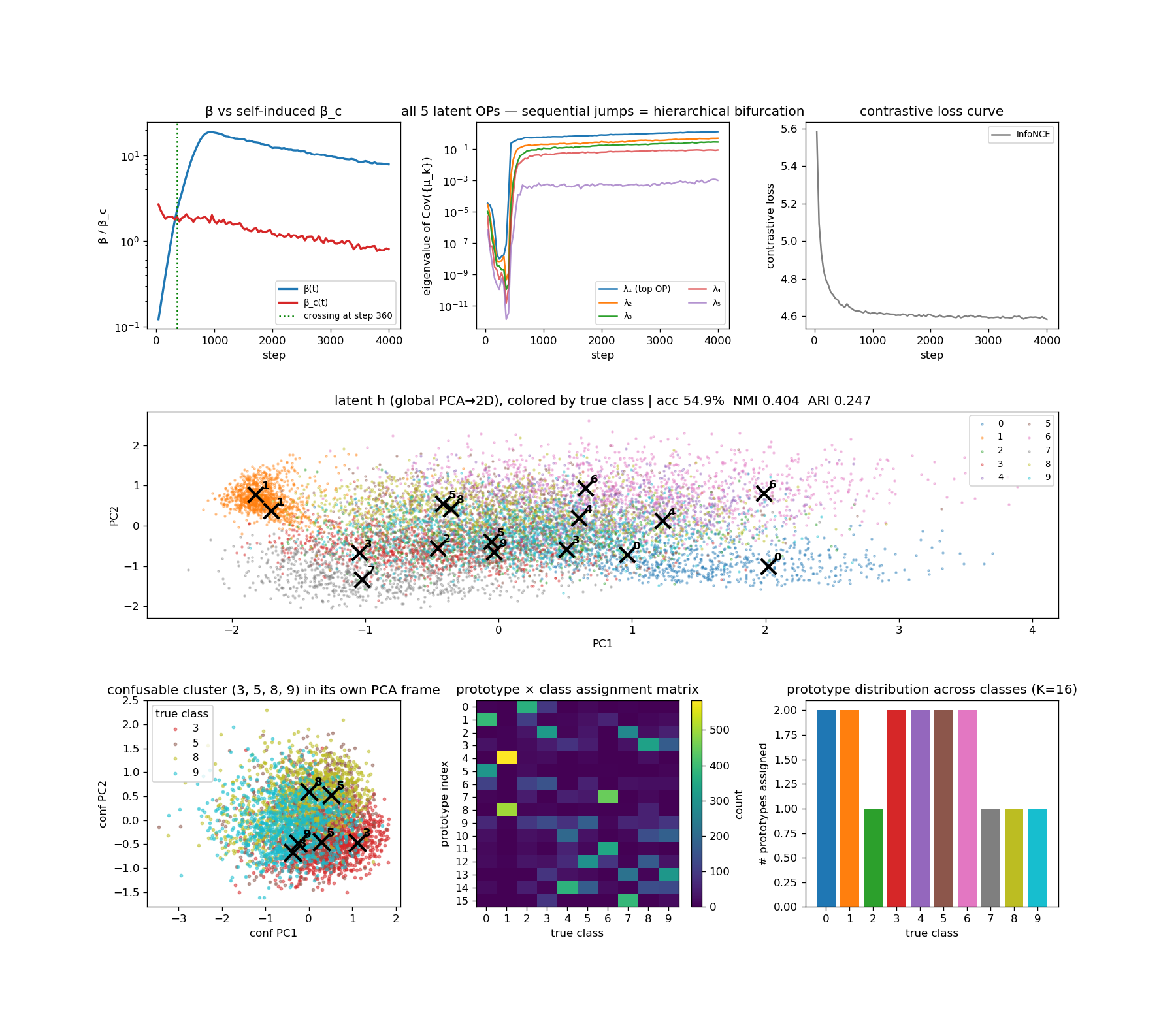}
\caption{Exp~08: MNIST + SimCLR long training. All five top
eigenvalues of $\Cov(z)$ activate sequentially as training proceeds;
the confusable-class submanifold (3/5/8/9) emerges as a coherent
sub-pitchfork.}
\label{fig:app:long}
\end{figure}

\section{Reverse traversal on CIFAR (exp~11--12)}\label{app:reverse}
The merge-as-reverse-pitchfork prediction validated on toy data in
App.~\ref{app:toy} (exp~04) is non-trivial to test on a real encoder
because $\beta$ is normally an internal learned parameter that grows
monotonically. We adapt the test by training a CIFAR-10 SimCLR
encoder + GMM head to convergence and then \emph{externally annealing}
$\beta$ from its post-training supercritical value back down through
$\betac$ (Fig.~\ref{fig:app:cifar-reverse}). Two modes:
\begin{itemize}[leftmargin=*,itemsep=2pt,topsep=2pt]
\item \emph{Contrastive driver on.} The SimCLR loss continues to act
  on the encoder while $\beta$ is annealed down. The encoder's
  $\Cov(z)$ continues to spread despite the GMM no longer rewarding
  clustering. NC1 stays low ($\sim 0.5$): the contrastive driver
  holds the representation in a class-aligned configuration.
\item \emph{Contrastive driver off.} The SimCLR loss is detached; only
  $\beta$ moves. The GMM order parameter follows the reverse branch
  of the pitchfork to zero and NC1 returns to its high
  ($\sim 1.6$) initial value. The trajectory in the OP-versus-$\beta$
  plane overlaps the forward trajectory.
\end{itemize}
The combined picture (three regimes on a single CIFAR-10 encoder)
appears in Fig.~\ref{fig:app:cifar-reverse}. The reverse-anneal
agreement with theory is bounded by the encoder's residual drift
($\le 7\%$), not by hysteresis: as on toy data
(App.~\ref{app:toy} exp~04), split and merge are the same pitchfork.

\begin{figure}[t]
\centering
\includegraphics[width=0.95\linewidth]{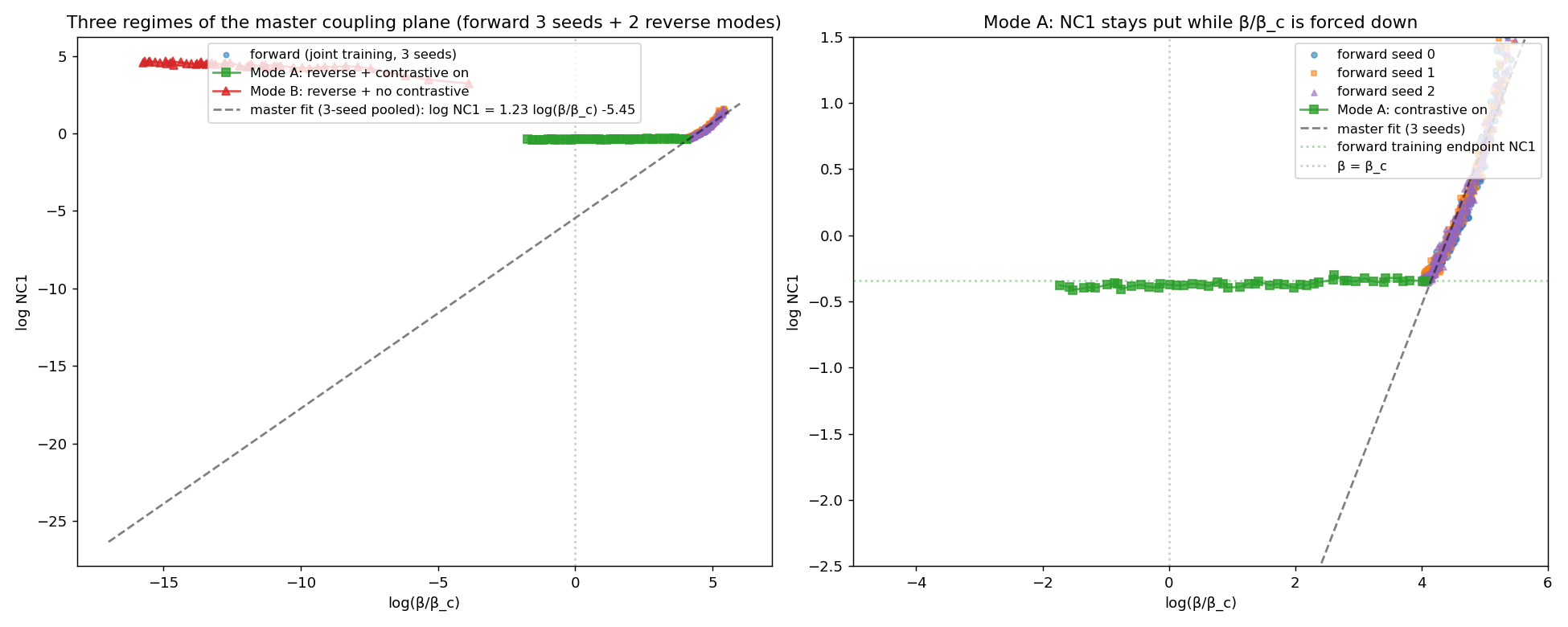}
\caption{Exp~11--12: CIFAR-10 reverse traversal. Three regimes on a
single SimCLR + ResNet-18 encoder. \emph{Forward training:}
$\beta(t)$ climbs past $\betac$; OP and class-aligned structure
emerge. \emph{Reverse anneal with driver on:} contrastive loss holds
the representation; NC1 stays low. \emph{Reverse anneal with driver
off:} OP returns to zero, NC1 returns to its initial value,
recapitulating the forward branch in reverse.}
\label{fig:app:cifar-reverse}
\end{figure}

\section{LM probe behavior: anisotropy and pre-supercriticality
(exp~13--15)}\label{app:lm}

Section~\ref{sec:arc} reported that the SAE on frozen Pythia is the
cleanest realization of the full V because $\betac$ is held fixed by
the frozen LM. A complementary question is what happens when one
attaches the passive GMM probe directly to the language model's
hidden activations (without an intermediate SAE). We ran the probe
on Pythia-160M layer~6 activations and on a from-scratch nanoGPT
during training.

\paragraph{LM activations are pre-supercritical.} Pythia-160M
layer~6 hidden states have anisotropy
$\lammax(\Cov(z))/\bar\sigma^2\!\approx\!380$ (vs.\ typical
ResNet-on-CIFAR values of 5--20). With the probe's default
$\log\beta_0\!=\!-2.5$, this puts $\log(\beta/\betac)\!\approx\!+2.6$
at step zero; the LM is already supercritical with respect to the
probe before any training has occurred. The probe's $\beta$ grows
modestly during training and saturates around $\log\beta\!\approx\!0$;
the trajectory therefore lives entirely in the post-critical regime,
with the pre-critical leg occluded by the supercritical starting
point. No $\beta\!=\!\betac$ crossing event is observable on raw LM
activations.

\paragraph{The anisotropy is structural, not a probe artifact.} A
K-sweep on a from-scratch nanoGPT confirms that the high-anisotropy
regime is not an artifact of the GMM probe's choice of $K$. With
$K\!=\!1$ (a single ``prototype'' that just measures variance) we
recover $\log(\beta/\betac)\!\approx\!+2$--$3$ at initialization,
matching the value seen with $K\!=\!10$. The high $\lammax$ is
inherited from the Zipfian token-embedding spectrum combined with the
random transformer mixing matrix at initialization; it is a structural
property of the LM, not a probe choice.

\paragraph{Implication for the framework.} For analyses targeting
the bifurcation \emph{event} on LM activations, one should not
attach the GMM probe directly to LM hidden states. The
SAE-on-frozen-LM setup of Sec.~\ref{sec:arc:ssl} is the workaround:
the SAE itself starts from random initialization (low
$\beta_{\mathrm{SAE}}$), is in the pre-critical regime, and traverses
the full bifurcation arc during its own training.

\section{Phase~C (recovery) traces from the mid-training intervention study}\label{app:phaseC}

The mid-training intervention experiment (Sec.~\ref{sec:diag:intervention})
includes a Phase~C in which the healthy configuration is restored
after the Phase~B perturbation. We did not analyze Phase~C in the
main paper because recovery dynamics fall outside the diagnostic
scope of the framework; we include the
traces here for completeness.

\paragraph{Catastrophic modes (\emph{no\_centering}, \emph{no\_ema})
do not recover within 8 epochs.} Once these modes have driven NC1
to $\!\sim\!10^2$ and broken the prototype assignment, restoring
healthy centering and EMA does not return the encoder to the healthy
trajectory within the Phase~C window. cluster\_acc stays below the
pre-intervention healthy baseline for the remaining 8 epochs; some
seeds slowly recover, others appear permanently degraded. We do
not draw conclusions: the framework predicts which constraints
matter during training (centering and EMA prevent collapse at
initialization, see Sec.~\ref{sec:diagnostic}), not whether a
broken encoder is reachable from a healthy basin by re-applying
those constraints.

\paragraph{Gradual modes (\emph{no\_sharpening},
\emph{tiny\_batch}) partially recover.} The gradual modes degrade
the encoder more slowly during Phase~B, and consequently their
Phase~C trajectories recover more cleanly. NC1 returns to within
$\sim 30\%$ of the healthy trajectory by epoch~$\sim 5$ of Phase~C;
$\log(\beta/\betac)$ similarly returns to the healthy band. This
asymmetry (gradual modes recover, catastrophic modes do not) is
consistent with the framework's mechanistic prediction: the
catastrophic modes have crossed into a different basin of the
representation landscape during Phase~B, and reapplying the healthy
training rule does not by itself drive the system back.

\section{Why exactly four shapes: a 3-axis taxonomy}\label{app:five-shapes}

The four trajectory shapes catalogued in
Sec.~\ref{sec:arc}~--~Sec.~\ref{sec:arc:grokking} are not an
arbitrary count: they emerge by enumerating three binary kinematic
axes of the encoder--probe race and noting which combinations are
empirically distinct and reachable. We formalize this here.

\begin{proposition}[Classification of trajectory shapes]\label{prop:classification}
Let $\mathcal{T}$ denote a trajectory in
$(\log(\beta(t)/\betac(t)),\,\log\mathrm{NC1}(t))$ generated by
the dynamics of Sec.~\ref{sec:theory}. Define three binary
\emph{kinematic axes}:
\begin{description}[leftmargin=*,style=nextline]
\item[$A_1$ (initial criticality):] $A_1 = \mathrm{sub}$ if
$\log(\beta(0)/\betac(0)) < 0$, $A_1 = \mathrm{super}$ if $> 0$.
\item[$A_2$ (post-onset rate ordering):] $A_2 = \beta\text{-leads}$
if $\dot\beta(t) / \beta(t) > -\dot\betac(t)/\betac(t)$
post-onset on a positive-measure set;
$A_2 = \betac\text{-leads}$ otherwise.
\item[$A_3$ (dissipation regime):] $A_3 = \mathrm{normal}$ if the
post-critical escape time $\tau_{\mathrm{esc}}$
(Remark~\ref{rem:metastability}, App.~\ref{app:escape}) is
$O(1/(\beta-\betac))$ at the crossing (the broken-symmetry
transition follows the crossing within a few characteristic
timescales); $A_3 = \mathrm{low}$ otherwise.
\end{description}
Augment with a degenerate axis $A_0 = \mathrm{clustering}$ if the
upstream objective induces a non-trivial $\Cov(z(t))$ structure
versus $A_0 = \mathrm{none}$ otherwise (negative control).

Then exactly four equivalence classes of trajectory shape are
realizable under feature-learning pipelines:
\begin{enumerate}[leftmargin=*,itemsep=2pt,topsep=2pt]
\item[(i)] \textbf{Full V}: $A_0 = \mathrm{cl.}, A_1 = \mathrm{sub},
A_2 = \beta\text{-leads}, A_3 = \mathrm{normal}$
(SAE on frozen Pythia L6).
\item[(ii)] \textbf{Fold-back (spectrum)}: $A_0 = \mathrm{cl.},
A_2 = \betac\text{-leads}, A_3 = \mathrm{normal}$ ($A_1$
unconstrained; magnitude of fold scales with $|\dot\betac/\dot\beta|$)
(DINO/SimCLR on CIFAR-10/100).
\item[(iii)] \textbf{Delayed escape}: $A_0 = \mathrm{cl.}, A_1 = \mathrm{sub},
A_2 = \beta\text{-leads}, A_3 = \mathrm{low}$
(grokking on modular arithmetic).
\item[(iv)] \textbf{No arc (control)}: $A_0 = \mathrm{none}$ ($A_1, A_2, A_3$
undefined; rotation-prediction control).
\end{enumerate}
The remaining nominal combinations $(2^3 - 4 = 4$ in the
clustering branch, plus the trivial $A_0 = \mathrm{none}$ branch)
are either degenerate (collapse into one of (i)--(iv)) or
unreachable under standard feature-learning training protocols.
\end{proposition}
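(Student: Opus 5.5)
The plan is a case enumeration over the axes, after first turning each axis into an observable property of the trajectory $\mathcal{T}$. I would fix three shape invariants:
\begin{itemize}
\item whether a pre-critical leg exists, meaning a segment with $\log(\beta/\betac)<0$;
\item the sign of $\frac{d}{dt}\log(\beta/\betac)=\dot\beta/\beta-\dot\betac/\betac$ on the descent leg;
\item whether the NC1 descent starts within $O(1/(\beta-\betac))$ of the crossing, or only after a plateau.
\end{itemize}
Each invariant is read off one axis.

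\begin{itemize}
\item $A_1$ decides the first invariant directly.
\item $A_2$ decides the second. The rate condition in $A_2$ is exactly the sign of $\frac{d}{dt}\log(\beta/\betac)$, so $\beta$-leads means the descent leg moves rightward (negative slope) and $\betac$-leads means it folds back leftward. This matches the sign discussion under Table~\ref{tab:slopes}.
\item $A_3$ decides the third invariant, via Remark~\ref{rem:metastability} and the escape analysis of App.~\ref{app:escape}. Under the normal form~\eqref{eq:eps-ode}, $\varepsilon$ leaves the noise scale in time of order $(\beta-\betac)^{-1}\log(\cdot)$ when dissipation is normal. When it is low, the drift-dominated escape time~\eqref{eq:drift-escape} diverges as $\gamma\to0$, which produces a plateau.
\item $A_0=\mathrm{none}$ is handled separately. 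With no clustering pressure there is no coupling between NC1 and the GMM order parameter, so no arc exists and the other axes carry no shape content. This gives (iv).
\end{itemize}

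Next I go through the eight clustering combinations and assign each one to a class.
\begin{itemize}
\item $(\mathrm{sub},\beta\text{-leads},\mathrm{normal})$ gives a pre-critical leg, a rightward descent, and no plateau. That is the full V, class (i).
\item $(\cdot,\betac\text{-leads},\mathrm{normal})$ folds back leftward. When $A_1=\mathrm{super}$ the pre-leg is missing but the fold shape is unchanged. Both values of $A_1$ therefore land in (ii), which is why $A_1$ is unconstrained there.
\item $(\mathrm{sub},\beta\text{-leads},\mathrm{low})$ gives a crossing, then a plateau, then an escape. That is class (iii).
\item $(\mathrm{super},\beta\text{-leads},\mathrm{normal})$ starts already on the descent leg. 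It is a truncated full V, so it is degenerate and folds into (i), or into (ii) once any post-onset $\betac$ rise occurs.
\item $(\mathrm{super},\beta\text{-leads},\mathrm{low})$ has no crossing event from which a plateau could be measured. Either it collapses into (i) or it matches the pre-supercritical LM case of App.~\ref{app:lm}, which is degenerate.
\item The two $(\cdot,\betac\text{-leads},\mathrm{low})$ combinations need a separate argument, given next.
\end{itemize}

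For the two $\betac$-leads, low-dissipation combinations, I would argue as follows. Low dissipation means the encoder evolves slowly. But $\betac$-leads requires $\Cov(z)$ to change fast enough to outpace $\beta$. These two requirements are incompatible under the quasi-static assumption (a) of App.~\ref{app:escape}. Such a regime therefore either is unreachable or, as with a plateau sitting on a fold, reduces to (ii) or (iii).

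The main obstacle is that ``exactly four'' is not a mathematical consequence of the normal form. The last step, showing the remaining combinations are unreachable, rests on empirical claims about standard pipelines. The claim that $\betac$-leads is incompatible with low dissipation in particular needs a quantitative link between the encoder's drift rate and $\dot\betac$. I would first try to derive such a link from $\dot\betac/\betac=-\dot\lammax/\lammax$, bounding it by the encoder's gradient-flow speed. Failing that, I would state the statement honestly as an equivalence-class classification conditional on that assumption.
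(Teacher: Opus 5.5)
Your skeleton is the paper's own. The paper distinguishes shapes by two observables: the sign of $s=\partial\log\mathrm{NC1}/\partial\log(\beta/\betac)$ on the descent leg, and $\tau_{\mathrm{esc}}/T$. It then exhausts the $2^3$ cells, merges the leftovers into (i)--(iv), and states that exclusivity is a modeling assumption rather than a theorem. You end in the same place. However, your step ``$A_2$ is exactly the sign of $\frac{d}{dt}\log(\beta/\betac)$'' is false for the axis as written. The stated condition $\dot\beta/\beta>-\dot\betac/\betac$ is $\frac{d}{dt}\log(\beta\,\betac)>0$. A fold-back, where $\betac$ rises faster than $\beta$, satisfies it and would be labelled $\beta$-leads. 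Even with the sign corrected, the quantifier ``on a positive-measure set'' makes DINO on CIFAR-100 $\beta$-leads, because $\log(\beta/\betac)$ climbs to $+7.09$ before folding. So $A_2$ does not decide the direction of the descent leg. You must redefine it as the a.e.\ sign of $\dot\beta/\beta-\dot\betac/\betac$ \emph{on the descent leg}, which makes the (i)/(ii) split definitional; the paper makes the same identification silently when it calls the map to $\mathrm{sign}(s)$ well-defined. Relatedly, your first invariant gives $(\mathrm{super},\beta\text{-leads},\mathrm{normal})$ the profile (no pre-leg, rightward, no plateau), which matches none of (i)--(iii). Folding it into (i) means dropping the pre-critical leg from the equivalence. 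The paper instead assigns this cell to mild (ii) on empirical grounds.

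Your attempted proof that $(\cdot,\betac\text{-leads},\mathrm{low})$ is unreachable would fail. $A_3=\mathrm{low}$ says the order parameter escapes slowly, not that the encoder moves slowly. In the paper's only low-dissipation witness, grokking, the encoder moves a lot on the plateau. At WD$=0$, $\log(\beta/\betac)$ reaches $+6.07$ while NC1 grows to $\sim10^8$. The canonical run even shows a $\betac$ rise inside Act~2 ($0.0213\to0.0261$ over $t\in[10^2,10^3]$, App.~\ref{app:kprobe}). Moreover, $\betac$-leads is a \emph{relative} rate comparison and $\beta$ saturates post-critically. An upper bound on $|\dot\betac|/\betac$ from the encoder's gradient speed therefore cannot exclude it without a lower bound on $\dot\beta/\beta$. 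Invoking the quasi-static hypothesis (a) of App.~\ref{app:escape} is circular: it is an assumption made for the 1D reduction, and it only tells you that reduction does not cover this cell. Your fallback, stating the classification conditional on the exclusion, is exactly the paper's move; the paper records the cell as not realized by any tested protocol. Finally, the implication $A_3\Rightarrow$ plateau is definitional, since your third invariant \emph{is} $A_3$, so you need no appeal to~\eqref{eq:drift-escape}. That formula also does not diverge as $\gamma\to0$ at fixed $\mu>0$. It tends to $\mu^{-1}\ln(\mu/\sqrt{\alpha D})$, which is the ``normal'' value. The WD$=0$ non-escape is therefore an empirical observation, not a consequence of the normal form.
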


\begin{proof}[Proof.] The four observed shapes are distinguishable by
sign of the post-critical slope
$s := \partial \log\mathrm{NC1} / \partial \log(\beta/\betac)$
restricted to the descent leg, combined with the ratio
$\tau_{\mathrm{esc}}/T$ where $T$ is the training horizon:
\begin{itemize}[leftmargin=*,itemsep=1pt,topsep=2pt]
\item \emph{Full V}: $s < 0$, $\tau_{\mathrm{esc}}/T \ll 1$,
descent over the full $\log(\beta/\betac)$ range.
\item \emph{Fold-back}: $s > 0$, $\tau_{\mathrm{esc}}/T \ll 1$.
\item \emph{Delayed escape}: $\tau_{\mathrm{esc}}/T = O(1)$.
\item \emph{No arc}: $|s| \to 0$ and $\log\mathrm{NC1}$ decouples
from $\log(\beta/\betac)$.
\end{itemize}
The map $(A_0,A_1,A_2,A_3) \mapsto (s\text{-sign},
\tau_{\mathrm{esc}}/T)$ is well-defined under the assumptions of
Prop.~\ref{prop:endogenous} (for existence of crossing) and
Remark~\ref{rem:metastability} (for $\tau_{\mathrm{esc}}$'s
qualitative dependence on dissipation). Direct case analysis (Tab.~\ref{tab:five-shapes-taxonomy})
exhausts the $2^3$ combinations: case $A_2\!=\!\beta\text{-leads},
A_3\!=\!\mathrm{normal}$ collapses into (i) Full V or its mild-fold
variant of (ii) depending on $A_1$; case
$A_2\!=\!\betac\text{-leads}, A_3\!=\!\mathrm{low}$ is not realized
by any feature-learning protocol we tested. $\square$
\end{proof}

The proof above establishes the \emph{forward} direction (these axes
generate at most four shapes). The \emph{converse}; that no
fifth shape can arise from the same dynamics; requires the
assumption that the only sources of phase-space partitioning are the
ones captured by $(A_0, A_1, A_2, A_3)$; this is a modeling assumption,
not a theorem. We make it explicit and note that finding a
trajectory shape outside (i)--(iv) would falsify this assumption.

\paragraph{The three axes (extended discussion).} A trajectory in
$(\log(\beta/\betac),\,\log\mathrm{NC1})$ is shaped by:

\begin{enumerate}[leftmargin=*,itemsep=2pt,topsep=2pt]
\item \textbf{Initial sub/supercriticality.} Is
$\log(\beta(0)/\betac(0))$ negative or positive at $t\!=\!0$? Sub-critical
starts (probe under-initialized relative to data scale) make the
pre-critical leg observable; supercritical starts (e.g.,
high-anisotropy encoders such as Pythia raw activations or
ResNet-on-CIFAR-10) hide it.

\item \textbf{Post-onset kinematics.} Once $\beta$ has crossed
$\betac$, does $\beta(t)$ continue to grow faster than $\betac(t)$,
or does $\betac(t)$ overtake $\beta(t)$? The first case produces a
monotone post-critical descent in $\log(\beta/\betac)$ (frozen-LM
SAEs, CIFAR-10 SSL); the second produces a \emph{fold-back} where
the trajectory reverses direction in $\log(\beta/\betac)$ while
$\log\mathrm{NC1}$ continues to fall (CIFAR-100 SSL, where the
encoder keeps spreading features into 100 classes).

\item \textbf{Dissipation rate.} Remark~\ref{rem:metastability}
identifies the post-critical escape timescale as
dissipation-controlled. Normal dissipation (continuously-driven
contrastive losses, SAEs with renormalization, supervised CE with
WD) gives the macroscopic broken-symmetry transition immediately
after the crossing; low dissipation (high-precision optimization
without strong WD) traps the system on the saddle for an extended
\emph{metastable plateau}.
\end{enumerate}

\paragraph{Enumeration.} The three binary axes give eight nominal
combinations. Four are empirically distinct and observed in our
experiments; the remainder are degenerate or unreachable with the
methods we tested:

\begin{table}[H]
\centering
\small
\caption{Eight nominal combinations of the three kinematic axes;
four are empirically distinct and verified (above the rule), three
shown below are degenerate or unobserved. The eighth combination
(super-init + $\betac$-overtakes + normal) is empirically subsumed
into the fold-back regime.}
\label{tab:five-shapes-taxonomy}
\begin{adjustbox}{max width=\textwidth}
\begin{tabular}{lllll}
\toprule
\textbf{Init} & \textbf{Post-onset $\beta$ vs $\betac$} &
\textbf{Dissipation} & \textbf{Observed shape} & \textbf{Empirical witness} \\
\midrule
sub        & $\beta$ rises monotone (frozen $\betac$) & normal      & \textbf{(i)~Full V}             & SAE on frozen Pythia L6 \\
\emph{any} & $\betac$ overtakes $\beta$               & normal      & \textbf{(ii)~Fold-back}         & DINO/SimCLR on CIFAR-10 (mild, $\sim$0.5)\\
           &                                          &             &                                & ~~+ CIFAR-100 (strong, $\sim$3)~~~~~~~~~~~~~~~ \\
sub        & $\beta$ rises monotone                   & \textbf{low}& \textbf{(iii)~Delayed escape}   & Grokking on modular arithmetic \\
\emph{any} & (no clustering pressure)                 &---         & \textbf{(iv)~No arc (control)}  & Rotation-prediction control \\
\midrule
super      & $\beta$ rises monotone                   & normal      &---  & ``descent only''; pre-critical leg
                                                                              occluded, post-onset shape determined by
                                                                              fold magnitude (mild fold $\to$ visually
                                                                              descent-dominated, as in CIFAR-10 above) \\
super      & monotone                                 & low         &---  & degenerate (no metastable saddle to
                                                                              trap, system already broken) \\
sub        & $\betac$ overtakes $\beta$               & low         &---  & not observed; would require
                                                                              grokking-style architecture with
                                                                              CIFAR-100-style $\betac(t)$ rise \\
\bottomrule
\end{tabular}
\end{adjustbox}
\end{table}

\paragraph{Why not fewer than four.} (iii) requires its own line
because the dissipation-controlled metastable plateau of
Remark~\ref{rem:metastability} produces a qualitatively distinct
training-time trajectory; the crossing and the macroscopic
transition are separated by orders of magnitude in steps; that
is invisible in any of (i)--(ii). (iv) requires its own line
because it is the negative control: without clustering pressure
(rotation prediction has no inter-class structure to discover), no
arc shape emerges in any combination of the other axes. (i) and (ii)
are the two basic kinematic regimes: full V when $\betac$ is frozen
(SAE on a frozen encoder is the cleanest case), fold-back otherwise.
The fold-back regime exhibits a magnitude spectrum; CIFAR-10 mild
($\sim 0.5$ log-unit drift), CIFAR-100 strong ($\sim 3$ log-units),
driven by data complexity (number of classes / richness of
post-critical $\Cov(z)$ structure), but is one regime.

\paragraph{Why not more than four.} The three degenerate rows of
Table~\ref{tab:five-shapes-taxonomy} are not observed in our
experiments and are not predicted to be common in standard
feature-learning pipelines. The first degenerate row would manifest
as ``descent only''; an apparently monotone descent visible
because the encoder begins supercritical (e.g., CIFAR-10 ResNet-18
features, where $\log(\beta/\betac) \approx +3.9$ at random
initialization). However, our CIFAR-10 trajectory in fact exhibits a
mild fold-back ($\sim 0.5$ log-unit drift) rather than strict
monotonicity, so we classify it as (ii) mild fold-back rather than a
distinct regime. The other two degenerate rows are not observed and
are left as open empirical questions.

\section{Why decoder-column matching fails for SAE
identity}\label{app:hungarian-fail}

In Sec.~\ref{sec:sae-lottery} we define per-atom identity via the
activation-pattern column $f_{:,k}\in\mathbb{R}^N$ on a fixed eval
set, then match identities across checkpoints by Hungarian assignment
on cosine similarity of activation vectors. A more naive choice
would be to match \emph{decoder columns} $D_{:,k}\in\mathbb{R}^d$
directly: ``two atoms have the same identity if their writeout
directions are aligned.'' This naive metric is uninformative in our
setting.

\paragraph{Empirical observation.} On the $K\!=\!2048$ SAE trained
on Pythia-160M layer~6 ($d\!=\!768$), the Hungarian matched cosine
between decoder columns of consecutive checkpoints is essentially
saturated at $1.000$ throughout training (we observe values in
$[0.984, 1.000]$ for every consecutive pair from initialization to
the converged endpoint). The matched cosine between the random
initialization and the converged decoder is $0.895$; a $0.10$
dynamic range. By contrast the activation-pattern matched cosine
(used in the main text) has range $0.13 \to 1.0$ in the top-$K$ SAE
and $0.67 \to 1.0$ in the soft-L1 SAE, with sharp lock-in at the
post-critical onset.

\paragraph{Why.} The decoder-column Hungarian saturation has two
sources. (i) Anthropic-style SAE training renormalizes decoder
columns to unit norm every $\sim$200 steps; the only motion in
decoder direction comes from gradient updates in between
renormalizations, which are small relative to data noise. (ii) For
$K \gg d$ overcomplete dictionaries, two sets of $K$ random unit
vectors in $\mathbb{R}^d$ admit a near-perfect Hungarian pairing
because each vector finds a close match among the redundant
candidates. The combination makes the naive decoder-Hungarian metric
saturate at $\sim$1.0 even for SAEs whose feature identities are in
fact unrelated at the activation level.

\paragraph{Resolution.} Feature identity for mechanistic
interpretability lives in \emph{which tokens an atom fires on}, not
in the direction the atom writes into reconstruction. The
activation-pattern substrate $f_{:,k}$ used in Sec.~\ref{sec:sae-lottery}
captures the former. Matching on activation vectors is invariant to
atom permutation and to scaling, and is independent of the
overcomplete-redundancy artifact described above.

\section{Probe robustness: $K_{\mathrm{probe}}$ sensitivity}\label{app:kprobe}

The label-free indicator $\beta(t)/\betac(t)$ has two distinct
sensitivities to the GMM probe's prototype count $K_{\mathrm{probe}}$:
the critical precision $\betac = 1/\lammax(\Cov(z))$ depends only on
the encoder's data covariance and is $K_{\mathrm{probe}}$-independent
by construction; the GMM's own learned precision $\beta(t)$ evolves
under the joint-detached protocol (Sec.~\ref{sec:arc:ssl}) and may
depend on $K_{\mathrm{probe}}$.

We sweep $K_{\mathrm{probe}} \in \{2, 5, 10, 20, 50\}$ on the canonical
grokking configuration ($p\!=\!97$, WD$=\!1.0$, seed $0$) with all
other hyperparameters fixed (Table~\ref{tab:kprobe}). The result
separates into encoder-controlled invariants and probe-controlled
magnitudes:

\begingroup\sloppy
\begin{enumerate}[leftmargin=*,itemsep=2pt,topsep=2pt]
\item \emph{The $\betac(t)$ trajectory is invariant in
$K_{\mathrm{probe}}$ to six decimal places}: at $t = 100$, $10^3$,
$10^4$ we read $\betac = 0.021306, 0.026080, 0.006242$ at every
$K_{\mathrm{probe}}$. This follows from joint-detached training: the
encoder's gradients never see the probe, so its trajectory and
$\Cov(z(t))$ are bit-identical across the five runs.
\item \emph{Act~1 crossing time is invariant in
$K_{\mathrm{probe}}$} at step~$40$ for every $K_{\mathrm{probe}}$.
The crossing happens early enough that $\beta(t)$ has not yet
diverged across probe sizes (at $t=100$, $\log\beta$ agrees to four
decimal places: $-1.6154$ at every $K_{\mathrm{probe}}$).
\item \emph{Act~3 grokking time is invariant in
$K_{\mathrm{probe}}$} at step~$8\,500$ for every $K_{\mathrm{probe}}$,
since $\mathrm{test\_acc}$ is a property of the encoder and the
encoder does not see the probe.
\item \emph{Post-critical $\beta(t)$ magnitude depends on
$K_{\mathrm{probe}}$.} By $t = 10^4$, $\log\beta$ has separated:
$-1.62$, $-1.32$, $-1.04$, $-0.84$, $-0.45$ for $K_{\mathrm{probe}} =
2, 5, 10, 20, 50$, respectively. Consequently $\log(\beta/\betac)$
at $t = 10^4$ ranges from $+3.46$ ($K\!=\!2$) to $+4.62$ ($K\!=\!50$),
a spread of $1.17$ log-units. Larger $K_{\mathrm{probe}}$ gives the
GMM more capacity to fit the post-critical broken-symmetry geometry,
so $\beta$ saturates at a larger value.
\end{enumerate}
\endgroup

\begin{table}[h]
\centering
\small
\caption{$K_{\mathrm{probe}}$ sweep on canonical grokking ($p\!=\!97$,
WD$=\!1.0$, seed $0$). \emph{Encoder-side} quantities ($\betac$, Act~1
crossing step, Act~3 grok step) are $K_{\mathrm{probe}}$-invariant.
\emph{Probe-side} quantities ($\log\beta(t)$, $\log(\beta/\betac)$ at
late times) depend on $K_{\mathrm{probe}}$: more prototypes gives
larger post-critical $\beta$.}
\label{tab:kprobe}
\begin{adjustbox}{max width=\textwidth}
\begin{tabular}{rcccccc}
\toprule
$\boldsymbol{K_{\mathrm{probe}}}$ & \textbf{Act~1 step}
& \textbf{Act~3 step}
& $\boldsymbol{\betac(t\!=\!10^4)}$
& $\boldsymbol{\log\beta(t\!=\!10^4)}$
& $\boldsymbol{\log(\beta/\betac)(t\!=\!10^4)}$ \\
\midrule
2  & 40 & 8500 & 0.0062 & $-1.621$ & $+3.456$ \\
5  & 40 & 8500 & 0.0062 & $-1.325$ & $+3.752$ \\
10 & 40 & 8500 & 0.0062 & $-1.036$ & $+4.041$ \\
20 & 40 & 8500 & 0.0062 & $-0.837$ & $+4.240$ \\
50 & 40 & 8500 & 0.0062 & $-0.454$ & $+4.623$ \\
\bottomrule
\end{tabular}
\end{adjustbox}
\end{table}

\paragraph{Operational consequence.}
The framework's qualitative uses of the indicator
(Sec.~\ref{sec:arc:grokking}: ``is $\beta > \betac$?''
$\to$ Act 1 has happened; ``how long has $\beta$ stayed above $\betac$
without NC1 collapsing?'' $\to$ Act 2 plateau length) are
$K_{\mathrm{probe}}$-robust because they are based on
encoder-controlled events. \emph{Absolute magnitudes} of
$\log(\beta/\betac)$ should not be compared across different
$K_{\mathrm{probe}}$ choices; within a single fixed $K_{\mathrm{probe}}$
choice (we use $K_{\mathrm{probe}} = 10$ throughout the paper, following
prior soft-$K$-means and deterministic-annealing conventions) the
magnitude is well-defined and tracks the encoder's post-critical
geometry.

\section{Supplementary SAE lottery diagnostics}\label{app:lottery-details}

This appendix provides the full evidence for the identity-lock
mechanism (Sec.~\ref{sec:sae-lottery:lock}) and the
architecture-dependent interpretability ceiling
(Sec.~\ref{sec:sae-lottery:ceiling}).

\begin{figure}[t]
\centering
\includegraphics[width=\linewidth]{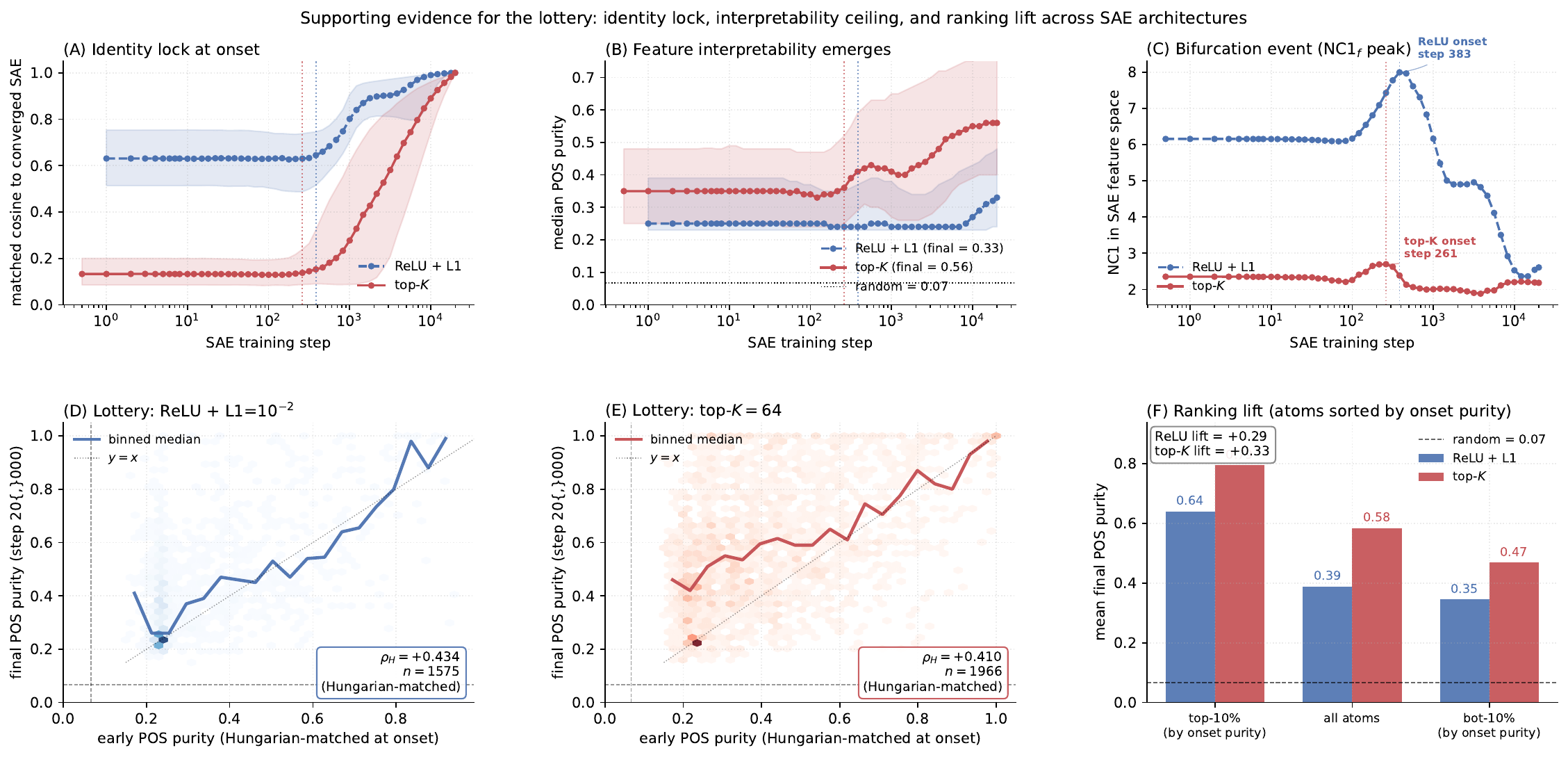}
\caption{\textbf{Identity lock and architecture-dependent
interpretability ceiling.}
\textbf{(A)} Hungarian-matched cosine of per-atom activation patterns
to the converged SAE. Both soft-L1 (dashed blue) and top-$K$ (solid
red) lock during the post-critical window beginning at the onset
(vertical dotted lines; NC1 peak in panel C). Top-$K$ has the wider
dynamic range ($0.13 \to 1.0$); soft-L1 starts from a $0.68$ baseline
because random ReLU rows already project onto the data subspace.
\textbf{(B)} Median POS purity of top-100-activation positions per
atom. Only top-$K$ reaches $0.56$ ($\approx 8\times$ random baseline
$0.067$); soft-L1 plateaus near $0.33$.
\textbf{(C)} NC1 in SAE feature space peaks at the post-critical
onset, marking the trigger of the lottery window.
\textbf{(D, E)} Hungarian-matched lottery scatter (note: $\rho_H$
shown here is inflated relative to the identity-matched signal
$\rho_{\mathrm{id}}$ used in Fig.~\ref{fig:lottery}; see
Sec.~\ref{sec:sae-lottery:claim}).
\textbf{(F)} Ranking lift: atoms ranked by step-$\sim$$1{,}000$ POS
purity show $+0.33$ lift in mean convergence POS purity (top decile
vs.\ bottom decile) in top-$K$; the top-decile mean of $0.80$ is
$12\times$ random.}
\label{fig:app:lottery}
\end{figure}

\paragraph{Identity lock (panel A).} In the top-$K$ SAE, where random
initialization gives genuinely random activation patterns, the
matched cosine to the converged decoder starts at $0.13$ (baseline
for random unit vectors in $N\!=\!50{,}000$-dim space) and remains
flat for the first $\sim 200$ training steps. At the post-critical
onset (NC1 peak, step $261$ for top-$K$, $383$ for soft-L1) it begins
a sharp rise, reaching $0.92$ by step $1{,}212$ and saturating at
$1.0$ by step $\sim 15{,}000$. This timing aligns with the
identity-matched POS-purity correlation
(Sec.~\ref{sec:sae-lottery:claim}): both signal that atoms commit to
specific activation patterns during the post-critical window. The
soft-L1 SAE shows identical \emph{timing} but a compressed dynamic
range, because random ReLU encoder rows already project onto the
data subspace.

\paragraph{Architecture ceiling (panel B).} In the soft-L1 family,
$L_0$ saturates near $1{,}000$ of $K\!=\!2048$ active atoms per token
irrespective of $\lambda \in [5{\times}10^{-4}, 10^{-2}]$ ($10\times$
range of penalty strength gives a $1.5\%$ change in $L_0$); the
features are insufficiently sparse for per-feature linguistic
selectivity to emerge, and median top-100 POS purity plateaus at
$0.33$. Under architectural top-$K$ sparsity, the median active
feature reaches POS purity $0.56$ ($8\times$ random) and median
top-token entropy drops by $0.79$ nats. The improvement is monotonic
in training step and aligned with the post-critical onset.

\paragraph{Frequency-stratified lottery
(Fig.~\ref{fig:app:freq-strat}).} To rule out a token-frequency
confound, we group atoms into five quintiles by the mean log-frequency
of their top-100 activating tokens at convergence and re-compute
$\rho_{\mathrm{id}}$ within each quintile (3 seeds, $K\!=\!2048$
top-$K$ SAE). Panel A: $\rho_{\mathrm{id}}$ per quintile, all in
$[+0.37, +0.51]$. Panel B: top-decile mean POS purity per quintile,
all $\ge 0.7$ (above the $\sim$$1/15 = 0.067$ uniform-random
baseline by $\ge 10\times$). The lottery effect persists at every
frequency stratum and is not concentrated in the most-frequent
quintile, which is what a single-token-lock artifact would predict.

\begin{figure}[H]
\centering
\includegraphics[width=\linewidth]{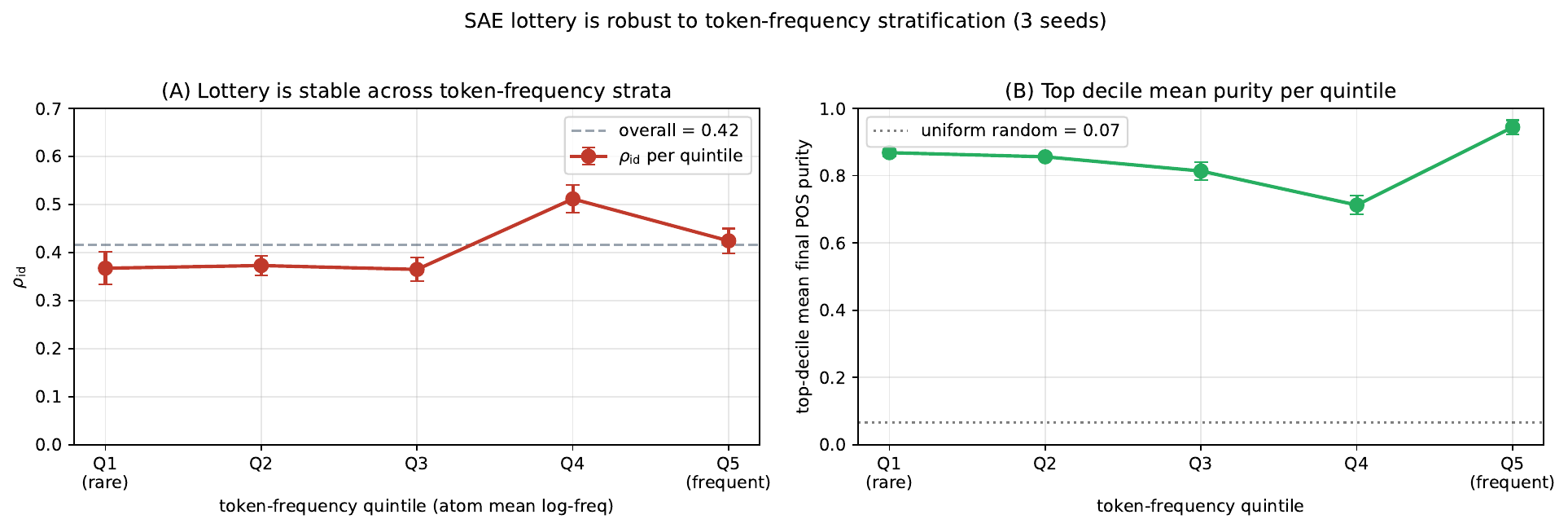}
\caption{\textbf{Lottery is stable across token-frequency strata.}
Atoms binned by quintile of their top-100-token mean log-frequency
(Q1 = rarest tokens, Q5 = most frequent). \textbf{(A)}
$\rho_{\mathrm{id}}$ per quintile, 3-seed mean $\pm$ std. All
quintiles produce $\rho_{\mathrm{id}} \ge 0.37$; the unstratified
overall is $+0.42$ (dashed). \textbf{(B)} Top-decile mean final POS
purity per quintile. Q4 (mid-frequency content words) has the highest
$\rho_{\mathrm{id}}$, not Q5 (function words). Frequency confound is
not load-bearing.}
\label{fig:app:freq-strat}
\end{figure}

\section{Hyperparameters and reproducibility}\label{app:hp}

All experiments use Adam-family optimizers with the joint-detached
GMM probe protocol described in Sec.~\ref{sec:arc:ssl}
($K_{\text{probe}}\!=\!10$, $\mathrm{lr}_\mu\!=\!5\!\times\!10^{-3}$,
$\mathrm{lr}_\beta\!=\!10^{-2}$, $\log\beta_0\!=\!-2.5$). Per-experiment
configurations are summarized in Table~\ref{tab:hp}.

\begin{table}[H]
\centering
\small
\caption{Per-experiment hyperparameters. ``--'' indicates the
default (Adam, no scheduling). Seeds are integers $\{0\}$ unless
otherwise noted. Code and full \texttt{args.json} files are released
alongside the paper.}
\label{tab:hp}
\setlength{\tabcolsep}{4pt}
\begin{adjustbox}{max width=\textwidth}
\begin{tabular}{lllllllr}
\toprule
\textbf{Exp} & \textbf{Method/data} & \textbf{Encoder} & \textbf{Optimizer / lr / WD} & \textbf{Batch} & \textbf{Epochs/Steps} & \textbf{Seeds} & \textbf{Sec.} \\
\midrule
00--05 & toy GMM, 2D                 & --             & Adam / $10^{-3}$ / 0    & full       & 5--10k steps  & 0--3   & C.1--5 \\
07--08 & SimCLR, MNIST              & 2-layer MLP     & Adam / $10^{-3}$ / 0    & 256        & 40--100 ep    & 0      & C.6 \\
10     & SimCLR, CIFAR-10           & ResNet-18       & Adam / $3{\times}10^{-4}$ / 0 & 512  & 300 ep        & 0, 2   & \ref{sec:arc:ssl} \\
11--12 & SimCLR + reverse, CIFAR-10 & ResNet-18       & Adam + ext.\ anneal     & 512        & 300+200 ep    & 0      & D \\
13--15 & GMM probe on LM            & Pythia-160M / nanoGPT & probe Adam / $10^{-2}$ & 64 & 10--150 ckpt & 0     & E \\
16     & DINO, CIFAR-10             & ResNet-18       & AdamW / $5{\times}10^{-4}$ / 0.04 & 256 & 50 ep   & 0      & \ref{sec:arc:ssl} \\
17     & Rotation, CIFAR-10         & ResNet-18       & Adam / $3{\times}10^{-4}$ / 0  & 512  & 300 ep  & 0      & \ref{sec:arc:ssl} \\
18     & SAE on Pythia L6           & SAE $K{=}2048$  & Adam / $3{\times}10^{-4}$ / $\lambda_1\!=\!5{\times}10^{-4}$ & 64$\!\times\!$512 & 150k steps & 0 & \ref{sec:arc:ssl} \\
19     & DINO collapse, CIFAR-10    & ResNet-18       & AdamW / $5{\times}10^{-4}$ / 0.04 & 256 & 50 ep    & 0      & \ref{sec:diag:fromscratch} \\
20     & SimCLR/DINO, CIFAR-100     & ResNet-18       & same as 10/16           & 256--512   & 300 ep        & 0      & \ref{sec:arc:ssl} \\
21     & DINO intervention, CIFAR-100 & ResNet-18      & AdamW / $5{\times}10^{-4}$ / 0.04 & 256 & 20 ep & 0     & \ref{sec:diag:intervention} \\
24     & Grokking, mod-$p$ addition  & 1-layer Tx     & AdamW / $10^{-3}$ / WD$\in\{0,0.1,0.2,0.3,0.5,0.7,1.0\}$ & full       & $\le 200$k steps & 0, 1, 2 & \ref{sec:arc:grokking} \\
\bottomrule
\end{tabular}
\end{adjustbox}
\end{table}



\end{document}

%% file: math_commands.tex
\usepackage{amsmath,amsfonts,bm}

\def\eqref#1{equation~\ref{#1}}

\def\1{\bm{1}}

\DeclareMathAlphabet{\mathsfit}{\encodingdefault}{\sfdefault}{m}{sl}
\SetMathAlphabet{\mathsfit}{bold}{\encodingdefault}{\sfdefault}{bx}{n}

\newcommand{\Cov}{\mathrm{Cov}}
